\documentclass[letterpaper]{article} 
\usepackage[preprint]{aaai2027}  
\usepackage[hyphens]{url}  
\usepackage{graphicx} 
\usepackage{natbib}  
\usepackage{caption} 
\usepackage{algorithm}
\usepackage{algpseudocode}

\usepackage{newfloat}
\usepackage{listings}
\DeclareCaptionStyle{ruled}{labelfont=normalfont,labelsep=colon,strut=off} 
\floatstyle{ruled}
\newfloat{listing}{tb}{lst}{}
\floatname{listing}{Listing}

\usepackage{booktabs}
\usepackage{array}
\usepackage{tabularx}

\renewcommand{\topfraction}{0.90}
\renewcommand{\dbltopfraction}{0.90}
\renewcommand{\textfraction}{0.08}
\renewcommand{\floatpagefraction}{0.80}
\renewcommand{\dblfloatpagefraction}{0.80}

\usepackage{amsmath} 
\usepackage{graphicx}
\usepackage{subcaption}
\usepackage{amssymb}
\usepackage{amsmath}
\usepackage[most]{tcolorbox}

\usepackage{amsmath,amssymb,amsthm}
\newtheorem{assumption}{Assumption}

\newtheorem{theorem}{Theorem}
\newtheorem{corollary}[theorem]{Corollary}
\newtheorem{proposition}[theorem]{Proposition}
\usepackage{pifont}

\title{Mechanisms of Width Scaling in Normalized Residual Networks: The Effective Alignment Dimension}
\author{
    Jinhao Zhang\textsuperscript{\rm 1},
    Zeyu Liu\textsuperscript{\rm 2},
    Zicheng Yan\textsuperscript{\rm 3},
    Yunquan Zhang\textsuperscript{\rm 2},\\
    Guangming Tan\textsuperscript{\rm 2},
    Fangming Liu\textsuperscript{\rm 4},
    Daning Cheng\textsuperscript{\rm 2}
}
\affiliations{
    \textsuperscript{\rm 1}Beijing University of Posts and Telecommunications\\
    \textsuperscript{\rm 2}Institute of Computing Technology, Chinese Academy of Sciences\\
    \textsuperscript{\rm 3}University of Science and Technology of China\\
    \textsuperscript{\rm 4}Huazhong University of Science and Technology
}

\begin{document}

\maketitle

\begin{abstract}
Existing theories of neural-network width characterize asymptotic limits, but provide limited guidance on whether an expansion direction identified from finite training data remains beneficial on unseen data. We study this problem for function-preserving residual expansion and introduce the effective alignment dimension, a measurable quantity describing the signal--noise geometry of activation gradients. By deriving the exact mean and variance of the inner product between independently estimated training and test gradients, we obtain a finite-sample upper bound on misalignment probability. The bound depends only on the effective alignment dimension and an effective sample size, requiring finite second moments and a nonzero population gradient, without covariance spectral assumptions or prescribed width-growth rates. We integrate this certificate into the train--test residual-expansion framework, yielding a high-probability condition for test-risk improvement. Experiments across width-controlled LLaMA-style Transformers, Pythia, and ResNet-20 show that wider models exhibit larger effective alignment dimensions and lower empirical misalignment. Direct residual interventions confirm that the alignment statistic predicts the sign and magnitude of held-out loss changes.
\end{abstract}

\section{Introduction}
Width is one of the most thoroughly studied dimensions of neural scaling. A large body of theory examines large-width limits and finite-width corrections through Gaussian processes, kernel dynamics, feature-learning limits, and width-dependent fluctuations. These theories do not, however, directly provide a finite-sample certificate for whether a local expansion direction identified in a specific trained model at a specific insertion point will transfer to held-out data. This question arises naturally in model growth, where progressive stacking, Net2Net-style expansion, and learned growth operators must determine whether a proposed insertion remains beneficial beyond the finite training set.

Recent work on normalized residual networks \cite{cheng2026qualitative} addresses this problem by decomposing depth growth into representational gain, optimization gain, and generalization transfer. It establishes a qualitative mechanism: a residual block inserted at a function-preserving zero-output initialization creates a first-order descent direction whenever its features are not orthogonal to the insertion-point error signal, while width improves the finite-sample observability of that direction. We study how reliably a training-identified direction remains favorable on independent test data.

The remaining gap is quantitative finite-sample observability. The prior width-dependent alignment result requires control of the covariance spectrum, total variance, and growth of the mean gradient signal. These assumptions yield a clean asymptotic width rate, but not an instance-specific certificate for a given model and training--test pair. Moreover, the separate worst-case conditions do not expose the relative signal--noise geometry determining finite-sample directional stability.

We retain the activation-space formulation of the prior framework. For each example, the sample-level signal is the loss gradient with respect to the hidden representation at the insertion point. Averaging over the training and test sets produces two empirical activation-gradient directions whose alignment measures whether the training-identified direction remains favorable on independent test data. Positive alignment supports the same local residual direction on both samples, whereas nonpositive alignment marks a failure of finite-sample directional transfer. Through the function-preserving jumpboard construction, this event enters the comparison between the original and expanded models.

Rather than controlling training and test deviations separately, we analyze their alignment statistic directly. An exact second-moment characterization yields an instance-specific finite-sample certificate summarized by a dimensionless effective alignment dimension. Together with an effective sample size determined by the training and test set sizes, it controls the probability of nonpositive alignment, with a stronger certificate as their product increases.

The result requires only a finite second moment and a nonzero population mean for the sample-level activation gradient. It requires no covariance spectral bound, prescribed width-growth rate, or fourth-moment condition, and makes no architectural assumption on the inserted module or backbone. Under the covariance-spectrum, total-variance, and signal-growth conditions of prior work, it recovers the same-order width dependence. These assumptions thus become sufficient conditions for linear growth of the effective alignment dimension with width, rather than prerequisites for the guarantee. The dimension and its signal and noise quantities can be estimated from a trained model with a small number of forward--backward passes, without explicitly forming the covariance matrix.

We then integrate the certificate into the existing direct train--test expansion framework. Under the original assumptions ensuring an improving jumpboard model, realizability of the residual direction, subsequent optimization gain, and uniform generalization control, the resulting theorem replaces the previous alignment-control step with an instance-specific guarantee in terms of the training and test set sizes and the effective alignment dimension, while leaving the representational, optimization, and generalization components unchanged. The probability of successful directional transfer is therefore expressed directly through these quantities.

This analysis also clarifies the role of width. Width does not enter the certificate independently, but changes the activation-gradient distribution and hence the relative signal--noise geometry summarized by the effective alignment dimension. If the relative-noise structure does not deteriorate with width, increasing width cannot worsen the certificate; if the effective alignment dimension increases, the alignment-failure bound decreases. The benefit of width is therefore conditional rather than automatic and can be evaluated on the model at hand.

Experiments on width-controlled LLaMA-style Transformers, Pythia, and ResNet-20 support this account: $d_{\mathrm{align}}$ generally increases with width while empirical misalignment decreases under fixed sampling budgets. Varying the training and test sample sizes further supports the predicted finite-sample dependence. Direct residual interventions verify that positive alignment predicts the sign and local magnitude of realizable held-out loss changes.

\paragraph{Contributions.}
\ding{182} We derive an exact second-moment characterization of the alignment between training- and test-averaged activation gradients, yielding an instance-specific finite-sample certificate summarized by the effective alignment dimension.
\ding{183} We replace the previous width-dependent alignment step in the direct train--test expansion framework with this certificate, obtaining a joint high-probability guarantee in terms of training size, test size, and effective alignment dimension.
\ding{184} Across width-controlled Transformers and ResNets, we show that the effective alignment dimension generally increases with width alongside reduced empirical misalignment. Direct residual interventions further verify that the alignment statistic predicts both the sign and local magnitude of held-out loss changes.

\section{Related Work}

Theoretical studies of neural-network width primarily characterize
large-width limits and deviations around them. NNGP and NTK limits
yield Gaussian-process and kernel descriptions
\cite{neal1996bayesian,lee2018deep,jacot2018ntk}, while mean-field and
maximal-update parameterizations retain nontrivial feature learning
\cite{mei2018meanfield,chizat2018global,
yang2021tensorprogramsiv,yang2022tensorprogramsv}. Finite-width
analyses quantify width-dependent fluctuations of kernels and
predictions \cite{bordelon2023dynamics}, whereas neural scaling-law
theories describe average loss as data or model size grows
\cite{bahri2024explaining}. A separate literature studies coherent
gradients, parameter-gradient signal-to-noise ratios, and gradient
noise scale as diagnostics of optimization and generalization
\cite{chatterjee2020coherent,liu2020gsnr,
mccandlish2018empirical}, but does not analyze the two-sample
sign-reversal probability of an update selected for a newly inserted
module. Function-preserving model-growth methods, including Net2Net,
progressive stacking, bert2BERT, and LiGO, reuse trained models to
initialize larger ones
\cite{chen2016net2net,gong2019efficient,
chen2022bert2bert,wang2023ligo}; concurrent work emphasizes the
geometry of function preservation and post-insertion plasticity
\cite{lok2026gatezero,khemais2026exact}. Closest to our setting,
Cheng et al.~\cite{cheng2026qualitative} decompose residual expansion
into representational, optimization, and generalization components
and derive a width-dependent train/test activation-gradient alignment guarantee.

\section{Assumptions and Notation}
\label{sec:assumptions-notation}

We follow the activation-space notation and the direct
train--test comparison framework of
\citet{cheng2026qualitative}. To match the notation used in
our theoretical analysis and experiments, we denote the
activation-gradient signal at the insertion point by $q$.
This is the same activation-gradient object denoted by
$\zeta$ in the train--test alignment theorem of the prior
framework.

Throughout the paper, we fix a well-trained reference model $f_{\mathrm{old}}^{*}$, and $f_{\mathrm{old}} =
f_{\mathrm{top}}
\circ
f_{\mathrm{bot}}\in \mathcal H_{\mathrm{old}}$ and a candidate insertion point. For an input $x$, let $z(x)
:=
f_{\mathrm{bot}}(x)
\in
\mathbb R^{N}$
denote the hidden representation at that point, where $N$
is its width. Let $\mathcal H_{\mathrm{new}}$ denote the hypothesis class
obtained by inserting a residual function at the selected location.  For any predictor $f$ and loss function $\ell$, define the population, training, and
test risks by $R(f):=
\mathbb E_{(x,y)\sim\mathcal D}
\left[
\ell\bigl(f(x),y\bigr)
\right]$, $\mathcal L_{\mathrm{train}}(f)=
\frac{1}{M}
\sum_{i=1}^{M}
\ell\bigl(f(x_i),y_i\bigr)$,
$\mathcal L_{\mathrm{test}}(f)
:=
\frac{1}{K}
\sum_{j=1}^{K}
\ell\bigl(f(\widetilde x_j),\widetilde y_j\bigr).
$ 

\subsection{Basic Assumptions}
\label{subsec:basic-assumptions}

\begin{assumption}[Independent sampling and second moments]
\label{ass:alignment}

Let $ 
\mathcal S_{\mathrm{train}}
=
\left\{
(x_i,y_i)
\right\}_{i=1}^{M}$, $
\mathcal S_{\mathrm{test}}
=
\left\{
(\widetilde x_j,\widetilde y_j)
\right\}_{j=1}^{K}
$ be independent samples satisfying $
\mathcal S_{\mathrm{train}}
\overset{\mathrm{i.i.d.}}{\sim}
\mathcal D^{M}$, 
$\mathcal S_{\mathrm{test}}
\overset{\mathrm{i.i.d.}}{\sim}
\mathcal D^{K}$, 
$\mathcal S_{\mathrm{train}}
\perp
\mathcal S_{\mathrm{test}}$.

All alignment probabilities are understood conditional on
the fixed reference model $f_{\mathrm{old}}^{*}$ and the fixed
insertion point.

For a sample $(x,y)$, define the activation-gradient signal
at the insertion point by $
q(x,y)
:=
\left.
\nabla_z
\ell\bigl(f_{\mathrm{top}}(z),y\bigr)
\right|_{z=z(x)}
\in
\mathbb R^{N}$.

Define its population mean and covariance by $
\bar\mu
:=
\mathbb E_{(x,y)\sim\mathcal D}
\left[
q(x,y)
\right]$, 
$\Sigma
:=
\operatorname{Cov}_{(x,y)\sim\mathcal D}
\left(
q(x,y)
\right)$. We assume $
\mathbb E_{(x,y)\sim\mathcal D}
\left[
\left\|
q(x,y)
\right\|_2^2
\right]
<
\infty,
\bar\mu
\neq
0$.
\end{assumption}

Assumption~\ref{ass:alignment} is the only assumption
required for Theorem~\ref{thm:alignment}. In particular, the
finite-sample alignment result requires no covariance
spectral bound, fourth-moment condition, or prescribed
width-growth rate.

\begin{assumption}[Selection condition]
\label{ass:selection}

Let $
\widetilde f_S
\in
\mathcal H_{\mathrm{new}}$ denote the empirical jumpboard model constructed from
$\mathcal S_{\mathrm{train}}$ in the direct train--test
expansion framework, and let $
f_{\mathrm{new}}
\in
\mathcal H_{\mathrm{new}}$
denote the final expanded model.

We assume $\mathcal L_{\mathrm{train}}
\left(
f_{\mathrm{new}}
\right)
\leq
\mathcal L_{\mathrm{train}}
\left(
\widetilde f_S
\right)$. Equivalently, define the nonnegative optimization gain by $\Delta_{\mathrm{ERM}}
:=
\mathcal L_{\mathrm{train}}
\left(
\widetilde f_S
\right)
-
\mathcal L_{\mathrm{train}}
\left(
f_{\mathrm{new}}
\right)
\geq
0$.
\end{assumption}

\begin{assumption}[Uniform generalization control]
\label{ass:generalization}

Fix a confidence parameter $\delta\in(0,1)$. We assume that
there exist deterministic generalization radii
$\epsilon_M,\epsilon_K\geq0$ such that
$
\Pr
\left[
\sup_{f\in\mathcal H_{\mathrm{new}}}
\left|
R(f)
-
\mathcal L_{\mathrm{train}}(f)
\right|
\leq
\epsilon_M
\right]
\geq
1-\delta$
and
$\Pr
\left[
\sup_{f\in\mathcal H_{\mathrm{new}}}
\left|
R(f)
-
\mathcal L_{\mathrm{test}}(f)
\right|
\leq
\epsilon_K
\right]
\geq
1-\delta$.
\end{assumption}

The boundedness, Lipschitz, normalization, and norm-control
conditions used by \citet{cheng2026qualitative} to instantiate
$\epsilon_M$ and $\epsilon_K$ are restated in the appendix.
The main text uses only the two uniform inequalities in
Assumption~\ref{ass:generalization}.

For the direct train--test comparison, define the finite-test
jumpboard margin by $\Delta_R^{\mathrm{test}}
:=
\mathcal L_{\mathrm{test}}
\left(
f_{\mathrm{old}}^{*}
\right)
-
\mathcal L_{\mathrm{test}}
\left(
\widetilde f_S
\right)$.

Despite the subscript $R$ inherited from the prior notation,
$\Delta_R^{\mathrm{test}}$ denotes a finite-test empirical
margin rather than a population-risk difference.

The small-step and residual-realizability conditions used to
convert positive activation-gradient alignment into a
positive finite-test jumpboard margin are invoked directly
from \citet[Corollary~3]{cheng2026qualitative} and restated
in the appendix. They are not assumptions of
Theorem~\ref{thm:alignment} and are therefore not repeated
as global assumptions here.

\subsection{Key Alignment Quantities}
\label{subsec:key-quantities-certificates}
Define the empirical activation-gradient averages over the
training and test sets by $
\mu_M
:=
\frac{1}{M}
\sum_{i=1}^{M}
q(x_i,y_i)$, $g_K
:=
\frac{1}{K}
\sum_{j=1}^{K}
q(\widetilde x_j,\widetilde y_j)$. The finite-sample activation-gradient misalignment event is $\left\{
\mu_M^{\top}g_K
\leq
0
\right\}$. 

Define the directional effective dimension by $d_{\parallel}
:=
\frac{
\left\|
\bar\mu
\right\|_2^4
}{
\bar\mu^{\top}
\Sigma
\bar\mu
}$,
and the covariance-energy effective dimension by $
d_2
:=
\frac{
\left\|
\bar\mu
\right\|_2^4
}{
\operatorname{tr}(\Sigma^2)
}
=
\frac{
\left\|
\bar\mu
\right\|_2^4
}{
\left\|
\Sigma
\right\|_F^2
}$.

If a denominator in either definition is zero, the
corresponding effective dimension is defined as $+\infty$. The effective alignment dimension is
\begin{small}
\begin{equation}
    d_{\mathrm{align}}
:=
\min
\left\{
d_{\parallel},
d_2
\right\}
\end{equation}.
\end{small}

The exact finite-sample certificate uses $\Xi_{M,K}
:=
\frac{1}{M d_{\parallel}}
+
\frac{1}{K d_{\parallel}}
+
\frac{1}{MK d_2}$,
and $B_{\mathrm{exact}}(M,K)
:=
\frac{
\Xi_{M,K}
}{
1+\Xi_{M,K}
}$. 

Define the effective sample-size factor by
\begin{small}
\begin{equation}
n_{\mathrm{eff}}(M,K)
:=
\frac{
MK
}{
M+K+1
},
\end{equation}
\end{small}
and the simplified effective-alignment certificate by
\begin{small}
\begin{equation}
B_{\mathrm{align}}(M,K)
:=
\frac{
1
}{
1+
n_{\mathrm{eff}}(M,K)
d_{\mathrm{align}}
}.
\end{equation}
\end{small}
The two certificates satisfy
\begin{equation*}
B_{\mathrm{exact}}(M,K)
\leq
B_{\mathrm{align}}(M,K)
<
1.
\end{equation*}

When dependence on the insertion width must be made
explicit, we write
\begin{equation*}
\bar\mu^{(N)},
\qquad
\Sigma^{(N)},
\qquad
d_{\parallel}(N),
\qquad
d_2(N),
\qquad
d_{\mathrm{align}}(N).
\end{equation*}
At a fixed insertion width, the width argument is omitted.

Theorem~\ref{thm:alignment} uses only
Assumption~\ref{ass:alignment}. The core risk comparison in
Theorem~\ref{thm:direct-transfer} uses
Assumptions~\ref{ass:selection} and
\ref{ass:generalization}. Its additional positive-margin
statement combines Theorem~\ref{thm:alignment} with the
conditions of \citet[Corollary~3]{cheng2026qualitative},
which are restated in the appendix.

\section{Main Results}
\label{sec:main-results}

\subsection{Main Theorems}
\label{subsec:main-theorems}

Our first result controls the probability that the activation-gradient
direction estimated from the training set fails to remain positively
aligned on the independent test set. In addition to the probability
bound, we state the exact mean and variance identities that generate
the two effective dimensions $d_{\parallel}$ and $d_{2}$.

\begin{theorem}[Finite-sample activation-gradient alignment]
\label{thm:alignment}

Under Assumption~\ref{ass:alignment}, the train--test alignment
statistic satisfies $\mathbb E
\left[
\mu_M^{\top}g_K
\right]
=
\left\|
\bar\mu
\right\|_2^2 $
and
$
\operatorname{Var}
\left(
\mu_M^{\top}g_K
\right)
=
\left(
\frac{1}{M}
+
\frac{1}{K}
\right)
\bar\mu^{\top}
\Sigma
\bar\mu
+
\frac{1}{MK}
\operatorname{tr}
\left(
\Sigma^2
\right).
$

Consequently,
\begin{equation*}
\Pr
\left(
\mu_M^{\top}g_K
\leq
0
\right)
\leq
B_{\mathrm{exact}}(M,K)
=
\frac{
\Xi_{M,K}
}{
1+\Xi_{M,K}
},
\end{equation*}

Moreover, the one-scalar certificate satisfies
\begin{align*}
\Pr
\left(
\mu_M^{\top}g_K
\leq
0
\right)
\leq
B_{\mathrm{exact}}(M,K)\\
\leq
B_{\mathrm{align}}(M,K)
=
\frac{
1
}{
1+
n_{\mathrm{eff}}(M,K)
d_{\mathrm{align}}
}
\end{align*}
\end{theorem}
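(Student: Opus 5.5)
Set $T:=\mu_M^{\top}g_K$. Write $\mu_M=\bar\mu+a$ and $g_K=\bar\mu+b$, where $a:=\frac1M\sum_i(q_i-\bar\mu)$ and $b:=\frac1K\sum_j(\widetilde q_j-\bar\mu)$. By Assumption~\ref{ass:alignment}, $a$ and $b$ are independent and mean zero, with $\operatorname{Cov}(a)=\Sigma/M$ and $\operatorname{Cov}(b)=\Sigma/K$. The finite second moment guarantees that all the expectations below exist. Then
\[
T=\|\bar\mu\|_2^2+\bar\mu^{\top}a+\bar\mu^{\top}b+a^{\top}b .
\]
Independence and zero means give $\mathbb E[a^{\top}b]=\mathbb E[a]^{\top}\mathbb E[b]=0$, hence $\mathbb E[T]=\|\bar\mu\|_2^2$.

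\textbf{Variance.} The variance is the second moment of $U:=\bar\mu^{\top}a+\bar\mu^{\top}b+a^{\top}b$. The diagonal terms are $\mathbb E[(\bar\mu^{\top}a)^2]=\bar\mu^{\top}\Sigma\bar\mu/M$ and $\mathbb E[(\bar\mu^{\top}b)^2]=\bar\mu^{\top}\Sigma\bar\mu/K$. For the bilinear term, condition on $a$:
\[
\mathbb E[(a^{\top}b)^2]=\mathbb E\bigl[a^{\top}(\Sigma/K)a\bigr]=\operatorname{tr}(\Sigma^2)/(MK).
\]
This needs only second moments of each factor separately, so no fourth-moment condition is used. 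The cross terms all vanish. First, $\mathbb E[\bar\mu^{\top}a\,\bar\mu^{\top}b]=0$ by independence. Second, $\mathbb E[\bar\mu^{\top}a\;a^{\top}b]=\mathbb E[\bar\mu^{\top}a\,a^{\top}]\,\mathbb E[b]=0$, and symmetrically for $b$. The integrability of $\bar\mu^{\top}a\,a^{\top}b$ follows from Cauchy--Schwarz, since both factors are square integrable. This yields the stated variance $V$.

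\textbf{Tail bound.} Apply the one-sided Chebyshev (Cantelli) inequality to $T$ with mean $m=\|\bar\mu\|_2^2>0$:
\[
\Pr(T\le 0)=\Pr(T-m\le -m)\le \frac{V}{V+m^2}.
\]
Dividing numerator and denominator by $m^2=\|\bar\mu\|_2^4$ gives $V/m^2=\Xi_{M,K}$ exactly, from the definitions of $d_\parallel$ and $d_2$, and hence the bound $B_{\mathrm{exact}}$. If $V=0$, then $T=m>0$ almost surely, consistent with the $+\infty$ conventions.

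\textbf{Comparison with $B_{\mathrm{align}}$.} Since $x\mapsto x/(1+x)$ is increasing, it suffices to show $\Xi_{M,K}\le 1/(n_{\mathrm{eff}}d_{\mathrm{align}})$. Using $d_\parallel,d_2\ge d_{\mathrm{align}}$,
\[
\Xi_{M,K}\le \frac{1}{d_{\mathrm{align}}}\Bigl(\frac1M+\frac1K+\frac1{MK}\Bigr)=\frac{M+K+1}{MK\,d_{\mathrm{align}}}.
\]
The right-hand side equals $1/(n_{\mathrm{eff}}d_{\mathrm{align}})$, and $x/(1+x)$ evaluated at $x=1/y$ equals $1/(1+y)$. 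Strict inequality $B_{\mathrm{align}}<1$ holds because $n_{\mathrm{eff}}d_{\mathrm{align}}>0$, which requires $\bar\mu\neq0$.

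\textbf{Main obstacle.} The only step needing care is justifying the vanishing of the cubic cross terms and the finiteness of $\mathbb E[(a^{\top}b)^2]$ under only second moments. I would handle this by conditioning on one sample and using independence, as above, rather than expanding into fourth moments.
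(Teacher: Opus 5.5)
Your proof is correct and follows the paper's argument essentially step for step: the same centered decomposition $\mu_M=\bar\mu+a$, $g_K=\bar\mu+b$, the same vanishing cross terms, Cantelli's inequality with $V/m^2=\Xi_{M,K}$, and the same bound $\Xi_{M,K}\le 1/(n_{\mathrm{eff}}d_{\mathrm{align}})$ combined with the monotonicity of $x\mapsto x/(1+x)$. The differences are cosmetic: you evaluate $\mathbb E[(a^{\top}b)^2]$ by conditioning on $a$ rather than through the paper's trace identity, and you spell out the integrability and degenerate-case ($V=0$, $B_{\mathrm{align}}<1$) details that the paper leaves implicit.
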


The exact certificate retains the two relative-noise contributions
separately. The simplified certificate replaces them by their
conservative minimum, the effective alignment dimension. The theorem
requires neither a covariance spectral bound nor a prescribed
width-growth law: those conditions are unnecessary for the
fixed-width finite-sample guarantee.

The next result combines the new alignment certificate with the
direct train--test comparison framework. Its first part is the
algebraic comparison between the original model, the empirical
jumpboard, and the final expanded model. Its second part replaces the
previous width-dependent alignment bound by
Theorem~\ref{thm:alignment}.

\begin{theorem}[Direct train--test expansion with
effective-alignment control]
\label{thm:direct-transfer}

Under Assumptions~\ref{ass:selection} and
\ref{ass:generalization}, with probability at least $
1-2\delta$,
the final expanded model satisfies
\begin{equation*}
\mathcal L_{\mathrm{test}}
\left(
f_{\mathrm{new}}
\right)
\leq
\mathcal L_{\mathrm{test}}
\left(
f_{\mathrm{old}}^{*}
\right)
-
\Delta_R^{\mathrm{test}}
-
\Delta_{\mathrm{ERM}}
+
2
\left(
\epsilon_M+\epsilon_K
\right).
\end{equation*}

Consequently, on every realization for which
\begin{equation*}
\Delta_R^{\mathrm{test}}
+
\Delta_{\mathrm{ERM}}
>
2
\left(
\epsilon_M+\epsilon_K
\right),
\end{equation*}
one has the strict test-risk improvement
\begin{equation*}
\mathcal L_{\mathrm{test}}
\left(
f_{\mathrm{new}}
\right)
<
\mathcal L_{\mathrm{test}}
\left(
f_{\mathrm{old}}^{*}
\right).
\end{equation*}

Suppose additionally that the empirical jumpboard is chosen in the
small-step regime and satisfies the residual-realizability conditions
of \citet[Corollary~3]{cheng2026qualitative}. Then positive
activation-gradient alignment implies a positive finite-test
jumpboard margin. Hence, by Theorem~\ref{thm:alignment},
\begin{equation*}
\Pr
\left(
\Delta_R^{\mathrm{test}}
>
0
\right)
\geq
1-
B_{\mathrm{exact}}(M,K)
\geq
1-
B_{\mathrm{align}}(M,K).
\end{equation*}

The two uniform-generalization events and the positive-margin event
therefore hold simultaneously with probability at least $ 1
-
2\delta
-
B_{\mathrm{exact}}(M,K)$, 
and using the simplified certificate, with probability at least
$
1
-
2\delta
-
B_{\mathrm{align}}(M,K).
$
\end{theorem}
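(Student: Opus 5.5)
The statement to prove is Theorem~\ref{thm:direct-transfer}. It has three parts: an algebraic risk chain, the strict-improvement consequence, and a probability statement obtained from Theorem~\ref{thm:alignment} by a union bound.

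\textbf{First part.} Fix the intersection $E$ of the two uniform-generalization events in Assumption~\ref{ass:generalization}. Each fails with probability at most $\delta$, so $\Pr(E)\ge 1-2\delta$. On $E$ the following hold for every $f\in\mathcal H_{\mathrm{new}}$:
\begin{itemize}
\item $\mathcal L_{\mathrm{test}}(f)\le R(f)+\epsilon_K$;
\item $R(f)\le \mathcal L_{\mathrm{train}}(f)+\epsilon_M$;
\item $\mathcal L_{\mathrm{train}}(f)\le R(f)+\epsilon_M$;
\item $R(f)\le \mathcal L_{\mathrm{test}}(f)+\epsilon_K$.
\end{itemize}
Apply these to $f_{\mathrm{new}}$ and $\widetilde f_S$, both of which lie in $\mathcal H_{\mathrm{new}}$. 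By definition of $\Delta_{\mathrm{ERM}}$ (Assumption~\ref{ass:selection}),
\[
\mathcal L_{\mathrm{test}}(f_{\mathrm{new}})\le \mathcal L_{\mathrm{train}}(f_{\mathrm{new}})+\epsilon_M+\epsilon_K
=\mathcal L_{\mathrm{train}}(\widetilde f_S)-\Delta_{\mathrm{ERM}}+\epsilon_M+\epsilon_K .
\]
Transferring $\widetilde f_S$ back from train to test gives
\[
\mathcal L_{\mathrm{train}}(\widetilde f_S)\le \mathcal L_{\mathrm{test}}(\widetilde f_S)+\epsilon_M+\epsilon_K .
\]
Finally, the definition of the margin gives $\mathcal L_{\mathrm{test}}(\widetilde f_S)=\mathcal L_{\mathrm{test}}(f_{\mathrm{old}}^*)-\Delta_R^{\mathrm{test}}$. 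Chaining these three relations yields the stated inequality with slack $2(\epsilon_M+\epsilon_K)$.

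The one subtle point is whether $f_{\mathrm{old}}^*$ must lie in $\mathcal H_{\mathrm{new}}$. It need not: it enters only through the definition of $\Delta_R^{\mathrm{test}}$ as a finite-test identity, so no generalization bound is ever applied to it. The strict-improvement claim is then immediate. On $E$, whenever $\Delta_R^{\mathrm{test}}+\Delta_{\mathrm{ERM}}>2(\epsilon_M+\epsilon_K)$, the right-hand side is strictly below $\mathcal L_{\mathrm{test}}(f_{\mathrm{old}}^*)$.

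\textbf{Second part.} Under the small-step and residual-realizability conditions of \citet[Corollary~3]{cheng2026qualitative}, that corollary gives the event inclusion $\{\mu_M^\top g_K>0\}\subseteq\{\Delta_R^{\mathrm{test}}>0\}$. I would invoke it as a black box, checking only that the notational identification $q=\zeta$ from Section~\ref{sec:assumptions-notation} matches its hypotheses. Taking complements and applying Theorem~\ref{thm:alignment},
\[
\Pr(\Delta_R^{\mathrm{test}}\le 0)\le \Pr(\mu_M^\top g_K\le 0)\le B_{\mathrm{exact}}(M,K)\le B_{\mathrm{align}}(M,K).
\]
The last inequality is the ordering of the two certificates already stated in Theorem~\ref{thm:alignment}.

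\textbf{Joint event.} A union bound over the three failure events (train generalization, test generalization, nonpositive margin) gives joint success probability at least $1-2\delta-B_{\mathrm{exact}}(M,K)$. Since $B_{\mathrm{exact}}\le B_{\mathrm{align}}$, this in turn is at least $1-2\delta-B_{\mathrm{align}}(M,K)$. No independence between the events is needed.

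\textbf{Main obstacle.} The only delicate step is the appeal to Corollary~3. The inserted-block realizability and small-step choice must be made so that the margin implication holds pointwise on each realization, with the step size chosen from the training data. If the corollary delivers the implication only up to a second-order remainder, I would absorb that remainder into its small-step condition rather than into the probability bound.
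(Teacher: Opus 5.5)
Your treatment of the risk chain, the strict-improvement consequence, and the final union bound matches the paper's proof step for step. You use the same intersection of the two uniform-generalization events and the same four inequalities, applied only to $f_{\mathrm{new}}$ and $\widetilde f_S$. As you note, $f_{\mathrm{old}}^*$ enters only through the definition of $\Delta_R^{\mathrm{test}}$, and no independence is needed anywhere.

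The one divergence is the alignment-to-margin bridge, which is exactly your ``main obstacle.''

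\begin{itemize}
\item \textbf{What the paper does instead.} It does not take Corollary~3 as giving the pointwise inclusion $\{\mu_M^\top g_K>0\}\subseteq\{\Delta_R^{\mathrm{test}}>0\}$ for the fixed jumpboard. It uses the realizability identity $\frac1K\sum_j\widetilde q_j^\top u_{S_{\mathrm{train}}}(\widetilde z_j)=-\mu_M^\top g_K$, realized by a zero-initialized output bias moved along $b=-\eta\mu_M$. (Identifying $q$ with the prior framework's signal gives realizability only when the inserted module has such a bias or constant channel.) Differentiability then gives $\Delta_R^{\mathrm{test}}(\eta)=\eta\,\mu_M^\top g_K+o(\eta)$. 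From this the paper concludes only $\{\mu_M^\top g_K>0\}\subseteq\mathcal J_{M,K}:=\{\exists\,\eta>0:\Delta_R^{\mathrm{test}}(\eta)>0\}$, and applies Theorem~\ref{thm:alignment} and the union bound to $\mathcal J_{M,K}$.

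\item \textbf{Why your fallback fails.} You propose absorbing the remainder into a small-step condition, with the step chosen from training data. This does not work, because the admissible step depends on the test sample through the size of $\mu_M^\top g_K$ itself, not just through the $o(\eta)$ term. For example, suppose $z\mapsto\ell(f_{\mathrm{top}}(z),y)$ is quadratic with Hessian $\beta I$. Then exactly $\Delta_R^{\mathrm{test}}(\eta)=\eta\,\mu_M^\top g_K-\tfrac{\beta}{2}\eta^2\|\mu_M\|_2^2$. For any $\eta>0$ determined by $\mathcal S_{\mathrm{train}}$, this is negative on realizations with $0<\mu_M^\top g_K<\tfrac{\beta}{2}\eta\|\mu_M\|_2^2$.

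\item \textbf{Two sound options.} Either treat the fixed-jumpboard implication as a hypothesis, as the statement's wording permits and as your black-box reading does. Or follow the paper and state the probability bound for the existential small-step event, which is what the first-order argument actually supports.
\end{itemize}
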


The first part of Theorem~\ref{thm:direct-transfer} does not require
the alignment event and remains valid even when the finite-test margin
is zero or negative. The alignment-supported part instead quantifies
how often this random margin is positive. Importantly,
Theorem~\ref{thm:alignment} controls the sign of the margin through
the inherited jumpboard bridge; it does not lower-bound the magnitude
of the margin. Strict improvement therefore continues to require the
realized margin and optimization gain to dominate the two
generalization radii.

\subsection{The Role of Width}
\label{subsec:role-of-width}

Theorem~\ref{thm:alignment} is a fixed-width finite-sample
result. To compare models of different widths, we make the
width dependence of the activation-gradient distribution and
the effective alignment dimension explicit. The simplified
certificate becomes $B_{\mathrm{align}}(M,K;N)$.

This factorization separates the effects of data and model
width. The effective sample-size factor depends only on the
training and test set sizes. Width instead changes the
distribution of the activation gradient, and hence its
population mean, covariance, and effective alignment
dimension. Width therefore does not improve the certificate
directly; it does so only when it improves the relative
signal--noise geometry summarized by
$d_{\mathrm{align}}(N)$.

A sufficient condition for a wider model to have a
non-worse simplified certificate is that neither component of
the effective alignment dimension deteriorates. In
particular, if $N_2>N_1$,  $d_{\parallel}(N_2)
\geq
d_{\parallel}(N_1)$,
$d_2(N_2)
\geq
d_2(N_1)$
$\Longrightarrow$ 
$B_{\mathrm{align}}(M,K;N_2)
\leq
B_{\mathrm{align}}(M,K;N_1).
$ This is a sufficient rather than necessary condition: the
exact certificate in Theorem~\ref{thm:alignment} retains the
two relative-noise contributions separately and may give a
finer comparison.

The stronger covariance and signal-growth conditions used
in the prior width-dependent analysis provide one sufficient
route to this behavior. As shown in the appendix, those
conditions imply, for sufficiently large $N$, $d_{\parallel}(N)
=
\Omega(N)$,
$d_2(N)
=
\Omega(N)$,
$d_{\mathrm{align}}(N)
=
\Omega(N)$.
Consequently, for a width-independent constant $c_0>0$,
\begin{equation*}
\begin{aligned}
\Pr\left(
\mu_M^{\top}g_K
\leq
0
\right)
&\leq
\frac{1}{
1+
c_0
n_{\mathrm{eff}}(M,K)
N
}
\\
&=
O\left(
\frac{1}{MN}
+
\frac{1}{KN}
+
\frac{1}{MKN}
\right).
\end{aligned}
\end{equation*}

Up to the lower-order mixed term, this recovers the
width dependence of the prior alignment result. The
difference is that these growth conditions are not required
by Theorem~\ref{thm:alignment}; they are only sufficient
conditions for linear growth of the effective alignment
dimension. More generally, the certificate improves at
whatever rate is supported by the observed behavior of
$d_{\mathrm{align}}(N)$. Thus, the benefit of width for
finite-sample directional transfer is conditional rather than
automatic and can be evaluated directly on the model at hand.

\begin{figure*}[!t]
    \centering
    \begin{subfigure}[t]{0.45\textwidth}
        \centering
        \includegraphics[width=\linewidth]{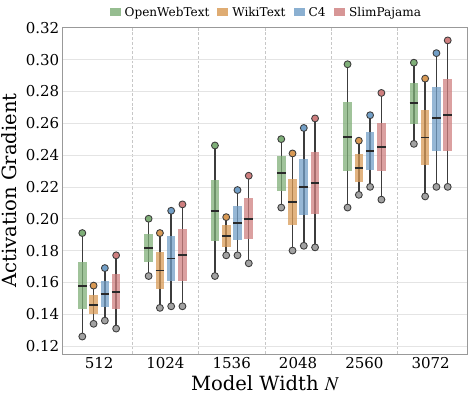}
        \caption{Activation-gradient statistic across model widths.}
        \label{fig:activation-gradient}
    \end{subfigure}
    \hfill
    \begin{subfigure}[t]{0.45\textwidth}
        \centering
        \includegraphics[width=\linewidth]{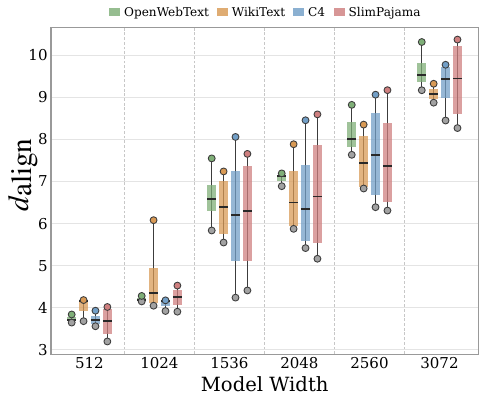}
        \caption{Effective alignment dimension across model widths.}
        \label{fig:d-align}
    \end{subfigure}
    \caption{Width-dependent activation-gradient geometry. Panel (a) reports the activation-gradient statistic $\widehat{S}=\|\widehat{\mu}\|_2^2$, whereas panel (b) reports $d_{\mathrm{align}}$ computed from the same activation-gradient samples. For each width, metric estimates are first aggregated across independently sampled evaluation subsets within each trained model and then summarized across eight training seeds.}
    \label{fig:gradient-alignment}
\end{figure*}

\section{Experiment}
Our experiments evaluate four questions: how activation-gradient geometry scales with width, whether empirical misalignment follows the predicted width and sample-size dependence, whether the trends transfer across Pythia and ResNet, and whether positive alignment predicts the loss change of a realizable residual intervention.

\subsection{Experiment Configurations}
\paragraph{Model.}We consider three experimental settings: our pretrained 32-layer LLaMA-style Transformer models \cite{Touvron2023LLaMAOA}, the Pythia model suite, and independently trained ResNet models. For the LLaMA-style family, we pretrain six models with hidden widths $N\in\{512,1024,1536,2048,2560,3072\}$ while keeping the depth, training data, token budget, and optimization settings fixed. For each LLaMA-style and ResNet configuration, we train eight independent models using different random seeds. We further evaluate Pythia checkpoints of different scales.

\paragraph{Metric Evaluation.}
For LLaMA-style and Pythia models, the activation gradient $q$ is measured at the final Transformer block, whereas for ResNet-20 it is measured at the first residual block. Given evaluation samples $\{q_r\}_{r=1}^{R_{\mathrm{eval}}}$, we estimate $\widehat{\mu}=R_{\mathrm{eval}}^{-1}\sum_{r=1}^{R_{\mathrm{eval}}}q_r$ and report the activation-gradient statistic $\widehat{S}=\|\widehat{\mu}\|_2^2$. We compute $d_{\mathrm{align}}$ and the empirical misalignment probability from the same activation-gradient samples, with the measurement location fixed across widths within each family. Metrics are estimated using multiple data-sampling seeds on C4, WikiText-2, OpenWebText, and SlimPajama for language models, and on CIFAR-10 and CIFAR-100 for ResNet-20. For independently trained models, data-sampling variability is first aggregated within each checkpoint and then summarized across training seeds. Full architectural configurations, training protocols, dataset preprocessing, and metric-estimation details are provided in the appendix.

\subsection{Width Scaling of Activation-Gradient Geometry}
\label{sec:width-scaling}
We first examine how activation-gradient geometry varies with model width in the controlled LLaMA-style family. We consider hidden widths $N\in\{512,1024,1536,2048,2560,3072\}$ while holding the model depth, training corpus, token budget, and optimization protocol fixed. Figure~\ref{fig:gradient-alignment} reports the activation-gradient statistic $\widehat{S}$ and the effective alignment dimension $d_{\mathrm{align}}$, both computed from the same activation-gradient samples, on C4, WikiText-2, OpenWebText, and SlimPajama. Detailed metric-estimation procedures and pseudocode are provided in the appendix.

As shown in Figure~\ref{fig:activation-gradient}, the activation-gradient statistic exhibits a clear width-dependent trend. Although its absolute scale varies across datasets and individual runs, the distributions generally shift upward as $N$ increases. This pattern persists across independently trained models and independently sampled evaluation subsets, indicating that the observed trend is robust to both optimization randomness and finite-sample estimation variability.

Figure~\ref{fig:d-align} shows a corresponding increase in the effective alignment dimension computed from the activation-gradient signal \(q\). Across the four datasets, $d_{\mathrm{align}}$ grows from approximately $3$--$4$ at the smallest widths to approximately $9$--$10$ at the largest width. Although adjacent widths exhibit overlapping distributions and moderate local fluctuations, the aggregated trend remains consistently increasing over the investigated width range.
Taken together, Figure~\ref{fig:gradient-alignment} provides empirical support for the central width-scaling hypothesis
\begin{tcolorbox}[
    colback=blue!4,
    colframe=blue!45!black,
    boxrule=0.6pt,
    arc=2pt,
    left=5pt,
    right=5pt,
    top=2pt,
    bottom=2pt,
    before skip=6pt,
    after skip=6pt
]
\begin{equation*}
\boldsymbol{
N \uparrow
\quad\Longrightarrow\quad
d_{\mathrm{align}}(N) \uparrow
}
\end{equation*}
\end{tcolorbox}

\begin{figure*}[!t]
    \centering
    \begin{subfigure}[t]{0.245\textwidth}
        \centering
        \includegraphics[width=\linewidth]{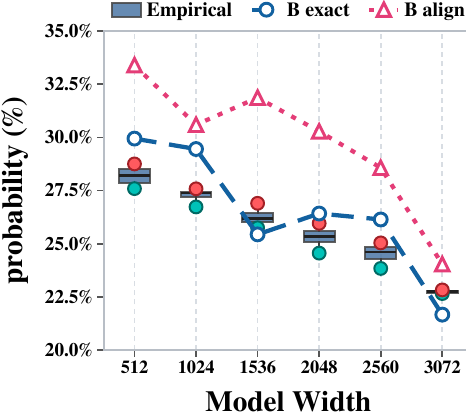}
        \caption{$M=K=1$.}
        \label{fig:p-mk1}
    \end{subfigure}
    \hfill
    \begin{subfigure}[t]{0.245\textwidth}
        \centering
        \includegraphics[width=\linewidth]{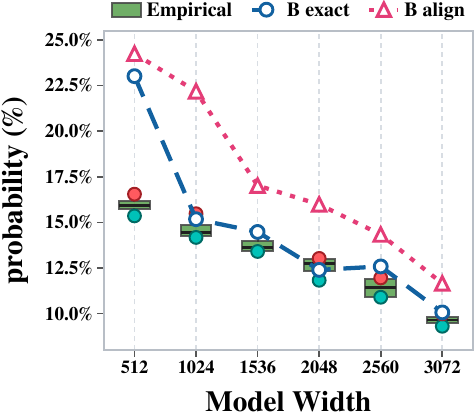}
        \caption{$M=K=2$.}
        \label{fig:p-mk2}
    \end{subfigure}
    \hfill
    \begin{subfigure}[t]{0.245\textwidth}
        \centering
        \includegraphics[width=\linewidth]{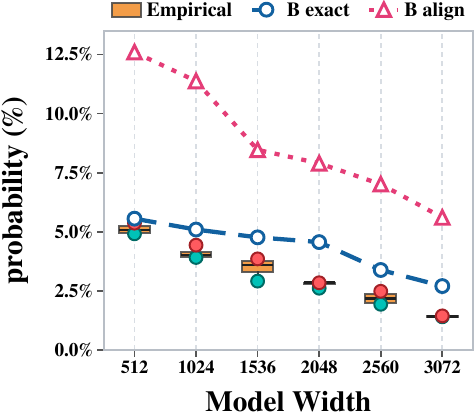}
        \caption{$M=K=4$.}
        \label{fig:p-mk4}
    \end{subfigure}
    \hfill
    \begin{subfigure}[t]{0.245\textwidth}
        \centering
        \includegraphics[width=\linewidth]{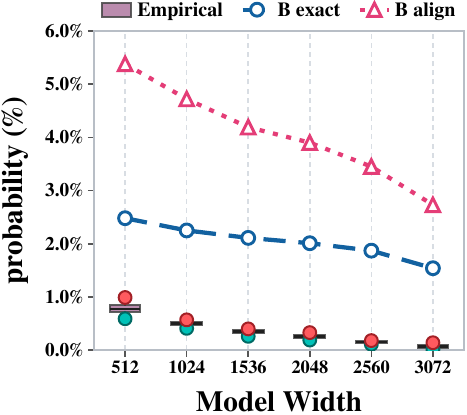}
        \caption{$M=K=8$.}
        \label{fig:p-mk8}
    \end{subfigure}

    \caption{Empirical gradient-misalignment probabilities and theoretical certificates across model widths and finite-sample budgets. Each sampled sequence contains $2048$ tokens, and each empirical probability is estimated using $R=500{,}000$ independent Monte Carlo trials. The exact certificate $B_{\mathrm{exact}}$ and simplified certificate $B_{\mathrm{align}}$ are computed from the corresponding activation-gradient moments. Results are summarized across eight independently trained LLaMA-style models and multiple independent data-sampling seeds.}
    \label{fig:misalignment-probability}
\end{figure*}

\subsection{Gradient-Misalignment Probability}
\label{sec:misalignment-probability}

We next directly test the theoretical prediction that finite-sample gradient misalignment becomes less likely as the effective alignment dimension increases. To isolate the effect of model width from variation in the evaluation distribution, we conduct this analysis exclusively on C4 using our trained LLaMA-style models. For each trained model and sampling budget $(M,K)$, we estimate the gradient-misalignment probability using the Monte Carlo estimator
\begin{equation}
\widehat{p}_{M,K}(N)
=
\frac{1}{R}
\sum_{r=1}^{R}
\mathbf{1}
\left\{
\widehat{\mu}_{M,r}^{\top}
\widehat{g}_{K,r}
\leq 0
\right\},
\label{eq:empirical-misalignment-probability}
\end{equation}
where $\widehat{\mu}_{M,r}$ and $\widehat{g}_{K,r}$ are computed from two independently sampled subsets containing $M$ and $K$ sequences, respectively, in Monte Carlo trial $r$. Each sampled sequence contains $2048$ tokens. Independently sampling the two subsets matches the independence condition in Theorem~1 and avoids introducing artificial correlation between the two finite-sample gradient estimates.

We evaluate $M=K\in\{1,2,4,8\}$ and perform $R=500{,}000$ independent Monte Carlo trials for each combination of model width, training seed, and sampling budget. We additionally report the exact second-moment certificate $B_{\mathrm{exact}}$ and the simplified effective-alignment certificate $B_{\mathrm{align}}$, computed from the estimated activation-gradient moments. In evaluations with $M=K>16$, the empirical misalignment probabilities are close to zero for nearly all investigated widths. We therefore omit these larger sampling budgets from the main figure to preserve visual resolution. Detailed sampling procedures, pseudocode, and Monte Carlo error analysis are provided in the appendix.

Figure~\ref{fig:misalignment-probability} shows that the empirical gradient-misalignment probability consistently decreases as model width increases, and that this trend holds for every fixed sampling budget. Because $M$ and $K$ remain constant within each panel, the observed reduction cannot be attributed to wider models receiving more evaluation data. Instead, wider models yield more statistically reliable finite-sample gradient directions under the same sampling budget.

Both theoretical certificates exhibit the same decreasing trend with width and remain conservative relative to the empirical probabilities. As expected from Theorem~1, $B_{\mathrm{exact}}$ is consistently tighter than $B_{\mathrm{align}}$, while the latter provides a simpler one-scalar characterization through $d_{\mathrm{align}}$. The agreement in trend between the empirical probabilities and the two certificates supports the predicted connection between activation-gradient geometry and finite-sample directional transfer.

For $M=K=1$, the empirical misalignment probability decreases from approximately $28.5\%$ at $N=512$ to approximately $22.7\%$ at $N=3072$. For $M=K=2$, it decreases from approximately $15.7\%$ to approximately $9.7\%$. At larger sampling budgets, the reduction becomes more pronounced relative to the initial probability: for $M=K=4$, the probability decreases from approximately $5.2\%$ to approximately $1.4\%$, whereas for $M=K=8$, it decreases from approximately $0.7\%$ to below $0.1\%$.

The results also demonstrate the complementary role of the finite-sample budget. At every fixed model width, increasing $M$ and $K$ substantially reduces the probability of estimating a misaligned direction. This reduction is particularly pronounced between $M=K=1$ and $M=K=8$, and the probability becomes nearly zero once $M=K$ exceeds $16$. Thus, model width and sample size improve directional reliability through distinct mechanisms: increasing width improves the underlying activation-gradient geometry, whereas increasing $M$ and $K$ reduces finite-sample estimation uncertainty.
Together with the width-dependent increase in $d_{\mathrm{align}}$ observed in Figure~\ref{fig:gradient-alignment}, these results provide empirical support for the mechanism
\begin{tcolorbox}[
    colback=blue!4,
    colframe=blue!45!black,
    boxrule=0.6pt,
    arc=2pt,
    left=5pt,
    right=5pt,
    top=1pt,
    bottom=2pt,
    before skip=6pt,
    after skip=6pt
]
\begin{equation*}
N \uparrow
\quad\Longrightarrow\quad
d_{\mathrm{align}}(N) \uparrow
\quad\Longrightarrow\quad
p_{M,K}(N) \downarrow 
\end{equation*}
\end{tcolorbox}

\subsection{Cross-Architecture and Cross-Modality Validation}
\label{sec:cross-architecture}
\subsubsection{Fixed-Depth Width Validation on Pythia}
\label{sec:pythia-validation}
We next evaluate two fixed-depth Pythia regimes to separate width from depth. The six-layer regime contains Pythia-14M, 31M, and 70M with widths $N\in\{128,256,512\}$ and ten training seeds per scale; the 32-layer regime contains Pythia-2.8B and 6.9B with widths $N\in\{2560,4096\}$. Both $\widehat{S}$ and $d_{\mathrm{align}}$ are computed from the same activation-gradient samples.

To disentangle the effect of width from that of depth, we separately analyze two fixed-depth Pythia regimes. The small-scale group consists of Pythia-14M, Pythia-31M, and Pythia-70M, all with six Transformer layers and hidden widths \(N \in \{128, 256, 512\}\). For each model scale, we evaluate ten independently trained runs corresponding to distinct training seeds, including the additional seed replicas released by PolyPythias~\cite{Wal2025PolyPythiasSA}. The large-scale group consists of Pythia-2.8B and Pythia-6.9B, both with 32 Transformer layers and hidden widths \(N \in \{2560, 4096\}\). Within each group, model depth is fixed while hidden width and the architectural dimensions conventionally coupled to width increase.

\begin{figure}[!t]
    \centering
    \begin{subfigure}[t]{0.495\linewidth}
        \centering
        \includegraphics[width=\linewidth]{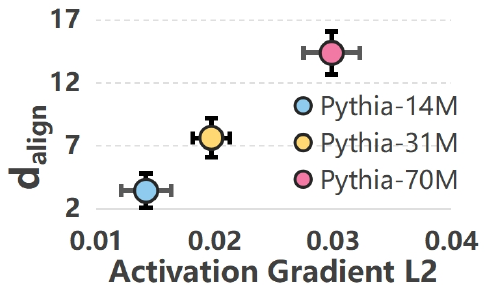}
        \caption {6-layer Pythia models.}
        \label{pythia-small}
    \end{subfigure}
    \hfill
    \begin{subfigure}[t]{0.495\linewidth}
        \centering
        \includegraphics[width=\linewidth]{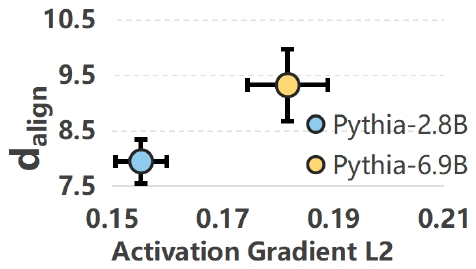}
        \caption {32-layer Pythia models.}
        \label{pythia-large}
    \end{subfigure}
    \caption{Controlled within-depth validation on Pythia. The six-layer group uses hidden widths \(N\in\{128,256,512\}\), with each configuration evaluated over ten independently trained runs. The 32-layer group uses hidden widths \(N\in\{2560,4096\}\), with one pretrained checkpoint evaluated at each width. Error bars summarize variability across training runs for the six-layer group and across independently sampled evaluation subsets for the 32-layer group.}
    \label{fig:pythia-validation}
\end{figure}

Figure~\ref{pythia-small} shows a clear monotonic progression within the six-layer regime. As the hidden width increases from $128$ to $512$, the activation-gradient statistic $\widehat S$ increases from approximately $0.014$ to $0.030$, while $d_{\mathrm{align}}$ increases from approximately $3$ to $14$. Thus, the three model configurations move consistently upward and to the right in the gradient-geometry plane, reproducing the width-dependent behavior observed in our controlled LLaMA-style models.

Figure~\ref{pythia-large} provides a complementary validation at substantially larger scale. With depth fixed at $32$ layers, increasing the hidden width from $2560$ in Pythia-2.8B to $4096$ in Pythia-6.9B increases both the activation-gradient statistic $\widehat{S}$ and $d_{\mathrm{align}}$.

The absolute values of $d_{\mathrm{align}}$ should not be compared directly across the two panels because the six-layer and 32-layer models differ in depth and overall architecture scale. The relevant result is the within-regime trend: at fixed depth, increasing hidden width consistently increases both $\widehat{S}$ and $d_{\mathrm{align}}$. This distinction prevents depth variation from being incorrectly attributed to width.

\subsubsection{Cross-Architecture Validation on ResNet}
\label{sec:resnet-validation}

We next test whether the observed width-dependent gradient geometry extends beyond autoregressive Transformers to convolutional vision models. We train ResNet-20 models~\cite{He2015DeepRL} from scratch on CIFAR-10 and CIFAR-100~\cite{Krizhevsky2009LearningML} with channel-width multipliers $\{0.5\times,1\times,2\times,4\times\}$. Within each dataset, network depth and all training conditions are held fixed, while only the channel dimensions and their associated width-dependent parameters are scaled. Each configuration is independently trained with eight random seeds, and the proposed metrics are evaluated using multiple data-sampling seeds. Metric estimates are first aggregated across data-sampling seeds within each trained model and are then summarized across the eight independent training seeds. Optimization hyperparameters and training schedules are provided in the appendix.

\begin{figure}[!t]
    \centering
    \includegraphics[width=\linewidth]{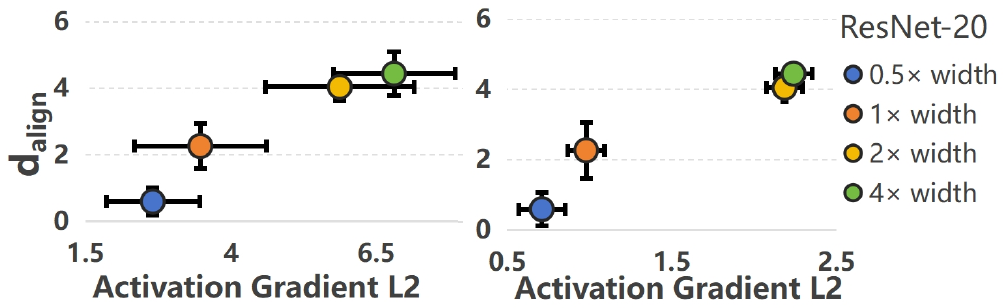}
    \caption{Controlled width validation on ResNet-20 for CIFAR-10 (left) and CIFAR-100 (right). The horizontal axis reports the activation-gradient statistic $\widehat{S}=\|\widehat{\mu}\|_2^2$, and the vertical axis reports $d_{\mathrm{align}}$. The channel-width multiplier is varied over $\{0.5\times,1\times,2\times,4\times\}$ while network depth and the training protocol are fixed. Error bars summarize variability across eight independently trained models after aggregation over multiple evaluation seeds.}
    \label{fig:resnet-validation}
\end{figure}
Figure~\ref{fig:resnet-validation} shows a consistent progression in activation-gradient geometry as channel width increases. On both CIFAR-10 and CIFAR-100, wider ResNet models move upward and to the right in the gradient-geometry plane, indicating increases in both the activation-gradient statistic $\widehat{S}$ and $d_{\mathrm{align}}$. This trend remains stable across independently trained models and evaluation samples despite differences in dataset complexity and metric scale.

Increasing the width multiplier from $0.5\times$ to $4\times$ produces substantial gains in both quantities on CIFAR-10, with the same qualitative trend observed on CIFAR-100. The smaller difference between the $2\times$ and $4\times$ settings indicates a weaker marginal increase over the investigated range.

\subsection{Direct Residual-Expansion Intervention}
\begin{table}[t]
\centering
\scriptsize
\setlength{\tabcolsep}{3.5pt}
\renewcommand{\arraystretch}{1.05}
\begin{tabular}{lccc}
\toprule
Model ($N$) & $\rho$ & Pred. $\Delta L$ & Obs. $\Delta L$ \\
\midrule
14M (128)
& $10^{-4}$ & $-0.307\pm0.002$  & $-0.305\pm0.005$ \\
& $10^{-3}$ & $-3.070\pm0.002$  & $-2.974\pm0.053$ \\
\midrule
31M (256)
& $10^{-4}$ & $-0.432\pm0.005$  & $-0.430\pm0.006$ \\
& $10^{-3}$ & $-4.316\pm0.003$  & $-4.171\pm0.063$ \\
\midrule
70M (512)
& $10^{-4}$ & $-1.058\pm0.002$  & $-1.057\pm0.013$ \\
& $10^{-3}$ & $-10.579\pm0.003$ & $-10.354\pm0.135$ \\
\bottomrule
\end{tabular}
\caption{Direct residual-intervention results. Observed loss changes are
reported as mean $\pm$ standard deviation across ten independently trained
checkpoints, after averaging $500$ trials within each checkpoint. All loss changes are in units of $10^{-3}$.}
\label{tab:residual-intervention}
\end{table}

We insert a zero-initialized token-shared residual parameter
$z_t(b)=z_t+b$ before the final normalization layer and evaluate
Pythia-14M, 31M, and 70M on C4 using ten independently trained
checkpoints per scale. For each checkpoint, we draw length-$2048$
sequences from two fixed disjoint pools of sizes $512$ and $256$,
with $M=K=8$ and $500$ trials. Results are averaged first within
each checkpoint and then across training seeds. The implementation
verifies $\nabla_b\ell=\sum_t\nabla_{z_t}\ell$ exactly, with zero
maximum relative $\ell_2$ error. Across all $15{,}000$ trials,
alignment is positive, the selected direction decreases held-out
loss, and its norm-matched reverse increases it.

Table~\ref{tab:residual-intervention} shows close agreement between
the predicted and observed loss changes across two representative
perturbation scales. At $\rho=10^{-4}$, the observed magnitudes retain
$99.3\%$--$99.9\%$ of their predicted values. Increasing $\rho$ to
$10^{-3}$ enlarges the loss reduction by approximately one order of
magnitude, while the observations retain $96.6\%$--$97.9\%$ of the
predicted magnitudes. The modest increase in prediction error at the
larger perturbation scale is consistent with higher-order terms becoming
more relevant away from the infinitesimal-step regime.

\section{Conclusion}
\label{sec:conclusion}
 
We studied when a local expansion direction identified from finite training data transfers to independent test data. Using the insertion-point activation gradient, we derived an exact finite-sample misalignment bound controlled by the effective sample size and $d_{\mathrm{align}}$. Width enters the certificate through the activation-gradient distribution: whenever $d_{\mathrm{align}}(N)$ increases with width, the misalignment bound decreases. Under the stronger covariance and signal-growth conditions of prior work, the certificate recovers the same-order width dependence. We incorporated this certificate into the direct train--test expansion framework. Experiments on width-controlled LLaMA-style Transformers, Pythia, and ResNet-20 show that $d_{\mathrm{align}}$ generally increases with width while empirical misalignment decreases under fixed sampling budgets. Direct residual interventions further confirm that positive alignment predicts the sign and local magnitude of held-out loss changes.

\bibliography{aaai2027}

\begingroup
\makeatletter
\@ifundefined{isChecklistMainFile}{
  \newif\ifreproStandalone
  \reproStandalonetrue
}{
  \newif\ifreproStandalone
  \reproStandalonefalse
}
\makeatother

\ifreproStandalone
\documentclass[letterpaper]{article}
\usepackage[submission]{aaai2027}
\setlength{\pdfpagewidth}{8.5in}
\setlength{\pdfpageheight}{11in}
\frenchspacing

\begin{document}
\fi
\setlength{\leftmargini}{20pt}
\makeatletter\def\@listi{\leftmargin\leftmargini \topsep .5em \parsep .5em \itemsep .5em}
\def\@listii{\leftmargin\leftmarginii \labelwidth\leftmarginii \advance\labelwidth-\labelsep \topsep .4em \parsep .4em \itemsep .4em}
\def\@listiii{\leftmargin\leftmarginiii \labelwidth\leftmarginiii \advance\labelwidth-\labelsep \topsep .4em \parsep .4em \itemsep .4em}\makeatother

\setcounter{secnumdepth}{0}
\renewcommand\thesubsection{\arabic{subsection}}
\renewcommand\labelenumi{\thesubsection.\arabic{enumi}}

\newcounter{checksubsection}
\newcounter{checkitem}[checksubsection]

\newcommand{\checksubsection}[1]{%
  \refstepcounter{checksubsection}%
  \paragraph{\arabic{checksubsection}. #1}%
  \setcounter{checkitem}{0}%
}

\newcommand{\checkitem}{%
  \refstepcounter{checkitem}%
  \item[\arabic{checksubsection}.\arabic{checkitem}.]%
}
\newcommand{\question}[2]{\normalcolor\checkitem #1 #2 \color{blue}}
\newcommand{\ifyespoints}[1]{\makebox[0pt][l]{\hspace{-15pt}\normalcolor #1}}

\section*{Reproducibility Checklist}

\vspace{1em}
\hrule
\vspace{1em}


\checksubsection{General Paper Structure}
\begin{itemize}

\question{Includes a conceptual outline and/or pseudocode description of AI methods introduced}{(yes/partial/no/NA)}
yes

\question{Clearly delineates statements that are opinions, hypothesis, and speculation from objective facts and results}{(yes/no)}
yes

\question{Provides well-marked pedagogical references for less-familiar readers to gain background necessary to replicate the paper}{(yes/no)}
yes

\end{itemize}
\checksubsection{Theoretical Contributions}
\begin{itemize}

\question{Does this paper make theoretical contributions?}{(yes/no)}
yes

	\ifyespoints{\vspace{1.2em}If yes, please address the following points:}
        \begin{itemize}
	
	\question{All assumptions and restrictions are stated clearly and formally}{(yes/partial/no)}
	yes

	\question{All novel claims are stated formally (e.g., in theorem statements)}{(yes/partial/no)}
	yes

	\question{Proofs of all novel claims are included}{(yes/partial/no)}
	yes

	\question{Proof sketches or intuitions are given for complex and/or novel results}{(yes/partial/no)}
	yes

	\question{Appropriate citations to theoretical tools used are given}{(yes/partial/no)}
	yes

	\question{All theoretical claims are demonstrated empirically to hold}{(yes/partial/no/NA)}
	partial

	\question{All experimental code used to eliminate or disprove claims is included}{(yes/no/NA)}
	no
	
	\end{itemize}
\end{itemize}

\checksubsection{Dataset Usage}
\begin{itemize}

\question{Does this paper rely on one or more datasets?}{(yes/no)}
yes

\ifyespoints{If yes, please address the following points:}
\begin{itemize}

	\question{A motivation is given for why the experiments are conducted on the selected datasets}{(yes/partial/no/NA)}
	partial

	\question{All novel datasets introduced in this paper are included in a data appendix}{(yes/partial/no/NA)}
	NA

	\question{All novel datasets introduced in this paper will be made publicly available upon publication of the paper with a license that allows free usage for research purposes}{(yes/partial/no/NA)}
	NA

	\question{All datasets drawn from the existing literature (potentially including authors' own previously published work) are accompanied by appropriate citations}{(yes/no/NA)}
	no

	\question{All datasets drawn from the existing literature (potentially including authors' own previously published work) are publicly available}{(yes/partial/no/NA)}
	yes

	\question{All datasets that are not publicly available are described in detail, with explanation why publicly available alternatives are not scientifically satisficing}{(yes/partial/no/NA)}
	NA

\end{itemize}
\end{itemize}

\checksubsection{Computational Experiments}
\begin{itemize}

\question{Does this paper include computational experiments?}{(yes/no)}
yes

\ifyespoints{If yes, please address the following points:}
\begin{itemize}

	\question{This paper states the number and range of values tried per (hyper-) parameter during development of the paper, along with the criterion used for selecting the final parameter setting}{(yes/partial/no/NA)}
	no

	\question{Any code required for pre-processing data is included in the appendix}{(yes/partial/no)}
	no

	\question{All source code required for conducting and analyzing the experiments is included in a code appendix}{(yes/partial/no)}
	no

	\question{All source code required for conducting and analyzing the experiments will be made publicly available upon publication of the paper with a license that allows free usage for research purposes}{(yes/partial/no)}
	no
        
	\question{All source code implementing new methods have comments detailing the implementation, with references to the paper where each step comes from}{(yes/partial/no)}
	no

	\question{If an algorithm depends on randomness, then the method used for setting seeds is described in a way sufficient to allow replication of results}{(yes/partial/no/NA)}
	partial

	\question{This paper specifies the computing infrastructure used for running experiments (hardware and software), including GPU/CPU models; amount of memory; operating system; names and versions of relevant software libraries and frameworks}{(yes/partial/no)}
	no

	\question{This paper formally describes evaluation metrics used and explains the motivation for choosing these metrics}{(yes/partial/no)}
	yes

	\question{This paper states the number of algorithm runs used to compute each reported result}{(yes/no)}
	yes

	\question{Analysis of experiments goes beyond single-dimensional summaries of performance (e.g., average; median) to include measures of variation, confidence, or other distributional information}{(yes/no)}
	yes

	\question{The significance of any improvement or decrease in performance is judged using appropriate statistical tests (e.g., Wilcoxon signed-rank)}{(yes/partial/no)}
	partial

	\question{This paper lists all final (hyper-)parameters used for each model/algorithm in the paper’s experiments}{(yes/partial/no/NA)}
	partial

\end{itemize}
\end{itemize}
\ifreproStandalone
\end{document}
\fi
\endgroup

\newcommand{\isAppendixMainFile}{}
\makeatletter
\@ifundefined{isAppendixMainFile}{
  \newif\ifappendixStandalone
  \appendixStandalonetrue
}{
  \newif\ifappendixStandalone
  \appendixStandalonefalse
}
\makeatother

\ifappendixStandalone
  \def\finishappendixdocument{\end{document}}
\else
  \let\finishappendixdocument\relax
\fi

\ifappendixStandalone
\documentclass[letterpaper]{article}

\usepackage{aaai2027}

\usepackage{amsmath}
\usepackage{amssymb}
\usepackage{amsthm}

\newtheorem{theorem}{Theorem}
\newtheorem{corollary}[theorem]{Corollary}
\newtheorem{proposition}[theorem]{Proposition}

\usepackage{array}
\usepackage{booktabs}
\usepackage{tabularx}

\usepackage{graphicx}
\usepackage{subcaption}
\usepackage{float}
\usepackage{placeins}

\usepackage{algorithm}
\usepackage{algpseudocode}

\usepackage{natbib}

\usepackage{microtype}
\usepackage{enumitem}

\usepackage[most]{tcolorbox}
\fi

\newcolumntype{Y}{>{\raggedright\arraybackslash}X}

\setcounter{topnumber}{3}
\setcounter{bottomnumber}{2}
\setcounter{totalnumber}{5}
\renewcommand{\topfraction}{0.90}
\renewcommand{\bottomfraction}{0.80}
\renewcommand{\textfraction}{0.08}
\renewcommand{\floatpagefraction}{0.82}
\renewcommand{\dbltopfraction}{0.90}
\renewcommand{\dblfloatpagefraction}{0.82}

\newcommand{\resultfigure}[2]{%
  \IfFileExists{#1}{%
    \includegraphics[pagebox=cropbox,width=#2]{#1}%
  }{%
    \fbox{\parbox[c][1.28in][c]{0.86\linewidth}{\centering\small
      Result figure placeholder\\[3pt]\texttt{\detokenize{#1}}}}%
  }%
}

\makeatletter
\setlength{\@fptop}{0pt}
\setlength{\@dblfptop}{0pt}

\ifappendixStandalone
\let\aaai@original@maketitle\@maketitle
\def\@maketitle{%
  \aaai@original@maketitle
  \begin{center}
    \refstepcounter{figure}%
    \includegraphics[width=\textwidth]{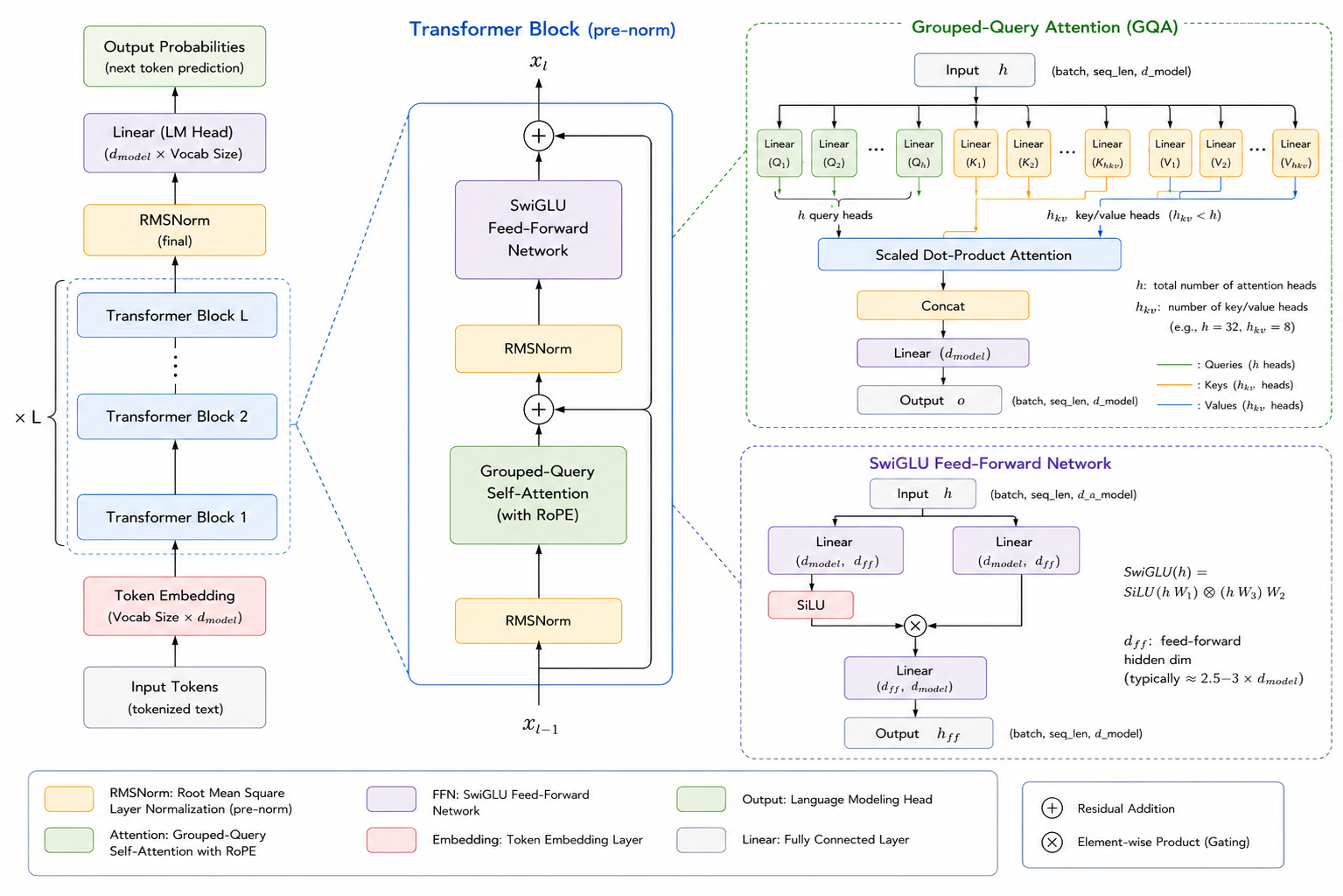}\par
    \vspace{2pt}
    \begin{minipage}{1.0\textwidth}
      \small
      \textbf{Figure~\thefigure:} Decoder-only Transformer architecture
      shared by all model configurations. Each pre-normalized block contains
      grouped-query self-attention with rotary positional embeddings, followed
      by a SwiGLU feed-forward network; residual connections surround both
      sublayers.
      \label{fig:model-architecture}
    \end{minipage}
  \end{center}
  \vspace{0.10in}
}
\fi
\makeatother

\ifappendixStandalone
\title{Supplementary Material}
\author{}
\affiliations{}

\begin{document}
\maketitle
\else
\twocolumn[{%
\begin{@twocolumnfalse}
  \section*{Supplementary Material}
  \begin{center}
    \refstepcounter{figure}%
    \includegraphics[width=0.95\textwidth]{architecture.png}\par
    \vspace{2pt}
    \begin{minipage}{0.94\textwidth}
      \small
      \textbf{Figure~\thefigure:} Decoder-only Transformer architecture
      shared by all model configurations. Each pre-normalized block contains
      grouped-query self-attention with rotary positional embeddings, followed
      by a SwiGLU feed-forward network; residual connections surround both
      sublayers.
      \label{fig:model-architecture}
    \end{minipage}
  \end{center}
\end{@twocolumnfalse}
}]
\fi

\appendix
\raggedbottom

\section{Model Architecture and Pretraining Details}
\label{app:model-training}

\subsection{Model Architecture}

All models use the same Llama-style decoder-only Transformer architecture. Each Transformer block adopts a pre-normalization structure consisting of RMSNorm, grouped-query self-attention with rotary positional embeddings (RoPE), a SwiGLU feed-forward network, and residual connections around both sublayers. A final RMSNorm and a linear language-modeling head map the hidden representations to next-token logits. We use the same tokenizer and vocabulary for all model configurations and do not apply supervised fine-tuning, RLHF, or any other post-training procedure.

Figure~\ref{fig:model-architecture} illustrates the shared model architecture and the internal structures of the grouped-query attention and SwiGLU modules.

\subsection{Model Configurations}

We construct six model configurations by varying the residual-stream width $N$ from $512$ to $3072$ while fixing the number of Transformer blocks to $L=32$. The attention head dimension is fixed at $128$. Accordingly, the number of query heads increases proportionally with $N$. We use a $4{:}1$ ratio between query heads and key--value heads, and set the SwiGLU intermediate dimension to $d_{\mathrm{ffn}}=3.5N$.

Table~\ref{tab:model-configurations} summarizes the resulting architectural
configurations and total parameter counts.

\begin{table}[H]
\centering
\scriptsize
\setlength{\tabcolsep}{3pt}
\renewcommand{\arraystretch}{1.08}
\resizebox{\columnwidth}{!}{%
\begin{tabular}{@{}lrrrrrr@{}}
\toprule
Model & $L$ & $N$ & Q heads & KV heads & $d_{\mathrm{ffn}}$ & Params. \\
\midrule
GPT-W512  & 32 & 512  & 4  & 1 & 1792  & 0.1524B \\
GPT-W1024 & 32 & 1024 & 8  & 2 & 3584  & 0.5229B \\
GPT-W1536 & 32 & 1536 & 12 & 3 & 5376  & 1.1114B \\
GPT-W2048 & 32 & 2048 & 16 & 4 & 7168  & 1.7450B \\
GPT-W2560 & 32 & 2560 & 20 & 5 & 8960  & 2.7265B \\
GPT-W3072 & 32 & 3072 & 24 & 6 & 10752 & 3.9261B \\
\bottomrule
\end{tabular}
}
\caption{Architectural configurations used in the width-scaling experiments.
All models contain 32 Transformer blocks, use an attention-head dimension of
128, and maintain a $4{:}1$ query-to-key/value head ratio.}
\label{tab:model-configurations}
\end{table}

The depth, tokenizer, vocabulary, context length, training-token budget, data mixture, and optimization protocol are held fixed across all configurations. The residual-stream width, number of attention heads, number of key--value heads, feed-forward dimension, and total parameter count therefore scale jointly with $N$. This construction isolates width-dependent architectural scaling while controlling the training protocol and data budget.

\subsection{Pretraining Data}

All models are pretrained on the same multi-source text mixture. The mixture contains general web text, educational web documents, deduplicated multi-domain text, WebText-style documents, encyclopedic content, technical question--answer data, and scientific articles. The approximate sampling proportions are provided in Table~\ref{tab:data-mixture}.
\begin{table}[H]
\centering
\small
\setlength{\tabcolsep}{8pt}
\renewcommand{\arraystretch}{1.10}
\begin{tabular}{@{}lr@{}}
\toprule
Data source & Proportion \\
\midrule
C4 & $35\%$ \\
FineWeb-Edu & $30\%$ \\
SlimPajama & $20\%$ \\
OpenWebText2 & $10\%$ \\
Wikipedia, StackExchange, and arXiv & $5\%$ \\
\midrule
Total & $100\%$ \\
\bottomrule
\end{tabular}
\caption{Pretraining data mixture shared by all model configurations.
Percentages denote approximate token-level sampling proportions.}
\label{tab:data-mixture}
\end{table}

\subsection{Pretraining Protocol}

All model configurations are pretrained from scratch under the same optimization protocol. We use AdamW with $(\beta_1,\beta_2)=(0.9,0.95)$, numerical stability constant $\epsilon=10^{-8}$, and weight decay $0.1$. Weight decay is excluded from embedding parameters, RMSNorm parameters, and bias terms. The learning rate is linearly warmed up for $2{,}000$ optimization steps and subsequently decayed according to a cosine schedule. Gradients are clipped to a maximum norm of $1.0$, and training is performed in BF16 precision without dropout.

All models use a context length of $2048$ tokens and are trained on a budget of $10$ billion tokens. The $10$B-token runs serve as the primary controlled experiments in this work; larger-token training budgets are left for subsequent scaling experiments.

\begin{table}[!t]
\centering
\scriptsize
\setlength{\tabcolsep}{4pt}
\renewcommand{\arraystretch}{0.98}
\begin{tabularx}{\columnwidth}{@{}>{\raggedright\arraybackslash}p{0.41\columnwidth}Y@{}}
\toprule
Hyperparameter & Setting \\
\midrule
\multicolumn{2}{@{}l}{\textit{Architecture}} \\
Architecture & Decoder-only Transformer \\
Normalization & RMSNorm, pre-normalization \\
Position encoding & RoPE \\
Attention & Grouped-query attention \\
Feed-forward network & SwiGLU \\
Tokenizer & Llama~3 tokenizer \\
Initialization & Normal, $\sigma=0.02$ \\
Optimizer & AdamW \\
Adam $(\beta_1,\beta_2)$ & $(0.9,0.95)$ \\
Adam $\epsilon$ & $10^{-8}$ \\
Weight decay & $0.1$ \\
No weight decay & Embeddings, RMSNorm, biases \\
Learning-rate schedule & Linear warmup + cosine decay \\
Warmup & $2{,}000$ steps \\
Peak learning rate & $3\times10^{-4}$ \\
Minimum learning rate & $1.5\times10^{-7}$ \\
Gradient clipping & $1.0$ \\
Precision & BF16 \\
Dropout & $0$ \\
Global batch size & $480$ sequences \\
Sequence length & $2048$ tokens \\
Training budget & $10$B tokens \\
Independent runs & $8$ per model configuration \\
Training seeds & $1234$, $520$, $314$, $315$, $3$, $1$, $5$, $31$ \\
\bottomrule
\end{tabularx}
\caption{Shared pretraining hyperparameters.}
\label{tab:pretraining-hyperparameters}
\end{table}

\subsection{Independent Training Runs}

For each model configuration, we perform eight independent pretraining runs using random seeds. Each run starts from an independently initialized parameter state. When controlled by the global random seed, the training-data order and other stochastic training operations are also independently randomized. This produces $6\times8=48$ independently trained models in total.

Unless otherwise stated, reported results are averaged over the eight independent training runs. Per-run results are retained to quantify the variability induced by parameter initialization and stochastic optimization.

\section{Training Dynamics and Optimization Stability}
\label{app:training-dynamics}

\raggedbottom

\subsection{Training Loss}

Training loss is computed from the token-level cross-entropy of each optimization batch. Figure~\ref{fig:training-loss-all-widths} shows the complete training-loss trajectories for all six model widths under the shared token budget and optimization schedule. Figure~\ref{fig:training-loss-small-widths} provides a single-column view of the three smaller configurations, making their relative convergence behavior easier to distinguish.

\begingroup
\setlength{\intextsep}{5pt}
\setlength{\abovecaptionskip}{3pt}
\setlength{\belowcaptionskip}{0pt}
\begin{figure}[H]
  \centering
  \resultfigure{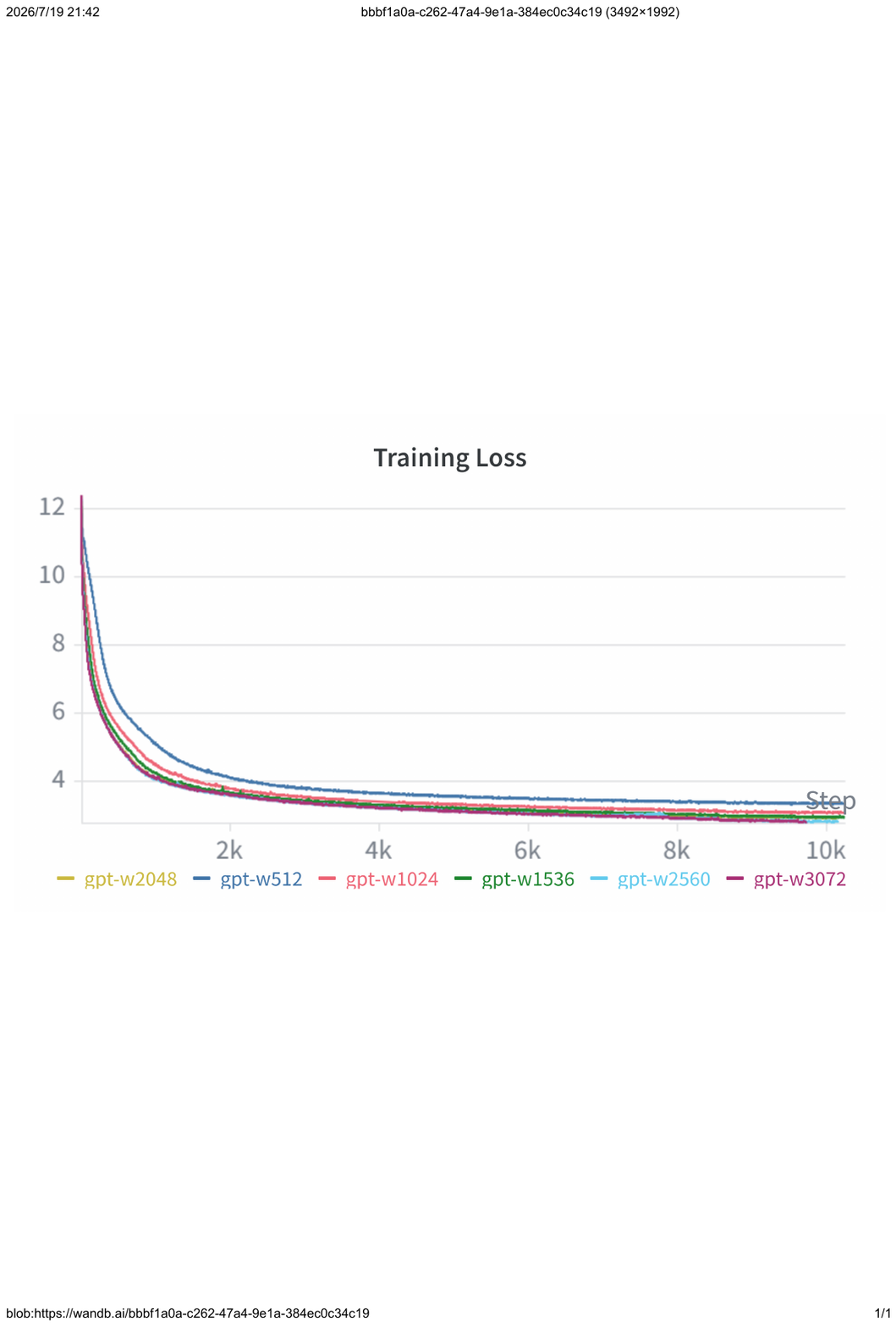}{\columnwidth}
  \caption{Complete training-loss trajectories across optimization steps for
  all six model widths.}
  \label{fig:training-loss-all-widths}
\end{figure}

\begin{figure}[H]
  \centering
  \resultfigure{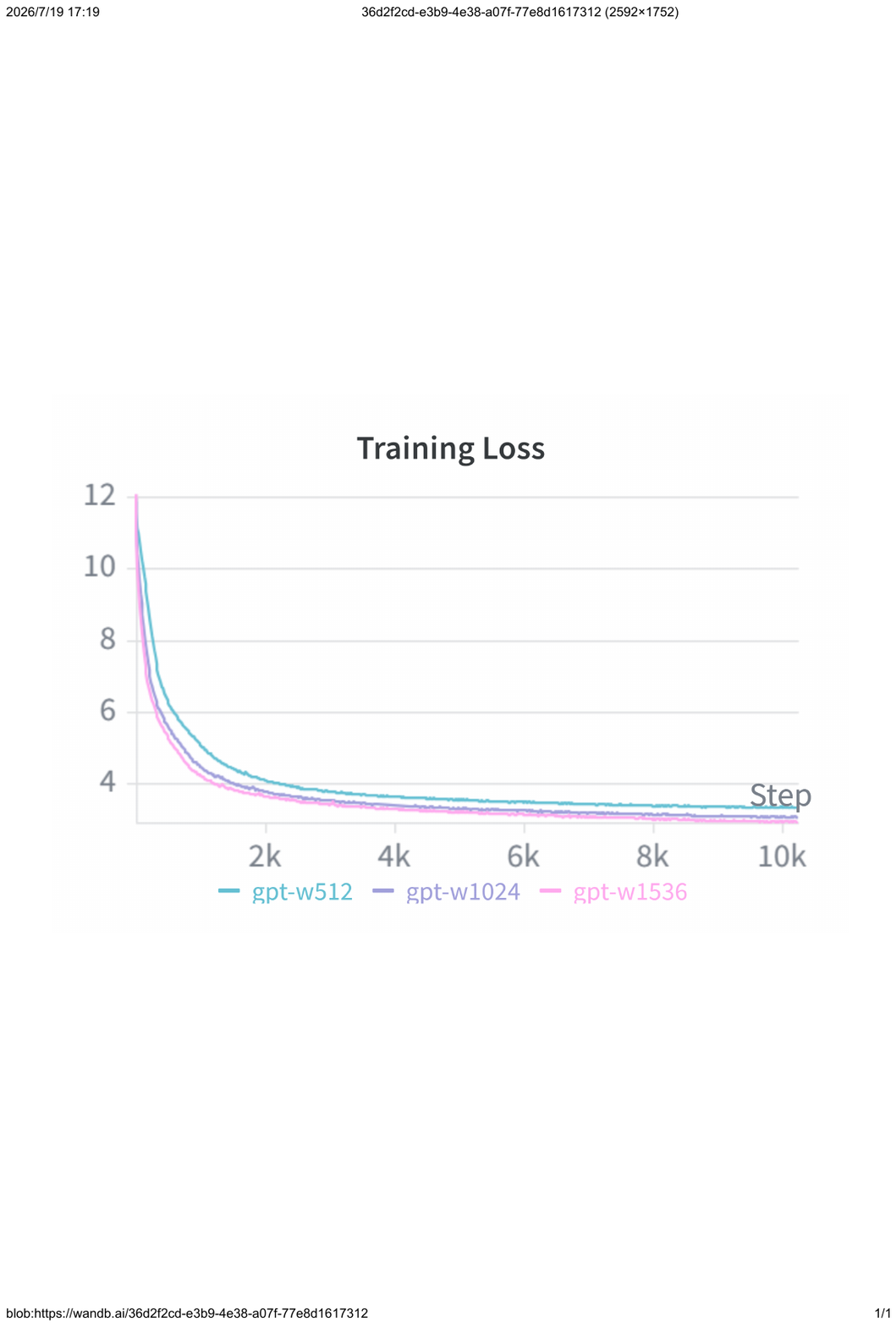}{\columnwidth}
  \caption{Training loss across optimization steps for the three smaller model
  widths: GPT-W512, GPT-W1024, and GPT-W1536.}
  \label{fig:training-loss-small-widths}
\end{figure}
\FloatBarrier
\endgroup

Across widths, the curves provide a direct view of optimization progress under the controlled training protocol. Short-timescale fluctuations are expected from minibatch sampling and do not by themselves indicate instability; the relevant diagnostic is the longer-term convergence trend.

\subsection{Validation Loss and Late-Training Perplexity}

Validation loss is evaluated on the fixed held-out split at regular intervals using the same tokenization, causal masking, and token-level normalization as in training. Figure~\ref{fig:validation-loss-small-widths} shows the validation loss trajectories for the three smaller model configurations.

To summarize validation behavior near the end of training, we compute perplexity as
\begin{equation}
  \operatorname{PPL}_{\mathrm{val}}
  = \exp\!\left(\mathcal{L}_{\mathrm{val}}\right),
\end{equation}
where $\mathcal{L}_{\mathrm{val}}$ is the mean token-level negative log-likelihood. Figure~\ref{fig:late-training-validation-ppl} reports validation perplexity over the late-training checkpoints for all six widths. Because perplexity is an exponential transformation of validation loss, small loss differences can appear more pronounced on the perplexity scale; all comparisons therefore use an identical evaluation corpus and loss normalization.

\begin{figure}[!t]
  \centering
  \resultfigure{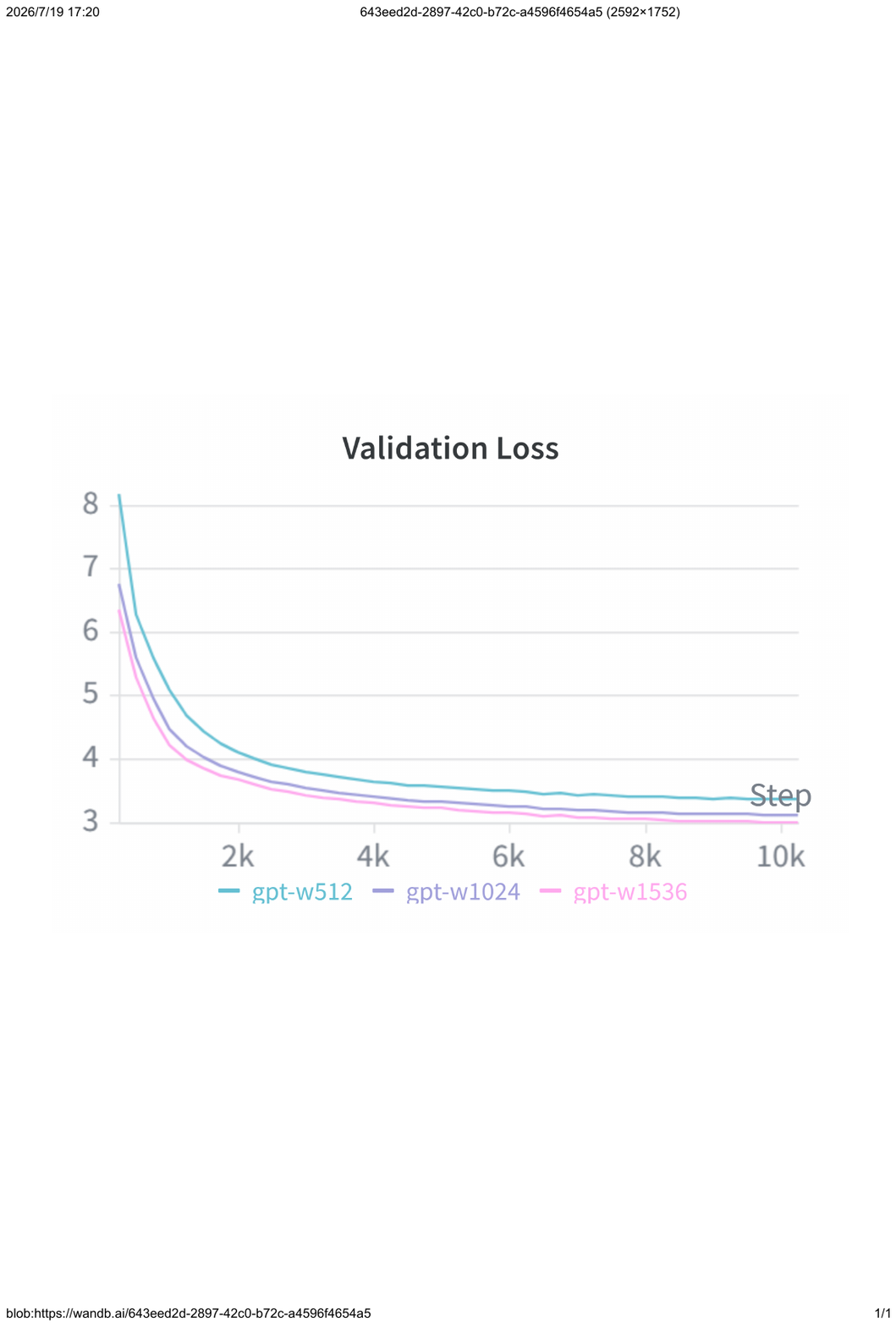}{\columnwidth}
  \caption{Validation loss across optimization steps for GPT-W512, GPT-W1024,
  and GPT-W1536.}
  \label{fig:validation-loss-small-widths}
\end{figure}

\begin{figure}[!t]
  \centering
  \resultfigure{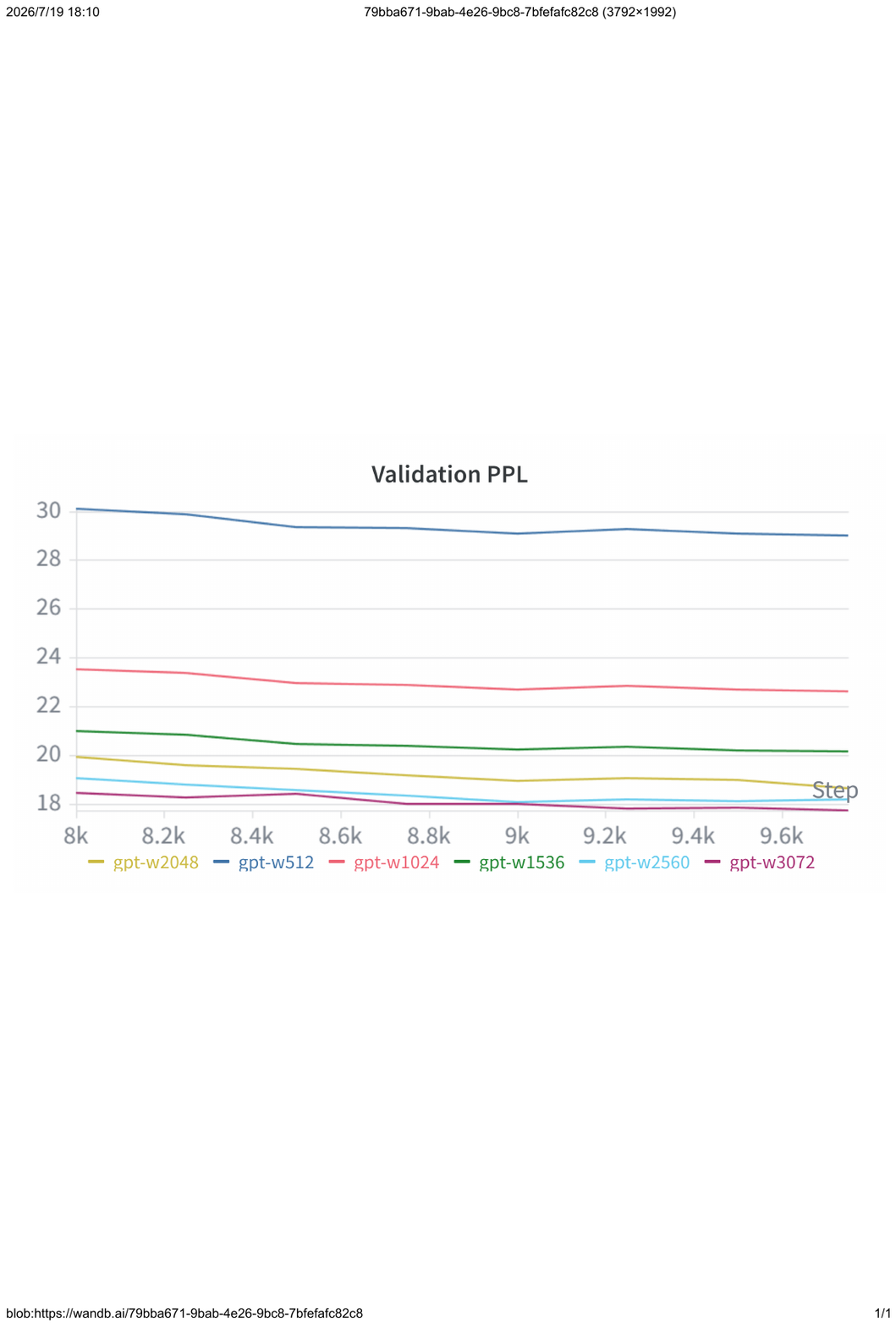}{\columnwidth}
  \caption{Validation perplexity during the final portion of training for all six model widths.}
  \label{fig:late-training-validation-ppl}
\end{figure}

\begin{figure*}[!t]
  \centering
  \resultfigure{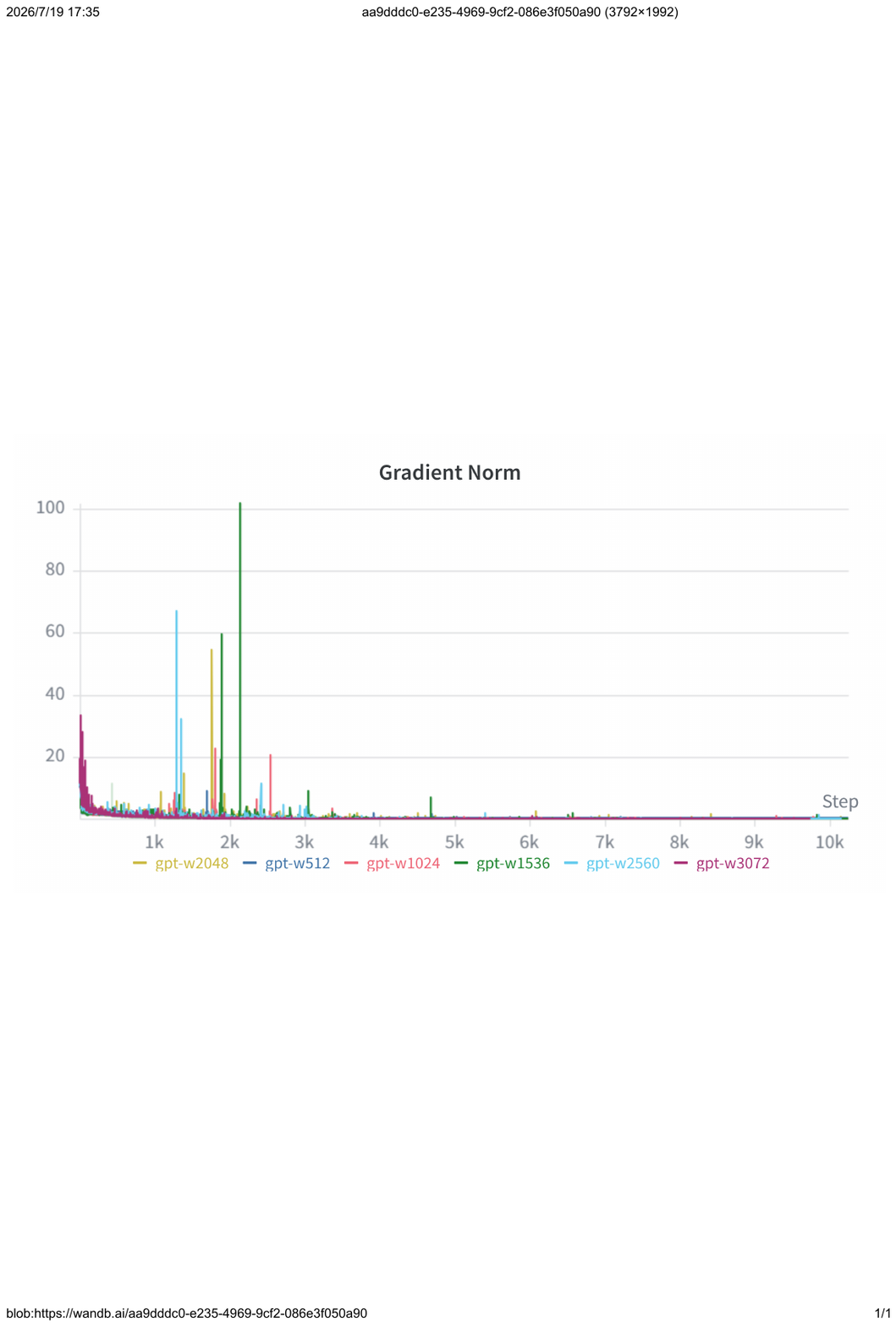}{\textwidth}
  \caption{Global gradient $\ell_2$ norm before clipping across optimization steps. Curves show means over eight independent runs, shaded regions show one standard deviation across seeds, and the horizontal dashed line marks the clipping threshold of $1.0$.}
  \label{fig:gradient-norm}
\end{figure*}

\subsection{Gradient Norm and Training Stability}

We track the global $\ell_2$ norm of the gradients before clipping throughout pretraining. For parameters $\theta$ and step-$t$ gradient $g_t$, the reported
quantity is
\begin{equation}
  \lVert g_t\rVert_2
  = \left(\sum_{p\in\theta}\lVert g_{t,p}\rVert_2^2\right)^{1/2}.
\end{equation}
The norm is measured before applying the maximum-norm threshold of $1.0$ so that the diagnostic retains information about gradient spikes. Values above the threshold indicate steps on which clipping rescales the update; they do not indicate that the post-clipping update norm exceeds the threshold.

Figure~\ref{fig:gradient-norm} shows the gradient-norm trajectories for all model widths. We assess stability using the overall scale and trend of the norms, the frequency and duration of excursions above the clipping threshold, and agreement across seeds. Isolated spikes can arise from unusually difficult minibatches, while persistent growth, repeated extreme excursions, non-finite values, or a simultaneous increase in validation loss would be evidence of optimization instability. Reporting the pre-clipping norm together with the loss curves makes it possible to distinguish benign clipped spikes from instability that materially affects convergence.

\subsection{Pythia Model Configurations}
\label{app:pythia-configurations}

We evaluate five models from the Pythia family. The small-scale regime contains Pythia-14M, Pythia-31M, and Pythia-70M, all of which use six Transformer layers and hidden widths \(N\in\{128,256,512\}\). The large-scale regime contains Pythia-2.8B and Pythia-6.9B, both of which use 32 Transformer layers and hidden widths \(N\in\{2560,4096\}\). This grouping supports controlled within-depth comparisons of activation-gradient geometry as hidden width increases. The architectural configurations are summarized in Table~\ref{tab:pythia-configurations}.

\begin{table}[H]
\centering
\scriptsize
\setlength{\tabcolsep}{2pt}
\renewcommand{\arraystretch}{1.15}
\resizebox{\columnwidth}{!}{%
\begin{tabular}{lcccccccc}
\toprule
Model
& Layers
& Width \(N\)
& Heads \(H\)
& \(d_{\mathrm{head}}\)
& \(d_{\mathrm{ff}}\)
& Context
& Vocabulary
& Residual type
\\
\midrule
Pythia-14M
& 6
& 128
& 4
& 32
& 512
& 2048
& 50304
& Parallel
\\
Pythia-31M
& 6
& 256
& 8
& 32
& 1024
& 2048
& 50304
& Parallel
\\
Pythia-70M
& 6
& 512
& 8
& 64
& 2048
& 2048
& 50304
& Parallel
\\
Pythia-2.8B
& 32
& 2560
& 32
& 80
& 10240
& 2048
& 50304
& Parallel
\\
Pythia-6.9B
& 32
& 4096
& 32
& 128
& 16384
& 2048
& 50304
& Parallel
\\
\bottomrule
\end{tabular}
}
\caption{Architectural configurations of the Pythia models used in our experiments. \(N\) denotes the hidden width, \(H\) denotes the number of attention heads, \(d_{\mathrm{head}}=N/H\) denotes the dimension of each attention head, and \(d_{\mathrm{ff}}\) denotes the feed-forward intermediate dimension.}
\label{tab:pythia-configurations}
\end{table}

All evaluated models use the GPT-NeoX decoder-only architecture with GELU activations, parallel attention and feed-forward residual branches, untied input and output embeddings, and rotary positional embeddings. Rotary embeddings are applied to \(25\%\) of each attention head using a rotary base of \(10{,}000\). As shown in Table~\ref{tab:pythia-configurations}, the feed-forward dimension satisfies \(d_{\mathrm{ff}}=4N\) for all evaluated configurations, whereas the number and dimension of attention heads follow the original Pythia architectures.

For Pythia-14M, Pythia-31M, and Pythia-70M, we evaluate ten independently trained runs for each configuration. These runs share the same architecture and training protocol but differ in their training random seeds. For Pythia-2.8B and Pythia-6.9B, one pretrained checkpoint is evaluated at each width, and variability in the activation-gradient measurements is estimated across independently sampled evaluation subsets.

We additionally evaluate the conventional language-modeling performance of all five Pythia configurations on WikiText-2. All perplexity values are recomputed using the same preprocessing and evaluation protocol. The results are reported in Table~\ref{tab:pythia-recomputed-ppl}.

\begin{table}[H]
\centering
\small
\setlength{\tabcolsep}{12pt}
\renewcommand{\arraystretch}{1.15}
\begin{tabular}{lc}
\toprule
Model & WikiText-2 PPL \(\downarrow\) \\
\midrule
Pythia-14M  & 169.26 \\
Pythia-31M  & 114.24 \\
Pythia-70M  & 93.65  \\
Pythia-2.8B & 17.71  \\
Pythia-6.9B & 15.85  \\
\bottomrule
\end{tabular}
\caption{Recomputed WikiText-2 perplexity of the evaluated Pythia model configurations. Lower values indicate better language-modeling performance.}
\label{tab:pythia-recomputed-ppl}
\end{table}

As shown in Table~\ref{tab:pythia-recomputed-ppl}, WikiText-2 perplexity decreases consistently across the evaluated Pythia configurations as model width and overall capacity increase. These conventional language-modeling results verify that the evaluated models exhibit the expected predictive-performance ordering and complement the activation-gradient geometry analyzed in the main experiments.

\subsection{ResNet-20 Experimental Configuration and Accuracy}
\label{app:resnet20-configuration}

We evaluate width-dependent behavior on CIFAR-10 and CIFAR-100 using a standard ResNet-20 architecture composed of BasicBlocks. Each input image has resolution \(3\times32\times32\), and the network contains three residual stages with three BasicBlocks per stage. For width multiplier \(w\in\{0.5,1,2,4\}\), the channel dimensions of the three stages are \([16w,32w,64w]\). The first blocks of the second and third stages perform spatial downsampling using stride \(2\). Following the original CIFAR ResNet design, we use the parameter-free Option-A shortcut, which applies strided subsampling and zero padding when the spatial resolution and number of channels change.

The complete architectural and optimization settings are summarized in Table~\ref{tab:resnet20-training-config}. Models are trained using standard cross-entropy loss without label smoothing. All convolutional layers are initialized using Kaiming-normal initialization. For every dataset--width configuration, we independently train eight models using random seeds \(0\) through \(7\).

\begin{table}[H]
\centering
\small
\setlength{\tabcolsep}{5pt}
\renewcommand{\arraystretch}{1.12}
\begin{tabular}{ll}
\toprule
Setting & Value \\
\midrule
Datasets & CIFAR-10, CIFAR-100 \\
Input resolution & \(3\times32\times32\) \\
Architecture & ResNet-20 with BasicBlocks \\
Blocks per stage & \(3,3,3\) \\
Width multipliers & \(0.5\times,1\times,2\times,4\times\) \\
Stage channels & \([16w,32w,64w]\) \\
Downsampling & Stage 2 and 3, stride \(2\) \\
Shortcut & Option A with zero padding \\
Loss & Cross-entropy \\
Label smoothing & None \\
Initialization & Kaiming normal \\
Optimizer & SGD \\
Initial learning rate & \(0.1\) \\
Momentum & \(0.9\) \\
Weight decay & \(10^{-4}\) \\
Batch size & \(128\) \\
Training steps & \(64{,}000\) \\
LR scheduler & MultiStepLR \\
LR milestones & \(32{,}000,\ 48{,}000\) steps \\
LR decay factor & \(0.1\) \\
Learning-rate schedule & \(0.1\rightarrow0.01\rightarrow0.001\) \\
Evaluation interval & Every \(8{,}000\) steps \\
Logging interval & Every \(500\) steps \\
Training seeds & \(0,1,2,3,4,5,6,7\) \\
\bottomrule
\end{tabular}
\caption{Architecture and training configuration used for the ResNet-20 experiments on CIFAR-10 and CIFAR-100.}
\label{tab:resnet20-training-config}
\end{table}

Table~\ref{tab:resnet20-accuracy} reports the final test accuracy obtained after \(64{,}000\) training steps. For each dataset--width configuration, the final accuracy is summarized as the mean and standard deviation across eight independently trained models. The parameter count includes all trainable parameters, including the dataset-dependent classification head.

\begin{table}[H]
\centering
\small
\setlength{\tabcolsep}{4pt}
\renewcommand{\arraystretch}{1.12}
\resizebox{\columnwidth}{!}{%
\begin{tabular}{lccc}
\toprule
Dataset
& Width
& Parameters
& Final test accuracy (\%)
\\
\midrule
CIFAR-10
& \(0.5\times\)
& \(68{,}050\)
& \(88.21 \pm 0.21\)
\\
CIFAR-10
& \(1\times\)
& \(269{,}722\)
& \(91.56 \pm 0.30\)
\\
CIFAR-10
& \(2\times\)
& \(1{,}073{,}962\)
& \(93.61 \pm 0.15\)
\\
CIFAR-10
& \(4\times\)
& \(4{,}286{,}026\)
& \(94.49 \pm 0.35\)
\\
\midrule
CIFAR-100
& \(0.5\times\)
& \(71{,}020\)
& \(58.25 \pm 0.57\)
\\
CIFAR-100
& \(1\times\)
& \(275{,}572\)
& \(67.19 \pm 0.25\)
\\
CIFAR-100
& \(2\times\)
& \(1{,}085{,}572\)
& \(72.00 \pm 0.27\)
\\
CIFAR-100
& \(4\times\)
& \(4{,}309{,}156\)
& \(75.67 \pm 0.27\)
\\
\bottomrule
\end{tabular}
}
\caption{Final test accuracy of the evaluated ResNet-20 width configurations. Results are reported as mean \(\pm\) standard deviation across eight independently trained models.}
\label{tab:resnet20-accuracy}
\end{table}

As shown in Table~\ref{tab:resnet20-accuracy}, increasing the channel-width multiplier consistently improves mean final test accuracy on both CIFAR-10 and CIFAR-100, confirming the expected width-dependent predictive-performance improvement across the evaluated checkpoints.

\section{Computation of Activation-Gradient Alignment Statistics}
\label{app:alignment-estimation}

For each evaluation unit \((x_i,y_i)\), we extract the activation
gradient
\[
q_i
=
\nabla_{z_i}
\ell\!\left(f_{\mathrm{top}}(z_i),y_i\right),
\qquad
z_i=f_{\mathrm{bot}}(x_i).
\]
Given \(n\) activation-gradient vectors, let
\[
\widehat{\mu}
=
\frac{1}{n}\sum_{i=1}^{n}q_i,
\qquad
E_{i,:}
=
(q_i-\widehat{\mu})^\top.
\]
The empirical quantities are computed as
\[
\widehat v_{\parallel}
=
\frac{\|E\widehat{\mu}\|_2^2}{n-1},
\qquad
\widehat v_2
=
\frac{\|EE^\top\|_F^2}{(n-1)^2}.
\]
Thus,
\[
\widehat d_{\parallel}
=
\frac{\|\widehat{\mu}\|_2^4}
{\widehat v_{\parallel}},
\qquad
\widehat d_2
=
\frac{\|\widehat{\mu}\|_2^4}
{\widehat v_2},
\qquad
\widehat d_{\mathrm{align}}
=
\min\{\widehat d_{\parallel},\widehat d_2\}.
\]

\begin{algorithm}[H]
\footnotesize
\begin{algorithmic}[1]
\Require Model \(f=f_{\mathrm{top}}\circ f_{\mathrm{bot}}\), evaluation samples \(\{(x_i,y_i)\}_{i=1}^{n}\)
\Ensure \(\widehat r_q,\widehat d_{\parallel},\widehat d_2,\widehat d_{\mathrm{align}}\)

\For{\(i=1,\ldots,n\)}
    \State \(z_i\gets f_{\mathrm{bot}}(x_i)\)
    \State \(q_i\gets\nabla_{z_i}\ell(f_{\mathrm{top}}(z_i),y_i)\)
\EndFor

\State \(\widehat r_q\gets n^{-1}\sum_{i=1}^{n}\|q_i\|_2\)
\State \(\widehat\mu\gets n^{-1}\sum_{i=1}^{n}q_i\)
\State \(E_{i,:}\gets(q_i-\widehat\mu)^\top\)
\State \(s\gets\|\widehat\mu\|_2^2\)
\State \(v_{\parallel}\gets\|E\widehat\mu\|_2^2/(n-1)\)
\State \(v_2\gets\|EE^\top\|_F^2/(n-1)^2\)
\State \(\widehat d_{\parallel}\gets s^2/v_{\parallel}\)
\State \(\widehat d_2\gets s^2/v_2\)
\State \(\widehat d_{\mathrm{align}}\gets
\min\{\widehat d_{\parallel},\widehat d_2\}\)
\State \Return \(\widehat r_q,\widehat d_{\parallel},
\widehat d_2,\widehat d_{\mathrm{align}}\)
\end{algorithmic}
\caption{Estimation of activation-gradient alignment statistics. The
Gram-matrix computation avoids explicitly forming the \(N\times N\)
covariance matrix.}
\label{alg:alignment-dimension-estimation}
\end{algorithm}

The identity
\[
\operatorname{tr}(\widehat\Sigma^2)
=
\frac{\|EE^\top\|_F^2}{(n-1)^2}
\]
allows the computation to use an \(n\times n\) Gram matrix rather than
an \(N\times N\) covariance matrix. In implementation, denominators
smaller than a numerical tolerance are treated as zero, and the
corresponding effective dimension is set to \(+\infty\).

\section{Monte Carlo Estimation of Misalignment Probability}
\label{app:misalignment-estimation}

For each Monte Carlo trial, we independently sample \(M\) training-side
and \(K\) test-side activation gradients and compute
\[
\mu_M
=
\frac{1}{M}\sum_{i=1}^{M}q_i,
\qquad
g_K
=
\frac{1}{K}\sum_{j=1}^{K}\widetilde q_j.
\]
The empirical misalignment probability is
\[
\widehat p_{M,K}
=
\frac{1}{R}
\sum_{r=1}^{R}
\mathbf 1
\left\{
(\mu_M^{(r)})^\top g_K^{(r)}\leq0
\right\}.
\]

\begin{algorithm}[H]
\footnotesize
\begin{algorithmic}[1]
\Require Fixed, disjoint gradient pools \(\mathcal Q_{\mathrm{tr}}\) and \(\mathcal Q_{\mathrm{te}}\), sample sizes \(M,K\), trials \(R\)
\Ensure \(\widehat p_{M,K}\) and \(\widehat{\mathrm{SE}}_{M,K}\)

\State \(C\gets0\)
\For{\(r=1,\ldots,R\)}
    \State Sample \(\{q_i\}_{i=1}^{M}\) with replacement from \(\mathcal Q_{\mathrm{tr}}\)
    \State Sample \(\{\widetilde q_j\}_{j=1}^{K}\) with replacement from \(\mathcal Q_{\mathrm{te}}\)
    \State \(\mu_M\gets M^{-1}\sum_{i=1}^{M}q_i\)
    \State \(g_K\gets K^{-1}\sum_{j=1}^{K}\widetilde q_j\)
    \State \(C\gets C+\mathbf 1\{\mu_M^\top g_K\leq0\}\)
\EndFor
\State \(\widehat p_{M,K}\gets C/R\)
\State \(\widehat{\mathrm{SE}}_{M,K}\gets
\sqrt{\widehat p_{M,K}(1-\widehat p_{M,K})/R}\)
\State \Return \(\widehat p_{M,K},\widehat{\mathrm{SE}}_{M,K}\)
\end{algorithmic}
\caption{Monte Carlo estimation of the activation-gradient
misalignment probability. Training-side and test-side gradients are
sampled independently with replacement in every trial. Conditional on
the two fixed pools, the Monte Carlo trials are independent.}
\label{alg:misalignment-probability-estimation}
\end{algorithm}

Sampling is performed with replacement from the two fixed, disjoint
gradient pools; repeated gradients are therefore allowed within a
trial. Conditional on these pools, the estimator targets the empirical
misalignment probability induced by their two empirical distributions.
We use \(R=500{,}000\) conditionally independent trials for each model configuration
and each pair \((M,K)\). The worst-case Monte Carlo standard error is
therefore
\[
\widehat{\mathrm{SE}}_{M,K}
\leq
\frac{1}{2\sqrt{R}}
\approx
7.1\times10^{-4}.
\]

The reported plug-in certificate estimates are computed after
Algorithm~\ref{alg:misalignment-probability-estimation} as
\[
\widehat\Xi_{M,K}
=
\frac{1}{M\widehat d_{\parallel}}
+
\frac{1}{K\widehat d_{\parallel}}
+
\frac{1}{MK\widehat d_2},
\]
\[
\widehat B_{M,K}^{\mathrm{exact}}
=
\frac{\widehat\Xi_{M,K}}
{1+\widehat\Xi_{M,K}},
\qquad
\widehat B_{M,K}^{\mathrm{align}}
=
\frac{1}
{1+n_{\mathrm{eff}}\widehat d_{\mathrm{align}}},
\]
where
\[
n_{\mathrm{eff}}
=
\frac{MK}{M+K+1}.
\]

For configurations with multiple training seeds, both procedures are
applied separately to each trained model. Estimates are first
aggregated across independently sampled evaluation subsets within each
model and are subsequently summarized across training seeds.

\section{Component-wise Decomposition of the Effective Alignment Dimension}
\label{app:alignment-dimension-decomposition}

Recall that
\[
d_{\mathrm{align}}
=
\min\{d_{\parallel},d_2\}.
\]
To examine which component controls the effective alignment dimension,
we report $d_{\parallel}$, $d_2$, and $d_{\mathrm{align}}$ for the
controlled LLaMA-style models. For Pythia, the available table contains
only aggregate component summaries; we therefore do not infer an
aggregate minimum or bottleneck frequency from those summaries.

\paragraph{Controlled LLaMA-style models.}

\begin{table}[t]
\centering
\scriptsize
\setlength{\tabcolsep}{3.2pt}
\begin{tabular}{c c c c c}
\toprule
Width $N$
& $d_{\parallel}$
& $d_2$
& $d_{\mathrm{align}}$
& $d_2$ bottleneck \\
\midrule
512
& $11.736 \pm 0.694$
& $3.944 \pm 0.221$
& $3.944 \pm 0.221$
& $100\%$ \\

1024
& $10.620 \pm 0.749$
& $4.418 \pm 0.319$
& $4.418 \pm 0.319$
& $100\%$ \\

1536
& $11.347 \pm 0.748$
& $6.261 \pm 0.885$
& $6.261 \pm 0.885$
& $100\%$ \\

2048
& $9.729 \pm 0.746$
& $6.691 \pm 0.788$
& $6.691 \pm 0.788$
& $100\%$ \\

2560
& $9.056 \pm 0.432$
& $7.767 \pm 0.991$
& $7.635 \pm 0.883$
& $80\%$ \\

3072
& $11.740 \pm 0.748$
& $9.525 \pm 0.467$
& $9.523 \pm 0.463$
& $95\%$ \\
\bottomrule
\end{tabular}
\caption{Component-wise decomposition of the effective alignment dimension across the controlled LLaMA-style model widths. The final column reports the frequency with which $d_2$ is the active bottleneck.}
\label{tab:alignment-dimension-decomposition}
\end{table}

The covariance-energy component $d_2$ is the dominant bottleneck over most of the investigated width range. It determines $d_{\mathrm{align}}$ consistently up to $N=2048$, while $d_{\parallel}$ becomes limiting only occasionally at the two largest widths. Although $d_{\parallel}$ is not monotonic, $d_2$ and the resulting $d_{\mathrm{align}}$ exhibit a clear overall increase with width, with $d_{\mathrm{align}}$ increasing from $3.944$ at $N=512$ to $9.523$ at $N=3072$.

\paragraph{Pythia models.}

\begin{table}[t]
\centering
\scriptsize
\setlength{\tabcolsep}{8pt}
\begin{tabular}{c c c}
\toprule
Width $N$
& $d_{\parallel}$
& $d_2$ \\
\midrule
\multicolumn{3}{c}{\textit{6-layer Pythia models}} \\
\midrule
128
& 5.5915
& 3.4390 \\

256
& 7.6445
& 14.2963 \\

512
& 19.5501
& 14.3800 \\
\midrule
\multicolumn{3}{c}{\textit{32-layer Pythia models}} \\
\midrule
2560
& 7.9527
& 11.0866 \\

4096
& 9.3290
& 10.0440 \\
\bottomrule
\end{tabular}
\caption{Aggregate component summaries for the Pythia models.
Comparisons are made within each fixed-depth group. The table does not
derive an aggregate $d_{\mathrm{align}}$ or a bottleneck frequency from
the two component means.}
\label{tab:pythia-alignment-decomposition}
\end{table}

For Pythia, the valid aggregation order is to compute
\[
d_{\mathrm{align}}^{(r)}
=
\min\!\left\{
d_{\parallel}^{(r)},d_2^{(r)}
\right\}
\]
for each evaluation unit $r$ and only then aggregate the resulting
$d_{\mathrm{align}}^{(r)}$ values. Bottleneck frequencies must likewise
be computed from the same evaluation units. Neither quantity can in
general be recovered from the two aggregate component means in
Table~\ref{tab:pythia-alignment-decomposition}; accordingly, no
Pythia bottleneck claim is made from this table.

\section{Evolution of Gradient-Misalignment Probability during Pretraining}
\label{sec:misalignment-training-dynamics}

We next examine how finite-sample gradient misalignment evolves during
pretraining and whether the width-dependent alignment advantage persists
throughout the training trajectory. For a model of width $N$ evaluated after
$T$ pretraining tokens, we estimate
\begin{equation}
\widehat{p}_{M,K}(N;T)
=
\frac{1}{R}
\sum_{r=1}^{R}
\mathbf{1}
\left\{
\left(\mu_{M,N,T}^{(r)}\right)^{\top}
g_{K,N,T}^{(r)}
\leq 0
\right\},
\label{eq:empirical-checkpoint-misalignment}
\end{equation}
where $\mu_{M,N,T}^{(r)}$ and $g_{K,N,T}^{(r)}$ are independently
sampled training- and test-averaged gradient signals in the $r$-th trial.
Each sampled sequence contains $2048$ tokens, and the evaluation protocol is
held fixed across model widths and checkpoints within each comparison.

We first evaluate the width-$512$ and width-$1024$ LLaMA-style models at
checkpoints corresponding to $1$, $3$, $5$, $7$, and $9$ billion
pretraining tokens. We consider two finite-sample budgets,
$M=K=8$ and $M=K=16$, in order to determine whether the checkpoint-wise
behavior persists under different levels of gradient-estimation uncertainty.

\begin{figure*}[t]
    \centering
    \begin{subfigure}[t]{0.48\textwidth}
        \centering
        \includegraphics[width=\linewidth]{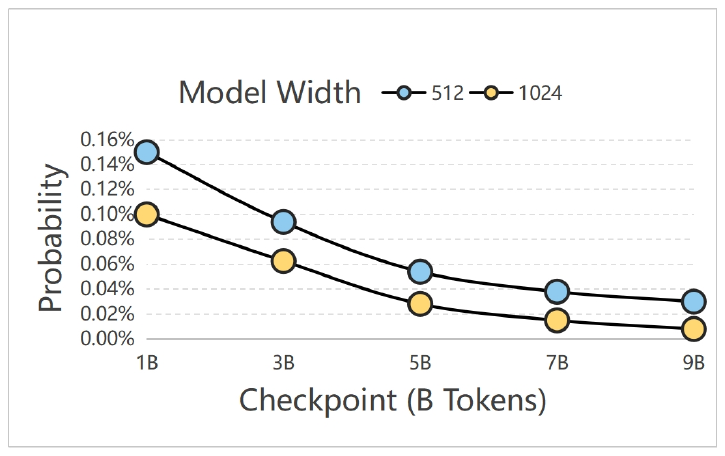}
        \caption{$M=K=8$.}
        \label{fig:misalignment-pretraining-8}
    \end{subfigure}
    \hfill
    \begin{subfigure}[t]{0.48\textwidth}
        \centering
        \includegraphics[width=\linewidth]{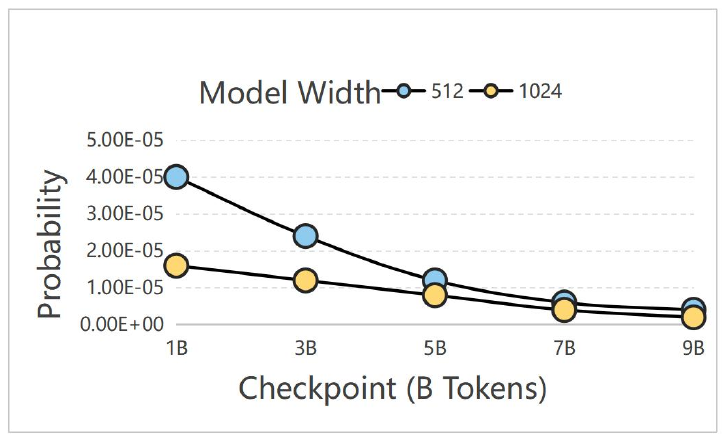}
        \caption{$M=K=16$.}
        \label{fig:misalignment-pretraining-16}
    \end{subfigure}
    \caption{
    Evolution of the empirical finite-sample gradient-misalignment
    probability during pretraining. Width-$512$ and width-$1024$
    LLaMA-style models are evaluated at checkpoints corresponding to
    $1$, $3$, $5$, $7$, and $9$ billion pretraining tokens. Each sampled
    sequence contains $2048$ tokens. Panel~(a) uses $M=K=8$, whereas
    panel~(b) uses $M=K=16$. For both sampling budgets, the empirical
    misalignment probability decreases with pretraining progress and remains
    lower for the wider model.
    }
    \label{fig:misalignment-training-dynamics}
\end{figure*}

\begin{figure*}[t]
    \centering
    \begin{subfigure}[t]{0.48\textwidth}
        \centering
        \includegraphics[width=\linewidth]
        {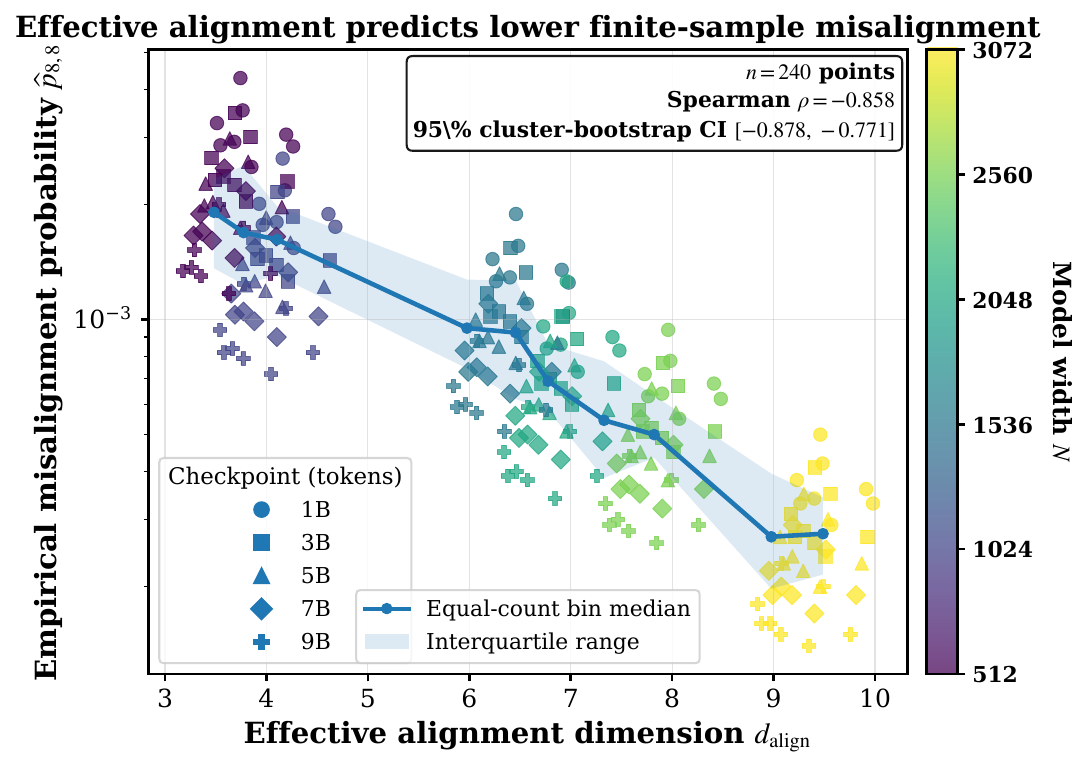}
        \caption{
        Empirical misalignment over the complete width--checkpoint grid.
        Each cell reports the mean over eight independently trained models.
        }
        \label{fig:full-misalignment-heatmap}
    \end{subfigure}
    \hfill
    \begin{subfigure}[t]{0.48\textwidth}
        \centering
        \includegraphics[width=\linewidth]
        {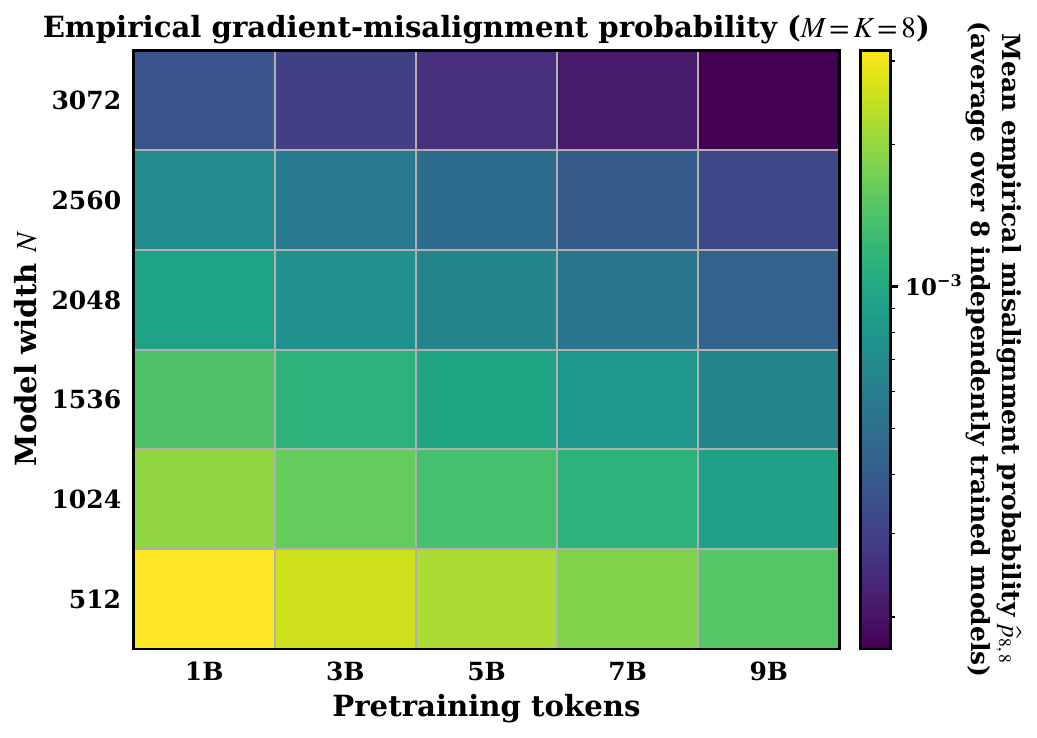}
        \caption{
        Relationship between $d_{\mathrm{align}}$ and
        $\widehat{p}_{8,8}$. Color denotes model width and marker shape
        denotes the pretraining checkpoint.
        }
        \label{fig:dalign-misalignment-scatter}
    \end{subfigure}
    \caption{
    Extended analysis of finite-sample gradient misalignment for $M=K=8$.
    Panel~(a) reports the complete $6\times5$ width--checkpoint grid.
    Each of the $30$ cells is computed by first estimating
    $\widehat{p}_{8,8}$ separately for each trained model and then averaging
    across eight training seeds. Panel~(b) retains all $240$
    width--checkpoint--seed observations. The solid curve reports medians
    over equal-count bins, and the shaded region denotes the corresponding
    interquartile range. Higher $d_{\mathrm{align}}$ is strongly associated
    with lower empirical misalignment (Spearman $\rho=-0.858$;
    trajectory-clustered bootstrap 95\% CI $[-0.878,-0.771]$).
    }
    \label{fig:extended-misalignment-analysis}
\end{figure*}

As shown in Figure~\ref{fig:misalignment-training-dynamics}, the empirical
gradient-misalignment probability decreases consistently as pretraining
progresses. For $M=K=8$, the probability for the width-$512$ model decreases
from approximately $0.15\%$ at the $1$B-token checkpoint to approximately
$0.03\%$ at the $9$B-token checkpoint. The width-$1024$ model follows the
same decreasing trajectory but remains below the narrower model at every
checkpoint, decreasing from approximately $0.10\%$ to below $0.01\%$.

The same qualitative behavior is observed for $M=K=16$, but at a
substantially smaller absolute probability scale. Increasing the sampling
budget reduces the uncertainty of both the training- and test-averaged
gradient estimates, while the width-$1024$ model continues to exhibit a
lower misalignment probability throughout pretraining. The width-dependent
advantage therefore remains visible even when the misalignment event becomes
rare.

Because the two widths are compared at matched pretraining-token checkpoints
and under identical sampling budgets within each panel, the observed
difference cannot be attributed to additional pretraining data or a larger
evaluation sample for the wider model. Instead, the results indicate that the
width-dependent alignment advantage emerges early and persists throughout the
investigated training trajectory.

\paragraph{Complete width--checkpoint analysis.}
We further test whether this behavior extends beyond the two representative
widths in Figure~\ref{fig:misalignment-training-dynamics}. For $M=K=8$, we
evaluate all six LLaMA-style widths
\begin{equation}
N\in
\{512,1024,1536,2048,2560,3072\}
\label{eq:misalignment-width-set}
\end{equation}
at the same five pretraining checkpoints. Each width is represented by eight
independently trained models, resulting in
\begin{equation}
6\ \text{widths}
\times
5\ \text{checkpoints}
\times
8\ \text{training seeds}
=
240
\label{eq:misalignment-observation-count}
\end{equation}
model--checkpoint observations.

For the heatmap, we first compute $\widehat{p}_{8,8}$ separately for each
trained model and then average the eight model-level estimates within each
width--checkpoint pair. The resulting grid therefore contains
$6\times5=30$ cells. The representative trajectories above and the complete
grid are estimated from independently sampled evaluation subsets, so their
absolute values need not coincide exactly; the relevant comparison is the
width- and checkpoint-wise ordering within each analysis.

Figure~\ref{fig:extended-misalignment-analysis}(a) shows a consistent
two-dimensional pattern. At every fixed checkpoint, increasing model width
reduces the empirical misalignment probability. At every fixed width,
additional pretraining also reduces the probability. In the complete-grid
evaluation, the mean probability for width $512$ decreases from
$3.16\times10^{-3}$ at $1$B tokens to $1.47\times10^{-3}$ at $9$B tokens,
whereas the corresponding probability for width $3072$ decreases from
$3.69\times10^{-4}$ to $1.71\times10^{-4}$.

Consequently, the width-$3072$ models exhibit an approximately $8.6$-fold
smaller misalignment probability than the width-$512$ models at both the
$1$B- and $9$B-token checkpoints. The ordering is not restricted to the two
extreme widths: each intermediate width produces a trajectory lying between
those of its narrower and wider counterparts.

The complete grid also shows that width and pretraining progress play
complementary roles. Additional pretraining shifts the trajectory of every
width toward lower misalignment probability, whereas increasing width shifts
the complete checkpoint trajectory downward. In particular, sufficiently
wide models at early checkpoints can exhibit lower misalignment than much
narrower models after substantially more pretraining. The observed
width-dependent improvement therefore cannot be reduced to training progress
alone.

\paragraph{Effective alignment dimension predicts misalignment.}
The theoretical analysis identifies the gradient mean--covariance geometry,
rather than nominal width itself, as the quantity directly governing
finite-sample directional transfer. We therefore retain every
width--checkpoint--seed configuration as an individual observation and
compare
\begin{equation}
d_{\mathrm{align}}(N;T,s)
\qquad\text{with}\qquad
\widehat{p}_{8,8}(N;T,s),
\label{eq:dalign-phat-comparison}
\end{equation}
where $s$ denotes the training seed. Unlike the heatmap, this analysis does
not average over training seeds and therefore retains all $240$ observations.

As shown in Figure~\ref{fig:extended-misalignment-analysis}(b), larger
$d_{\mathrm{align}}$ is strongly associated with lower empirical
misalignment. Across all widths, checkpoints, and training seeds, the
Spearman rank correlation is
\begin{equation}
\rho_{\mathrm{S}}=-0.858.
\label{eq:overall-spearman-misalignment}
\end{equation}

The five checkpoints obtained from the same trained model are statistically
dependent. We therefore quantify uncertainty by resampling complete model
trajectories indexed by $(N,s)$ rather than treating all $240$ points as
independent. A trajectory-clustered bootstrap with $5{,}000$ replicates gives
the $95\%$ confidence interval
\begin{equation}
[-0.878,-0.771].
\label{eq:cluster-bootstrap-misalignment-ci}
\end{equation}

Importantly, this association is not produced solely by pooling different
pretraining stages. When the Spearman correlation is computed separately at
each checkpoint, it remains strongly negative:
\begin{equation}
\rho_{\mathrm{S}}
=
-0.959,\,-0.953,\,-0.952,\,-0.948,\,-0.944
\label{eq:checkpoint-specific-spearman}
\end{equation}
at $1$, $3$, $5$, $7$, and $9$ billion tokens, respectively. Thus, even
after pretraining progress is held fixed, models with larger effective
alignment dimension consistently exhibit lower finite-sample misalignment.

These results distinguish the proposed mechanism from a direct dependence on
parameter width alone. Model width and pretraining progress alter the
underlying gradient mean and covariance, while $d_{\mathrm{align}}$
summarizes how favorable the resulting geometry is for finite-sample
train--test directional transfer. The empirical evidence therefore supports
the mechanism
\begin{equation}
(N,T)
\longrightarrow
d_{\mathrm{align}}(N;T)
\longrightarrow
\widehat{p}_{M,K}(N;T),
\label{eq:empirical-width-dalign-mechanism}
\end{equation}
rather than merely showing two unrelated correlations with model scale.

Pretraining progress and model width consequently have complementary effects.
For a fixed model width and sampling budget, additional pretraining reduces
the probability of estimating a direction that is misaligned with an
independent test estimate. For a fixed checkpoint and sampling budget,
increasing width shifts the model toward a lower-misalignment regime. Over the
investigated model family, the results support
\begin{equation}
\begin{aligned}
T\uparrow
&\quad\Longrightarrow\quad
\widehat{p}_{M,K}(N;T)\downarrow,
\\
N\uparrow
&\quad\Longrightarrow\quad
\widehat{p}_{M,K}(N;T)\downarrow,
\end{aligned}
\label{eq:empirical-training-width-misalignment}
\end{equation}
for both $(M,K)=(8,8)$ and $(M,K)=(16,16)$ in the representative-width
comparison, and across the complete six-width grid for $M=K=8$.

More directly, the model-level analysis supports
\begin{equation}
d_{\mathrm{align}}(N;T,s)\uparrow
\quad\Longrightarrow\quad
\widehat{p}_{8,8}(N;T,s)\downarrow.
\label{eq:empirical-dalign-misalignment}
\end{equation}
This is precisely the qualitative relationship predicted by the
finite-sample alignment certificate: models with more favorable gradient
mean--covariance geometry are less likely to produce disagreeing training and
test directions from finite samples.

\section{Problem Setup and Notation}
\label{app:problem-setup}

This section introduces the activation-space formulation used throughout
the theoretical analysis. We consider a trained network, a candidate
insertion point, and a function-preserving residual expansion. All
alignment quantities are defined exclusively from the activation-gradient
signal at the selected insertion point.

\paragraph{Data and risks.}
Let \(\mathcal D\) be a probability distribution over
\(\mathcal X\times\mathcal Y\). Let
\[
S_{\mathrm{train}}
=
\{(x_i,y_i)\}_{i=1}^{M}
\sim
\mathcal D^M
\]
and
\[
S_{\mathrm{test}}
=
\{(\widetilde x_j,\widetilde y_j)\}_{j=1}^{K}
\sim
\mathcal D^K
\]
be mutually independent training and test samples, where
\(M,K\in\mathbb N_{+}\).

Let
\[
\ell:
\mathbb R^C\times\mathcal Y
\rightarrow
\mathbb R
\]
be the loss function. For any predictor
\(f:\mathcal X\rightarrow\mathbb R^C\), define
\begin{align*}
L_{\mathrm{train}}(f)
&:=
\frac{1}{M}
\sum_{i=1}^{M}
\ell\bigl(f(x_i),y_i\bigr),\\
L_{\mathrm{test}}(f)
&:=
\frac{1}{K}
\sum_{j=1}^{K}
\ell\bigl(f(\widetilde x_j),\widetilde y_j\bigr),\\
R(f)
&:=
\mathbb E_{(x,y)\sim\mathcal D}
\left[
\ell\bigl(f(x),y\bigr)
\right].
\end{align*}
Here, \(L_{\mathrm{train}}\) and \(L_{\mathrm{test}}\) are empirical
risks evaluated on finite independent samples, whereas \(R\) denotes
the population risk under \(\mathcal D\).

\FloatBarrier

\twocolumn[{%
\begin{@twocolumnfalse}
\centering
\begin{tcolorbox}[
    enhanced,
    width=\textwidth,
    colback=blue!3,
    colframe=blue!45!black,
    boxrule=0.6pt,
    arc=1.5mm,
    left=1.5mm,
    right=1.5mm,
    top=1.2mm,
    bottom=1.2mm,
    title={\textbf{Core Notation}},
    fonttitle=\small,
    before skip=4pt,
    after skip=4pt
]

\footnotesize
\setlength{\tabcolsep}{4pt}
\renewcommand{\arraystretch}{1.28}

\begin{tabularx}{\linewidth}{
    @{}
    >{\raggedright\arraybackslash}p{0.145\linewidth}
    >{\raggedright\arraybackslash}X
    @{}
}

\(\boldsymbol{\mathcal D}\)
&
Population distribution over input--label pairs,
\(\mathcal D\) on \(\mathcal X\times\mathcal Y\).
\\

\(\boldsymbol{M}\)
&
Number of independently sampled training examples in
\(S_{\mathrm{train}}=\{(x_i,y_i)\}_{i=1}^{M}\sim\mathcal D^M\).
\\

\(\boldsymbol{K}\)
&
Number of independently sampled test examples in
\(S_{\mathrm{test}}=\{(\widetilde x_j,\widetilde y_j)\}_{j=1}^{K}
\sim\mathcal D^K\), where \(S_{\mathrm{test}}\) is independent of
\(S_{\mathrm{train}}\).
\\

\(\boldsymbol{f_{\mathrm{bot}}}\)
&
Lower part of the trained network,
\(f_{\mathrm{bot}}:\mathcal X\rightarrow\mathbb R^N\).
\\

\(\boldsymbol{f_{\mathrm{top}}}\)
&
Upper part of the trained network,
\(f_{\mathrm{top}}:\mathbb R^N\rightarrow\mathbb R^C\), with
\(f=f_{\mathrm{top}}\circ f_{\mathrm{bot}}\).
\\

\(\boldsymbol{N}\)
&
Hidden width at the selected residual-module insertion point.
\\

\(\boldsymbol{z(x)}\)
&
Hidden representation at the insertion point,
\(z(x)=f_{\mathrm{bot}}(x)\in\mathbb R^N\).
\\

\(\boldsymbol{q(x,y)}\)
&
Activation gradient at the insertion point,
\(q(x,y)=
\nabla_z\ell(f_{\mathrm{top}}(z),y)
\big|_{z=z(x)}
\in\mathbb R^N\).
\\

\(\boldsymbol{\bar\mu}\)
&
Population mean activation gradient,
\(\bar\mu=
\mathbb E_{(x,y)\sim\mathcal D}[q(x,y)]
\in\mathbb R^N\), with \(\bar\mu\neq0\).
\\

\(\boldsymbol{\Sigma}\)
&
Activation-gradient covariance,
\(\Sigma=
\operatorname{Cov}(q(x,y))
=
\mathbb E[(q(x,y)-\bar\mu)(q(x,y)-\bar\mu)^\top]
\in\mathbb R^{N\times N}\).
\\

\(\boldsymbol{\mu_M}\)
&
Training-set activation-gradient average,
\(\mu_M=
M^{-1}\sum_{i=1}^{M}q(x_i,y_i)\).
\\

\(\boldsymbol{g_K}\)
&
Independent test-set activation-gradient average,
\(g_K=
K^{-1}\sum_{j=1}^{K}q(\widetilde x_j,\widetilde y_j)\).
\\

\(\boldsymbol{d_{\parallel}}\)
&
Directional effective dimension,
\(d_{\parallel}
=
\|\bar\mu\|_2^4/
(\bar\mu^\top\Sigma\bar\mu)\).
\\

\(\boldsymbol{d_2}\)
&
Covariance-energy effective dimension,
\(d_2=
\|\bar\mu\|_2^4/
\|\Sigma\|_F^2
=
\|\bar\mu\|_2^4/
\operatorname{tr}(\Sigma^2)\).
\\

\(\boldsymbol{d_{\mathrm{align}}}\)
&
Effective alignment dimension,
\(d_{\mathrm{align}}
=
\min\{d_{\parallel},d_2\}\).
\\

\(\boldsymbol{n_{\mathrm{eff}}}\)
&
Effective sample size,
\(n_{\mathrm{eff}}
=
MK/(M+K+1)\).
\\

\(\boldsymbol{\mathcal E_{M,K}}\)
&
Activation-gradient misalignment event,
\(\mathcal E_{M,K}
=
\{\mu_M^\top g_K\leq0\}\).
\\

\(\boldsymbol{\widetilde f_{S_{\mathrm{train}},\eta}}\)
&
Training-sample-dependent function-preserving residual-expansion path
satisfying \(\widetilde f_{S_{\mathrm{train}},0}=f\).
\\

\(\boldsymbol{u_{S_{\mathrm{train}}}}\)
&
First-order residual perturbation induced by the expansion path,
\(u_{S_{\mathrm{train}}}(z)
:=
\left.
\frac{\partial}{\partial\eta}
h_{S_{\mathrm{train}},\eta}(z)
\right|_{\eta=0}\).
\\

\end{tabularx}

\end{tcolorbox}
\end{@twocolumnfalse}
}]

\paragraph{Network decomposition.}
Fix a candidate insertion point in a trained network and decompose the
original model as
\[
f
=
f_{\mathrm{top}}
\circ
f_{\mathrm{bot}},
\]
where
\[
f_{\mathrm{bot}}:
\mathcal X\rightarrow\mathbb R^N,
\qquad
f_{\mathrm{top}}:
\mathbb R^N\rightarrow\mathbb R^C.
\]
For an input \(x\in\mathcal X\), let
\[
z(x)
:=
f_{\mathrm{bot}}(x)
\in
\mathbb R^N
\]
denote the hidden representation at the selected insertion point.
Thus,
\[
f(x)
=
f_{\mathrm{top}}\bigl(z(x)\bigr),
\]
where \(N\) is the hidden width at that insertion point.

\paragraph{Function-preserving residual expansion.}
Given the training sample \(S_{\mathrm{train}}\), consider a
differentiable residual-expansion path
\[
\eta
\longmapsto
\widetilde f_{S_{\mathrm{train}},\eta},
\qquad
\eta\geq0,
\]
of the form
\[
\widetilde f_{S_{\mathrm{train}},\eta}(x)
=
f_{\mathrm{top}}
\left(
z(x)
+
h_{S_{\mathrm{train}},\eta}\bigl(z(x)\bigr)
\right),
\]
where
\[
h_{S_{\mathrm{train}},0}(z)=0
\qquad
\text{for every }z\in\mathbb R^N.
\]
Consequently,
\[
\widetilde f_{S_{\mathrm{train}},0}(x)
=
f_{\mathrm{top}}\bigl(z(x)\bigr)
=
f(x),
\]
so the expansion preserves the original input--output function at
\(\eta=0\).

The training sample determines the initial residual direction, while
the scalar \(\eta\) controls the magnitude of the expansion.

\paragraph{Activation-gradient signal.}
For a sample \((x,y)\sim\mathcal D\), define the activation gradient at
the insertion point by
\[
q(x,y)
:=
\nabla_z
\ell\left(
f_{\mathrm{top}}(z),y
\right)
\bigg|_{z=z(x)}
\in
\mathbb R^N.
\]
The random vector \(q(x,y)\) is the loss-gradient signal in the hidden
representation space at the selected insertion point.

Throughout the paper, all population moments, finite-sample averages,
alignment events, and effective dimensions are defined from
\(q(x,y)\).

\paragraph{Population activation-gradient moments.}
Define the population mean activation gradient by
\[
\bar\mu
:=
\mathbb E_{(x,y)\sim\mathcal D}
\left[
q(x,y)
\right]
\in
\mathbb R^N,
\]
and its covariance by
\[
\begin{aligned}
\Sigma
&:=
\operatorname{Cov}\bigl(q(x,y)\bigr)
\in
\mathbb R^{N\times N},
\\
\Sigma
&=
\mathbb E
\left[
\bigl(q(x,y)-\bar\mu\bigr)
\bigl(q(x,y)-\bar\mu\bigr)^\top
\right].
\end{aligned}
\]

The finite-sample alignment analysis assumes
\[
\mathbb E
\left[
\|q(x,y)\|_2^2
\right]
<\infty
\qquad
\text{and}
\qquad
\bar\mu\neq0.
\]
The finite-second-moment condition ensures that both
\(\bar\mu\) and \(\Sigma\) are well defined.

\paragraph{Training and test activation-gradient averages.}
Define the training-set activation-gradient average by
\[
\mu_M
:=
\frac{1}{M}
\sum_{i=1}^{M}
q(x_i,y_i),
\]
and the independently evaluated test-set activation-gradient average by
\[
g_K
:=
\frac{1}{K}
\sum_{j=1}^{K}
q(\widetilde x_j,\widetilde y_j).
\]

Both quantities are unbiased estimators of the same population
activation-gradient signal:
\[
\mathbb E[\mu_M]
=
\mathbb E[g_K]
=
\bar\mu.
\]
Moreover, \(\mu_M\) and \(g_K\) are independent because they are
computed from mutually independent samples.

\paragraph{Activation-gradient alignment event.}
The inner product
\[
\mu_M^\top g_K
\]
measures the directional agreement between the activation gradients
estimated from the training and test samples.

The event
\[
\mu_M^\top g_K>0
\]
corresponds to positive train--test activation-gradient alignment.
The conservative failure event is
\[
\mathcal E_{M,K}
:=
\left\{
\mu_M^\top g_K\leq0
\right\}.
\]

\paragraph{Connection to residual-expansion descent.}

For the training-sample-dependent residual-expansion path
\[
\widetilde f_{S_{\mathrm{train}},\eta}(x)
=
f_{\mathrm{top}}
\left(
z(x)
+
h_{S_{\mathrm{train}},\eta}(z(x))
\right),
\]
define its first-order residual perturbation by
\[
u_{S_{\mathrm{train}}}(z)
:=
\left.
\frac{\partial}{\partial\eta}
h_{S_{\mathrm{train}},\eta}(z)
\right|_{\eta=0}.
\]
Because \(h_{S_{\mathrm{train}},0}(z)=0\), the chain rule gives
\begin{align}
\left.
\frac{\mathrm d}{\mathrm d\eta}
L_{\mathrm{test}}
\bigl(
\widetilde f_{S_{\mathrm{train}},\eta}
\bigr)
\right|_{\eta=0}
&=
\frac{1}{K}
\sum_{j=1}^{K}
\widetilde q_j^\top
u_{S_{\mathrm{train}}}(\widetilde z_j),
\label{eq:general-residual-test-derivative}
\end{align}
where
\[
\widetilde z_j=f_{\mathrm{bot}}(\widetilde x_j),
\qquad
\widetilde q_j
=
q(\widetilde x_j,\widetilde y_j).
\]
Equation~\eqref{eq:general-residual-test-derivative} is the general
first-order derivative for a differentiable function-preserving
residual-expansion path. It does not, by itself, equal
\(-\mu_M^\top g_K\).

\paragraph{Residual-realizability condition.}

Following the direct train--test residual-expansion framework, we assume
that the training-selected residual perturbation satisfies
\begin{equation}
\frac{1}{K}
\sum_{j=1}^{K}
\widetilde q_j^\top
u_{S_{\mathrm{train}}}(\widetilde z_j)
=
-\mu_M^\top g_K.
\label{eq:residual-realizability-condition}
\end{equation}
A sufficient realization is
\[
u_{S_{\mathrm{train}}}(z)
\equiv
-\mu_M,
\]
whenever the residual tangent family contains this constant
activation-space perturbation. Indeed, in this case
\[
\frac{1}{K}
\sum_{j=1}^{K}
\widetilde q_j^\top(-\mu_M)
=
-\mu_M^\top g_K.
\]
Under this condition,
\begin{equation}
\left.
\frac{\mathrm d}{\mathrm d\eta}
L_{\mathrm{test}}
\bigl(
\widetilde f_{S_{\mathrm{train}},\eta}
\bigr)
\right|_{\eta=0}
=
-\mu_M^\top g_K.
\label{eq:alignment-to-test-derivative}
\end{equation}
Therefore,
\[
\mu_M^\top g_K>0
\quad\Longrightarrow\quad
\left.
\frac{\mathrm d}{\mathrm d\eta}
L_{\mathrm{test}}
\bigl(
\widetilde f_{S_{\mathrm{train}},\eta}
\bigr)
\right|_{\eta=0}
<0.
\]
Hence, positive train--test activation-gradient alignment implies that
the training-selected residual perturbation is a strict first-order
descent direction for the independently evaluated test risk.

More precisely, differentiability at \(\eta=0\) gives
\begin{align}
L_{\mathrm{test}}
\bigl(
\widetilde f_{S_{\mathrm{train}},\eta}
\bigr)
&=
L_{\mathrm{test}}(f)
-
\eta\mu_M^\top g_K
+
o(\eta).
\label{eq:test-risk-small-step-expansion}
\end{align}
Equivalently, defining
\[
\Delta_R^{\mathrm{test}}(\eta)
:=
L_{\mathrm{test}}(f)
-
L_{\mathrm{test}}
\bigl(
\widetilde f_{S_{\mathrm{train}},\eta}
\bigr),
\]
we obtain
\[
\Delta_R^{\mathrm{test}}(\eta)
=
\eta\mu_M^\top g_K+o(\eta).
\]
Consequently, if \(\mu_M^\top g_K>0\), then there exists
\(\eta_0>0\) such that
\[
\Delta_R^{\mathrm{test}}(\eta)>0
\qquad
\text{for every }
0<\eta\leq\eta_0.
\]

\paragraph{Effective alignment dimensions.}
Define the directional effective dimension by
\[
d_{\parallel}
:=
\frac{\|\bar\mu\|_2^4}
{\bar\mu^\top\Sigma\bar\mu},
\]
and the covariance-energy effective dimension by
\[
d_2
:=
\frac{\|\bar\mu\|_2^4}
{\|\Sigma\|_F^2}
=
\frac{\|\bar\mu\|_2^4}
{\operatorname{tr}(\Sigma^2)}.
\]
Whenever a denominator is zero, the corresponding effective dimension
is understood to be \(+\infty\).

The quantity \(d_{\parallel}\) measures the squared population
activation-gradient signal relative to covariance in the population
signal direction. The quantity \(d_2\) measures the same signal
relative to the total covariance energy.

Define the effective alignment dimension by
\[
d_{\mathrm{align}}
:=
\min
\left\{
d_{\parallel},
d_2
\right\}.
\]
This provides a conservative scalar summary of the two relative-noise
contributions governing finite-sample activation-gradient alignment.

\paragraph{Effective sample size.}
Define
\[
n_{\mathrm{eff}}
:=
\frac{MK}{M+K+1}.
\]
This quantity combines the training and test sample sizes into the
finite-sample factor appearing in the alignment guarantee. It is
symmetric in \(M\) and \(K\) and increases with either sample size.

In the matched-budget setting \(M=K=n\),
\[
n_{\mathrm{eff}}
=
\frac{n^2}{2n+1}.
\]

\section{Proof of Theorem 1}
\label{app:proof-activation-gradient-alignment}

\begin{proof}
Define the centered empirical activation-gradient errors
\[
\varepsilon_M
:=
\mu_M-\bar\mu,
\qquad
\widetilde{\varepsilon}_K
:=
g_K-\bar\mu.
\]
Because the training and evaluation samples are drawn independently,
\(\varepsilon_M\) and \(\widetilde{\varepsilon}_K\) are independent. Moreover,
\[
\mathbb{E}[\varepsilon_M]
=
\mathbb{E}[\widetilde{\varepsilon}_K]
=
0,
\]
and
\[
\mathbb{E}
\left[
\varepsilon_M\varepsilon_M^\top
\right]
=
\frac{\Sigma}{M},
\qquad
\mathbb{E}
\left[
\widetilde{\varepsilon}_K
\widetilde{\varepsilon}_K^\top
\right]
=
\frac{\Sigma}{K}.
\]

Let
\[
X_{M,K}
:=
\mu_M^\top g_K.
\]
Using
\[
\mu_M=\bar\mu+\varepsilon_M,
\qquad
g_K=\bar\mu+\widetilde{\varepsilon}_K,
\]
we obtain
\[
X_{M,K}
=
\|\bar\mu\|_2^2
+
\bar\mu^\top\varepsilon_M
+
\bar\mu^\top\widetilde{\varepsilon}_K
+
\varepsilon_M^\top\widetilde{\varepsilon}_K.
\]
The three random terms have zero expectation. In particular, independence and centering imply
\[
\mathbb{E}
\left[
\varepsilon_M^\top\widetilde{\varepsilon}_K
\right]
=
\mathbb{E}[\varepsilon_M]^\top
\mathbb{E}[\widetilde{\varepsilon}_K]
=
0.
\]
Therefore,
\[
\mathbb{E}[X_{M,K}]
=
\|\bar\mu\|_2^2.
\]

We next compute the variance. For the first linear term,
\[
\mathbb{E}
\left[
\bigl(\bar\mu^\top\varepsilon_M\bigr)^2
\right]
=
\bar\mu^\top
\mathbb{E}
\left[
\varepsilon_M\varepsilon_M^\top
\right]
\bar\mu
=
\frac{1}{M}
\bar\mu^\top\Sigma\bar\mu.
\]
Similarly,
\[
\mathbb{E}
\left[
\bigl(\bar\mu^\top\widetilde{\varepsilon}_K\bigr)^2
\right]
=
\frac{1}{K}
\bar\mu^\top\Sigma\bar\mu.
\]

For the bilinear term, independence gives
\begin{align}
\mathbb{E}
\left[
\bigl(
\varepsilon_M^\top
\widetilde{\varepsilon}_K
\bigr)^2
\right]
&=
\mathbb{E}
\left[
\operatorname{tr}
\left(
\varepsilon_M\varepsilon_M^\top
\widetilde{\varepsilon}_K
\widetilde{\varepsilon}_K^\top
\right)
\right]
\nonumber\\
&=
\operatorname{tr}
\left(
\mathbb{E}
\left[
\varepsilon_M\varepsilon_M^\top
\right]
\mathbb{E}
\left[
\widetilde{\varepsilon}_K
\widetilde{\varepsilon}_K^\top
\right]
\right)
\nonumber\\
&=
\frac{1}{MK}
\operatorname{tr}(\Sigma^2).
\label{eq:bilinear-second-moment}
\end{align}

All cross terms vanish. First,
\[
\mathbb{E}
\left[
\bigl(\bar\mu^\top\varepsilon_M\bigr)
\bigl(\bar\mu^\top\widetilde{\varepsilon}_K\bigr)
\right]
=
0
\]
by independence and centering. Furthermore,
\begin{align}
\mathbb{E}
\left[
\bigl(\bar\mu^\top\varepsilon_M\bigr)
\bigl(\varepsilon_M^\top\widetilde{\varepsilon}_K\bigr)
\right]
&=
\mathbb{E}_{\varepsilon_M}
\left[
\bigl(\bar\mu^\top\varepsilon_M\bigr)
\varepsilon_M^\top
\mathbb{E}_{\widetilde{\varepsilon}_K}
\left[
\widetilde{\varepsilon}_K
\right]
\right]
\nonumber\\
&=
0.
\end{align}
The analogous cross term involving
\(\bar\mu^\top\widetilde{\varepsilon}_K\) also vanishes. Hence
\[
\operatorname{Var}(X_{M,K})
=
\left(
\frac{1}{M}
+
\frac{1}{K}
\right)
\bar\mu^\top\Sigma\bar\mu
+
\frac{1}{MK}
\operatorname{tr}(\Sigma^2).
\]

Set
\[
m
:=
\mathbb{E}[X_{M,K}]
=
\|\bar\mu\|_2^2
>0,
\qquad
v
:=
\operatorname{Var}(X_{M,K}).
\]
The activation-gradient misalignment event satisfies
\[
\{X_{M,K}\leq0\}
=
\{X_{M,K}-m\leq-m\}.
\]
By Cantelli's one-sided inequality,
\[
\Pr(X_{M,K}\leq0)
\leq
\frac{v}{v+m^2}.
\]
Since \(m^2=\|\bar\mu\|_2^4\), we have
\begin{align}
\frac{v}{m^2}
&=
\left(
\frac{1}{M}
+
\frac{1}{K}
\right)
\frac{\bar\mu^\top\Sigma\bar\mu}
{\|\bar\mu\|_2^4}
+
\frac{1}{MK}
\frac{\operatorname{tr}(\Sigma^2)}
{\|\bar\mu\|_2^4}
\nonumber\\
&=
\frac{1}{M d_{\parallel}}
+
\frac{1}{K d_{\parallel}}
+
\frac{1}{MK d_2}
\nonumber\\
&=
\Xi_{M,K}.
\end{align}
Therefore,
\[
\Pr\!\left(
\mu_M^\top g_K\leq0
\right)
\leq
\frac{\Xi_{M,K}}
{1+\Xi_{M,K}}.
\]

Finally, because
\[
d_{\mathrm{align}}
=
\min\{d_{\parallel},d_2\},
\]
we have
\[
\frac{1}{d_{\parallel}}
\leq
\frac{1}{d_{\mathrm{align}}},
\qquad
\frac{1}{d_2}
\leq
\frac{1}{d_{\mathrm{align}}}.
\]
It follows that
\begin{align}
\Xi_{M,K}
&\leq
\left(
\frac{1}{M}
+
\frac{1}{K}
+
\frac{1}{MK}
\right)
\frac{1}{d_{\mathrm{align}}}
\nonumber\\
&=
\frac{M+K+1}
{MK\,d_{\mathrm{align}}}
\nonumber\\
&=
\frac{1}
{n_{\mathrm{eff}}d_{\mathrm{align}}},
\end{align}
where
\[
n_{\mathrm{eff}}
=
\frac{MK}{M+K+1}.
\]
Since \(x\mapsto x/(1+x)\) is nondecreasing on \([0,\infty)\),
\[
\frac{\Xi_{M,K}}
{1+\Xi_{M,K}}
\leq
\frac{1}
{1+n_{\mathrm{eff}}d_{\mathrm{align}}}.
\]
This completes the proof.
\end{proof}

\begin{corollary}[Balanced sample allocation]
\label{cor:balanced-sample-allocation}
Fix an integer sampling budget \(B\geq2\) and require
\[
M+K=B,
\qquad
M,K\in\mathbb N_{+}.
\]
Then the effective sample size
\[
n_{\mathrm{eff}}(M,K)
=
\frac{MK}{M+K+1}
=
\frac{MK}{B+1}
\]
is maximized when the training and test sample sizes are as balanced as
possible, namely,
\[
|M-K|\leq 1.
\]
More precisely, an optimal allocation is
\[
M^\star
=
\left\lfloor\frac{B}{2}\right\rfloor,
\qquad
K^\star
=
\left\lceil\frac{B}{2}\right\rceil,
\]
or the reversed allocation. The corresponding maximum effective sample
size is
\[
n_{\mathrm{eff}}^\star
=
\frac{
\left\lfloor B^2/4\right\rfloor
}{
B+1
}.
\]
Consequently, for fixed \(B\) and \(d_{\mathrm{align}}\), the simplified
activation-gradient misalignment bound
\[
\Pr\!\left(
\mu_M^\top g_K\leq0
\right)
\leq
\frac{1}{
1+n_{\mathrm{eff}}(M,K)d_{\mathrm{align}}
}
\]
is minimized by a balanced allocation of the sampling budget.
\end{corollary}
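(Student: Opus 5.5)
The plan is to reduce the corollary to a one-variable integer optimization. The simplified bound of Theorem~\ref{thm:alignment} is $1/(1+n_{\mathrm{eff}}(M,K)\,d_{\mathrm{align}})$, and this is a nonincreasing function of $n_{\mathrm{eff}}$ for fixed $d_{\mathrm{align}}\in(0,\infty]$. It is strictly decreasing when $0<d_{\mathrm{align}}<\infty$, and identically zero when $d_{\mathrm{align}}=+\infty$. So it suffices to maximize $n_{\mathrm{eff}}$ over admissible allocations.

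Under the constraint $M+K=B$, the denominator $M+K+1=B+1$ is constant. Maximizing $n_{\mathrm{eff}}$ is therefore equivalent to maximizing the product $MK$ over positive integers with $M+K=B$. This is possible because $B\geq2$ guarantees that $M=1,K=B-1$ is feasible, so the feasible set is nonempty.

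For the product, I will substitute $K=B-M$ and use the identity
\[
MK=\frac{B^2-(M-K)^2}{4}.
\]
The product is thus maximized exactly when $|M-K|$ is minimized.

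Since $M-K=2M-B$ has the same parity as $B$, the minimum of $|M-K|$ is $0$ if $B$ is even and $1$ if $B$ is odd. Both cases are achieved by $M^\star=\lfloor B/2\rfloor$, $K^\star=\lceil B/2\rceil$ or the reverse allocation, and both are captured by the condition $|M-K|\leq1$.

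The maximal product is $(B^2-\epsilon)/4$ with $\epsilon\in\{0,1\}$ matching the parity of $B$. In both cases this equals $\lfloor B^2/4\rfloor$, since $B^2\equiv\epsilon \pmod 4$. This yields
\[
n_{\mathrm{eff}}^\star=\frac{\lfloor B^2/4\rfloor}{B+1}.
\]

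The final claim follows by monotonicity of $x\mapsto 1/(1+x d_{\mathrm{align}})$. No step is a real obstacle. The only points needing care are the parity bookkeeping for the floor expression and the degenerate case $d_{\mathrm{align}}=+\infty$, where every allocation gives bound $0$ and the balanced one is still (weakly) optimal.
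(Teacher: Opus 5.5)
Your proposal is correct and follows essentially the same route as the paper: with $M+K=B$ fixed you maximize $MK$ via the $(M-K)^2\geq 0$ argument, then conclude by monotonicity of $x\mapsto 1/(1+x\,d_{\mathrm{align}})$. Your exact identity $MK=(B^2-(M-K)^2)/4$, combined with the parity bookkeeping, makes explicit why $\lfloor B^2/4\rfloor$ is attained precisely when $|M-K|\leq 1$, which the paper states without detail.
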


\begin{proof}
Because \(M+K=B\) is fixed, maximizing
\(n_{\mathrm{eff}}(M,K)\) is equivalent to maximizing the product
\(MK\). Using
\[
(M-K)^2\geq0,
\]
we obtain
\[
(M+K)^2
\geq
4MK,
\]
and hence
\[
MK
\leq
\frac{B^2}{4}.
\]
For positive integer \(M\) and \(K\), the largest attainable value is
\[
MK
=
\left\lfloor\frac{B^2}{4}\right\rfloor,
\]
which is achieved exactly when \(M\) and \(K\) differ by at most one.
Therefore,
\[
n_{\mathrm{eff}}(M,K)
\leq
\frac{
\left\lfloor B^2/4\right\rfloor
}{
B+1
}.
\]

Finally, the function
\[
x\longmapsto
\frac{1}{1+x\,d_{\mathrm{align}}}
\]
is nonincreasing for \(x\geq0\). Thus, maximizing
\(n_{\mathrm{eff}}(M,K)\) minimizes the corresponding simplified
misalignment-probability bound.
\end{proof}

\section{Proof of Theorem~2}
\label{app:proof-theorem-2}

\begin{proof}
Let \(\widetilde f_S\) denote the empirical jumpboard model in
Theorem~2, and let \(\Delta_R^{\mathrm{test}}\) denote its associated
finite-test margin. These fixed-model quantities are distinct from the
path-indexed margin \(\Delta_R^{\mathrm{test}}(\eta)\) used below in
the existential small-step statement.
Define the two uniform-generalization events
\[
\mathcal{E}_{M}
:=
\left\{
\sup_{f\in\mathcal{H}_{\mathrm{new}}}
\left|
R(f)-L_{\mathrm{train}}(f)
\right|
\leq
\epsilon_M
\right\},
\]
and
\[
\mathcal{E}_{K}
:=
\left\{
\sup_{f\in\mathcal{H}_{\mathrm{new}}}
\left|
R(f)-L_{\mathrm{test}}(f)
\right|
\leq
\epsilon_K
\right\}.
\]
By Assumption~3,
\[
\Pr(\mathcal{E}_M)\geq 1-\delta,
\qquad
\Pr(\mathcal{E}_K)\geq 1-\delta.
\]
Therefore, by the union bound,
\[
\Pr(\mathcal{E}_M\cap\mathcal{E}_K)
\geq
1-2\delta.
\]
No independence between $\mathcal{E}_M$ and $\mathcal{E}_K$ is required.

We now work on the event
$\mathcal{E}_M\cap\mathcal{E}_K$.
Because $f_{\mathrm{new}}\in\mathcal{H}_{\mathrm{new}}$, the test-side and training-side uniform bounds imply
\begin{align}
L_{\mathrm{test}}(f_{\mathrm{new}})
&\leq
R(f_{\mathrm{new}})+\epsilon_K
\nonumber\\
&\leq
L_{\mathrm{train}}(f_{\mathrm{new}})
+\epsilon_M+\epsilon_K.
\label{eq:proof-thm2-new-generalization}
\end{align}
By the definition of the optimization gain,
\[
\Delta_{\mathrm{ERM}}
=
L_{\mathrm{train}}(\widetilde f_S)
-
L_{\mathrm{train}}(f_{\mathrm{new}}),
\]
and hence
\[
L_{\mathrm{train}}(f_{\mathrm{new}})
=
L_{\mathrm{train}}(\widetilde f_S)
-
\Delta_{\mathrm{ERM}}.
\]
Substituting this identity into
\eqref{eq:proof-thm2-new-generalization} gives
\begin{align}
L_{\mathrm{test}}(f_{\mathrm{new}})
&\leq
L_{\mathrm{train}}(\widetilde f_S)
-
\Delta_{\mathrm{ERM}}
+
\epsilon_M+\epsilon_K.
\label{eq:proof-thm2-selection}
\end{align}

Since $\widetilde f_S\in\mathcal{H}_{\mathrm{new}}$, the same uniform events yield
\[
L_{\mathrm{train}}(\widetilde f_S)
\leq
R(\widetilde f_S)+\epsilon_M
\leq
L_{\mathrm{test}}(\widetilde f_S)
+\epsilon_M+\epsilon_K.
\]
Combining this inequality with
\eqref{eq:proof-thm2-selection}, we obtain
\begin{align}
L_{\mathrm{test}}(f_{\mathrm{new}})
&\leq
L_{\mathrm{test}}(\widetilde f_S)
-
\Delta_{\mathrm{ERM}}
+
2(\epsilon_M+\epsilon_K).
\label{eq:proof-thm2-jumpboard}
\end{align}
By the definition of the finite-test jumpboard margin,
\[
\Delta_R^{\mathrm{test}}
=
L_{\mathrm{test}}(f_{\mathrm{old}}^*)
-
L_{\mathrm{test}}(\widetilde f_S),
\]
so that
\[
L_{\mathrm{test}}(\widetilde f_S)
=
L_{\mathrm{test}}(f_{\mathrm{old}}^*)
-
\Delta_R^{\mathrm{test}}.
\]
Substitution into
\eqref{eq:proof-thm2-jumpboard} therefore gives
\[
L_{\mathrm{test}}(f_{\mathrm{new}})
\leq
L_{\mathrm{test}}(f_{\mathrm{old}}^*)
-
\Delta_R^{\mathrm{test}}
-
\Delta_{\mathrm{ERM}}
+
2(\epsilon_M+\epsilon_K).
\]
This inequality holds with probability at least
$1-2\delta$.

Consequently, on the same event, whenever
\[
\Delta_R^{\mathrm{test}}
+
\Delta_{\mathrm{ERM}}
>
2(\epsilon_M+\epsilon_K),
\]
we have
\[
L_{\mathrm{test}}(f_{\mathrm{new}})
<
L_{\mathrm{test}}(f_{\mathrm{old}}^*),
\]
which proves the strict finite-test risk improvement statement.

We next establish the alignment-supported positive-margin guarantee. Define the positive-alignment event
\[
\mathcal{A}_{M,K}
:=
\left\{
\mu_M^\top g_K>0
\right\}.
\]
Define the existential positive-margin event by
\[
\mathcal J_{M,K}
:=
\left\{
\exists\,\eta>0:
\Delta_R^{\mathrm{test}}(\eta)>0
\right\}.
\]
Under the residual-realizability condition
\eqref{eq:residual-realizability-condition}, the expansion
\eqref{eq:test-risk-small-step-expansion} implies
\[
\begin{aligned}
\mu_M^\top g_K>0
\quad\Longrightarrow\quad
&\exists\,\eta_0>0\text{ such that}\\
&\Delta_R^{\mathrm{test}}(\eta)>0
\quad\text{for all }0<\eta\leq\eta_0.
\end{aligned}
\]
Hence,
\[
\mathcal{A}_{M,K}
\subseteq
\mathcal J_{M,K}.
\]
Therefore,
\begin{align}
\Pr(\mathcal J_{M,K})
&\geq
\Pr(\mathcal{A}_{M,K})
\nonumber\\
&=
1-
\Pr\left(
\mu_M^\top g_K\leq 0
\right).
\end{align}
Applying Theorem~1 yields
\[
\Pr(\mathcal J_{M,K})
\geq
1-B_{\mathrm{exact}}(M,K)
\geq
1-B_{\mathrm{align}}(M,K).
\]

Finally, another application of the union bound gives
\begin{align}
&\Pr\left(
\mathcal{E}_M
\cap
\mathcal{E}_K
\cap
\mathcal J_{M,K}
\right)
\nonumber\\
&\qquad\geq
\Pr\left(
\mathcal{E}_M
\cap
\mathcal{E}_K
\cap
\mathcal{A}_{M,K}
\right)
\nonumber\\
&\qquad\geq
1
-
\Pr(\mathcal{E}_M^c)
-
\Pr(\mathcal{E}_K^c)
-
\Pr(\mathcal{A}_{M,K}^c)
\nonumber\\
&\qquad\geq
\left[
1-2\delta-B_{\mathrm{exact}}(M,K)
\right]_+,
\end{align}
where \([x]_+:=\max\{x,0\}\).
Using
$B_{\mathrm{exact}}(M,K)\leq B_{\mathrm{align}}(M,K)$
further gives the simplified lower bound
\[
\Pr\left(
\mathcal{E}_M
\cap
\mathcal{E}_K
\cap
\mathcal J_{M,K}
\right)
\geq
\left[
1-2\delta-B_{\mathrm{align}}(M,K)
\right]_+.
\]

The probability combination again requires no independence between the
alignment event and the two uniform-generalization events. The alignment
certificate guarantees the existence of a positive small-step
finite-test margin; strict improvement of a fixed final expanded model
still requires its realized jumpboard margin and optimization gain to
exceed the generalization penalty.
\end{proof}

\section{Recovery of the Previous Width-Dependent Rate}
\label{app:recovery-width-rate}

The finite-sample certificate in Theorem~1 does not require
any prescribed relation between the activation-gradient
distribution and the model width. We now show that the
stronger covariance-spectrum, total-variance, and signal-growth
conditions used in the previous width-dependent analysis imply
linear growth of the effective alignment dimension and therefore
recover the same-order width-dependent misalignment rate.

\begin{proposition}[Recovery of the width-dependent rate]
\label{prop:recovery-width-rate}
For each insertion width $N$, let
$\bar{\mu}^{(N)}$ and $\Sigma^{(N)}$ denote the population
mean and covariance of the corresponding activation-gradient
signal. Suppose that there exist constants
$c_{\mu}>0$, $C_{\lambda}>0$, $C_{\mathrm{tr}}>0$, and
$N_0\in\mathbb{N}$, all independent of $N$, together with a
positive variance scale $\tau_N^2$, such that for every $N\geq N_0$,
\begin{align}
\left\|\bar{\mu}^{(N)}\right\|_2^2
&\geq c_{\mu}N\tau_N^2,
\label{eq:signal-growth-condition}\\
\lambda_{\max}\!\left(\Sigma^{(N)}\right)
&\leq C_{\lambda}\tau_N^2,
\label{eq:spectral-control-condition}\\
\operatorname{tr}\!\left(\Sigma^{(N)}\right)
&\leq C_{\mathrm{tr}}N\tau_N^2.
\label{eq:total-variance-condition}
\end{align}
Then, for every $N\geq N_0$,
\begin{align}
d_{\parallel}(N)
&\geq
\frac{c_{\mu}}{C_{\lambda}}N,
\label{eq:dparallel-linear-lower-bound}\\
d_2(N)
&\geq
\frac{c_{\mu}^2}
{C_{\lambda}C_{\mathrm{tr}}}N.
\label{eq:d2-linear-lower-bound}
\end{align}
Consequently,
\[
d_{\mathrm{align}}(N)
\geq
c_0N,
\]
where
\[
c_0
:=
\min\left\{
\frac{c_{\mu}}{C_{\lambda}},
\frac{c_{\mu}^2}
{C_{\lambda}C_{\mathrm{tr}}}
\right\}
>0.
\]
Therefore, for every $M,K\in\mathbb{N}_+$,
\begin{align}
\Pr\!\left(
\mu_M^{\top}g_K\leq 0
\right)
&\leq
B_{\mathrm{align}}(M,K;N)
\nonumber\\
&\leq
\frac{1}
{1+c_0n_{\mathrm{eff}}(M,K)N}.
\label{eq:recovered-width-certificate}
\end{align}
In particular,
\[
\Pr\!\left(
\mu_M^{\top}g_K\leq 0
\right)
=
O\!\left(
\frac{1}{MN}
+
\frac{1}{KN}
+
\frac{1}{MKN}
\right).
\]
\end{proposition}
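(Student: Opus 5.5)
The plan is to bound the two denominators of $d_{\parallel}(N)$ and $d_2(N)$ directly by elementary spectral inequalities, use the signal-growth condition \eqref{eq:signal-growth-condition} in the numerators, and then feed the resulting linear lower bound on $d_{\mathrm{align}}(N)$ into the simplified certificate of Theorem~\ref{thm:alignment}. Throughout, fix $N\geq N_0$ and abbreviate $\bar\mu=\bar\mu^{(N)}$ and $\Sigma=\Sigma^{(N)}$. If a denominator vanishes, the corresponding dimension is $+\infty$ and the bound is trivial, so I will assume both denominators are positive.

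For $d_{\parallel}$, I use the Rayleigh bound $\bar\mu^\top\Sigma\bar\mu\leq\lambda_{\max}(\Sigma)\|\bar\mu\|_2^2\leq C_\lambda\tau_N^2\|\bar\mu\|_2^2$. This gives
\[
d_{\parallel}(N)\geq\frac{\|\bar\mu\|_2^2}{C_\lambda\tau_N^2}\geq\frac{c_\mu}{C_\lambda}N .
\]
For $d_2$, since $\Sigma$ is positive semidefinite, I use $\operatorname{tr}(\Sigma^2)=\sum_i\lambda_i^2\leq\lambda_{\max}(\Sigma)\operatorname{tr}(\Sigma)\leq C_\lambda C_{\mathrm{tr}}N\tau_N^4$. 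Combining this with $\|\bar\mu\|_2^4\geq c_\mu^2N^2\tau_N^4$ gives $d_2(N)\geq c_\mu^2N/(C_\lambda C_{\mathrm{tr}})$. The factors of $\tau_N$ cancel exactly, which is why only the positivity of the scale $\tau_N^2$ is needed. Taking the minimum of the two bounds yields $d_{\mathrm{align}}(N)\geq c_0N$.

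Next I apply Theorem~\ref{thm:alignment}, which gives $\Pr(\mu_M^\top g_K\leq0)\leq B_{\mathrm{align}}(M,K;N)=1/(1+n_{\mathrm{eff}}d_{\mathrm{align}}(N))$. This bound is monotonically decreasing in $d_{\mathrm{align}}$, so substituting $d_{\mathrm{align}}(N)\geq c_0N$ yields \eqref{eq:recovered-width-certificate}. For the $O(\cdot)$ statement, I bound $1/(1+x)\leq1/x$ and expand
\[
\frac{1}{c_0 n_{\mathrm{eff}}N}=\frac{M+K+1}{c_0MKN}=\frac{1}{c_0}\left(\frac{1}{KN}+\frac{1}{MN}+\frac{1}{MKN}\right).
\]

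No step is a real obstacle. The only point needing care is the inequality $\operatorname{tr}(\Sigma^2)\leq\lambda_{\max}\operatorname{tr}(\Sigma)$. It requires the nonnegativity of the eigenvalues of the covariance, so I will state explicitly that $\Sigma$ is positive semidefinite. I also need to check that the infinite-denominator convention is consistent with the minimum defining $d_{\mathrm{align}}$, which it is.
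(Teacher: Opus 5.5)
Your proposal is correct and follows the paper's proof essentially step for step. The shared steps are the Rayleigh bound $\bar\mu^\top\Sigma\bar\mu\le\lambda_{\max}(\Sigma)\|\bar\mu\|_2^2$ for $d_{\parallel}$, the eigenvalue inequality $\operatorname{tr}(\Sigma^2)\le\lambda_{\max}(\Sigma)\operatorname{tr}(\Sigma)$ for $d_2$, taking the minimum, and substituting into $B_{\mathrm{align}}$ with the expansion of $1/(c_0 n_{\mathrm{eff}}N)$. Your explicit handling of vanishing denominators through the $+\infty$ convention is a small extra bit of care that the paper leaves implicit.
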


\begin{proof}
We first lower-bound the directional effective dimension.
Since $\Sigma^{(N)}$ is positive semidefinite,
\[
\bar{\mu}^{(N)\top}
\Sigma^{(N)}
\bar{\mu}^{(N)}
\leq
\lambda_{\max}\!\left(\Sigma^{(N)}\right)
\left\|\bar{\mu}^{(N)}\right\|_2^2.
\]
Therefore,
\begin{align}
d_{\parallel}(N)
&=
\frac{
\left\|\bar{\mu}^{(N)}\right\|_2^4
}{
\bar{\mu}^{(N)\top}
\Sigma^{(N)}
\bar{\mu}^{(N)}
}
\nonumber\\
&\geq
\frac{
\left\|\bar{\mu}^{(N)}\right\|_2^2
}{
\lambda_{\max}\!\left(\Sigma^{(N)}\right)
}.
\end{align}
Applying
\eqref{eq:signal-growth-condition} and
\eqref{eq:spectral-control-condition} gives
\[
d_{\parallel}(N)
\geq
\frac{c_{\mu}}{C_{\lambda}}N.
\]

We next lower-bound the covariance-energy effective
dimension. Let
$\lambda_1^{(N)},\ldots,\lambda_N^{(N)}$
denote the eigenvalues of $\Sigma^{(N)}$. Because these
eigenvalues are nonnegative,
\begin{align}
\operatorname{tr}\!\left(
\Sigma^{(N)2}
\right)
&=
\sum_{i=1}^{N}
\left(\lambda_i^{(N)}\right)^2
\nonumber\\
&\leq
\lambda_{\max}\!\left(\Sigma^{(N)}\right)
\sum_{i=1}^{N}\lambda_i^{(N)}
\nonumber\\
&=
\lambda_{\max}\!\left(\Sigma^{(N)}\right)
\operatorname{tr}\!\left(\Sigma^{(N)}\right).
\label{eq:covariance-energy-upper-bound}
\end{align}
Using
\eqref{eq:spectral-control-condition} and
\eqref{eq:total-variance-condition}, we obtain
\[
\operatorname{tr}\!\left(
\Sigma^{(N)2}
\right)
\leq
C_{\lambda}C_{\mathrm{tr}}N\tau_N^4.
\]
Hence,
\begin{align}
d_2(N)
&=
\frac{
\left\|\bar{\mu}^{(N)}\right\|_2^4
}{
\operatorname{tr}\!\left(
\Sigma^{(N)2}
\right)
}
\nonumber\\
&\geq
\frac{
c_{\mu}^2N^2\tau_N^4
}{
C_{\lambda}C_{\mathrm{tr}}N\tau_N^4
}
\nonumber\\
&=
\frac{
c_{\mu}^2
}{
C_{\lambda}C_{\mathrm{tr}}
}N.
\end{align}

Because
\[
d_{\mathrm{align}}(N)
=
\min\left\{
d_{\parallel}(N),d_2(N)
\right\},
\]
the two preceding bounds imply
\[
d_{\mathrm{align}}(N)
\geq
c_0N,
\]
with
\[
c_0
=
\min\left\{
\frac{c_{\mu}}{C_{\lambda}},
\frac{c_{\mu}^2}
{C_{\lambda}C_{\mathrm{tr}}}
\right\}.
\]

Applying the simplified certificate from Theorem~1 yields
\begin{align}
\Pr\!\left(
\mu_M^{\top}g_K\leq 0
\right)
&\leq
\frac{
1
}{
1+n_{\mathrm{eff}}(M,K)
d_{\mathrm{align}}(N)
}
\nonumber\\
&\leq
\frac{
1
}{
1+c_0n_{\mathrm{eff}}(M,K)N
}.
\end{align}
Since
\[
n_{\mathrm{eff}}(M,K)
=
\frac{MK}{M+K+1},
\]
we have
\begin{align}
\frac{1}
{c_0n_{\mathrm{eff}}(M,K)N}
&=
\frac{M+K+1}
{c_0MKN}
\nonumber\\
&=
\frac{1}{c_0KN}
+
\frac{1}{c_0MN}
+
\frac{1}{c_0MKN}.
\end{align}
Therefore,
\[
\Pr\!\left(
\mu_M^{\top}g_K\leq 0
\right)
=
O\!\left(
\frac{1}{MN}
+
\frac{1}{KN}
+
\frac{1}{MKN}
\right).
\]
This completes the proof.
\end{proof}

The same rate can also be recovered directly from the exact
certificate. Under the conditions of
Proposition~\ref{prop:recovery-width-rate},
\begin{align}
\Xi_{M,K}(N)
&=
\frac{1}{Md_{\parallel}(N)}
+
\frac{1}{Kd_{\parallel}(N)}
+
\frac{1}{MKd_2(N)}
\nonumber\\
&\leq
\frac{C_{\lambda}}{c_{\mu}N}
\left(
\frac{1}{M}+\frac{1}{K}
\right)
+
\frac{
C_{\lambda}C_{\mathrm{tr}}
}{
c_{\mu}^2MKN
}.
\end{align}
Since
\[
B_{\mathrm{exact}}(M,K;N)
=
\frac{\Xi_{M,K}(N)}
{1+\Xi_{M,K}(N)}
\leq
\Xi_{M,K}(N),
\]
the exact certificate satisfies the same asymptotic rate.

These conditions are sufficient rather than necessary.
Theorem~1 remains valid even when the signal norm, covariance
spectrum, or total variance does not obey the prescribed
width-growth relations. Their role is only to provide one
particular route to
$d_{\mathrm{align}}(N)=\Omega(N)$ and thereby recover the
previous width-dependent guarantee.

\section{Exact Parameter-Space Realization of the Activation-Gradient Direction}
\label{app:weight-gradient-realization}

We now show that the activation-gradient direction used in the
finite-sample analysis can be realized exactly by a concrete
function-preserving residual module. All quantities $z$, $q$, $\mu_M$,
$g_K$, $L_{\mathrm{train}}$, and $L_{\mathrm{test}}$ follow the notation
introduced above.

Consider an inserted residual module of the form
\[
h_{U,V,b}(z)
=
V\psi_U(z)+b,
\]
where $V\in\mathbb{R}^{N\times m}$ is the output projection and
$b\in\mathbb{R}^{N}$ is an additive output parameter. Initialize the new
module at
\[
U=U_0,
\qquad
V=0,
\qquad
b=0.
\]
Because
\[
h_{U_0,0,0}(z)=0
\qquad
\text{for every }z,
\]
the insertion preserves the original network function exactly.

At this function-preserving initialization, the sample-level gradients with
respect to the new output parameters satisfy the following identities, where
$\widetilde z(V,b):=z+V\psi_{U_0}(z)+b$:
\begin{align*}
\left.
\nabla_V
\ell\!\left(
f_{\mathrm{top}}
\bigl(\widetilde z(V,b)\bigr),
y
\right)
\right|_{V=0,b=0}
&=
q(x,y)\psi_{U_0}(z)^\top,
\\
\left.
\nabla_b
\ell\!\left(
f_{\mathrm{top}}
\bigl(\widetilde z(V,b)\bigr),
y
\right)
\right|_{V=0,b=0}
&=
q(x,y).
\end{align*}
Thus, $q(x,y)$ is the prediction-side factor of the output-projection
weight gradient and, more importantly, is exactly the sample-level parameter
gradient with respect to the additive output parameter $b$.

\begin{proposition}[Exact realization of the activation-gradient direction]
\label{prop:exact-activation-gradient-realization}
Suppose that the inserted residual module contains the zero-initialized
additive output parameter $b$. Then the training-selected direction
$-\mu_M$ is realized by the parameter path
\[
U(\eta)=U_0,
\qquad
V(\eta)=0,
\qquad
b(\eta)=-\eta\mu_M.
\]
The induced residual perturbation satisfies
\[
u_{S_{\mathrm{train}}}(z)
=
-\mu_M
\qquad
\text{for every }z,
\]
and the directional derivative of the independent finite-test risk is
\begin{equation}
\left.
\frac{d}{d\eta}
L_{\mathrm{test}}
\bigl(
\widetilde f_{S_{\mathrm{train}},\eta}
\bigr)
\right|_{\eta=0}
=
-\mu_M^\top g_K.
\label{eq:exact-realized-test-derivative}
\end{equation}
Consequently, positive train--test activation-gradient alignment implies
that a sufficiently small realizable update of the inserted residual module
strictly decreases the independent finite-test risk.
\end{proposition}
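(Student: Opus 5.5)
The argument is a direct chain-rule computation along the proposed parameter path, followed by an appeal to the general first-order identity \eqref{eq:general-residual-test-derivative} and the small-step expansion \eqref{eq:test-risk-small-step-expansion}.

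First, I substitute the path $U(\eta)=U_0$, $V(\eta)=0$, $b(\eta)=-\eta\mu_M$ into the residual module. This gives $h_{S_{\mathrm{train}},\eta}(z)=0\cdot\psi_{U_0}(z)-\eta\mu_M=-\eta\mu_M$ for every $z$. Since $\mu_M$ is computed from $S_{\mathrm{train}}$ alone, this is a legitimate training-sample-dependent path. At $\eta=0$ it reduces to $h_{U_0,0,0}\equiv0$, so it is function-preserving. Differentiating in $\eta$ at $0$ yields $u_{S_{\mathrm{train}}}(z)\equiv-\mu_M$, independent of $z$. This is the first claim.

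Second, I compute the test derivative. Define
\[
\widetilde f_{S_{\mathrm{train}},\eta}(\widetilde x_j)=f_{\mathrm{top}}\bigl(\widetilde z_j-\eta\mu_M\bigr).
\]
Assuming $z\mapsto\ell(f_{\mathrm{top}}(z),y)$ is differentiable at each $\widetilde z_j$ (implicit in the definition of $q$), the chain rule gives
\[
\frac{d}{d\eta}\,\ell\bigl(f_{\mathrm{top}}(\widetilde z_j-\eta\mu_M),\widetilde y_j\bigr)\Big|_{\eta=0}=-\widetilde q_j^\top\mu_M .
\]
Averaging over $j=1,\dots,K$ and using the definition of $g_K$ yields $-\mu_M^\top g_K$, which is \eqref{eq:exact-realized-test-derivative}. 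The same conclusion also follows by plugging $u\equiv-\mu_M$ into \eqref{eq:general-residual-test-derivative}. In particular, this verifies the residual-realizability condition \eqref{eq:residual-realizability-condition}.

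Third, I handle the consequence. Since $\eta\mapsto L_{\mathrm{test}}(\widetilde f_{S_{\mathrm{train}},\eta})$ is a finite average of functions differentiable at $0$, we have
\[
L_{\mathrm{test}}(\widetilde f_{S_{\mathrm{train}},\eta})=L_{\mathrm{test}}(f)-\eta\,\mu_M^\top g_K+o(\eta).
\]
If $\mu_M^\top g_K>0$, choose $\eta_0>0$ such that the $o(\eta)$ remainder is below $\tfrac12\eta\,\mu_M^\top g_K$ for all $0<\eta\le\eta_0$. Then the test risk strictly decreases for all such $\eta$.

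The only delicate point is regularity. The statement needs differentiability of the loss in $z$ at each test representation. I would state this explicitly as inherited from the definition of $q$, rather than requiring any smoothness of $\psi_{U_0}$, which plays no role because $V\equiv0$ along the path.
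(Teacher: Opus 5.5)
Your proposal is correct and follows the paper's proof essentially step for step. You substitute the path to get $h_\eta(z)=-\eta\mu_M$ and $u_{S_{\mathrm{train}}}\equiv-\mu_M$, apply the chain rule on the test sample to obtain $-\mu_M^\top g_K$, and conclude via the first-order expansion at $\eta=0$. The paper's only extra is a side computation of the training derivative $-\|\mu_M\|_2^2$, which the statement does not need; your explicit choice of $\eta_0$ and your regularity remark on $z\mapsto\ell(f_{\mathrm{top}}(z),y)$ make the small-step conclusion slightly more careful than the paper's.
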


\begin{proof}
Along the stated parameter path,
\[
h_\eta(z)
=
-\eta\mu_M,
\]
and hence
\[
u_{S_{\mathrm{train}}}(z)
=
\left.
\frac{\partial}{\partial\eta}
h_\eta(z)
\right|_{\eta=0}
=
-\mu_M.
\]
Therefore, the residual-realizability condition
\eqref{eq:residual-realizability-condition} holds exactly.

Applying the chain rule to the empirical training risk gives
\[
\left.
\frac{d}{d\eta}
L_{\mathrm{train}}
\bigl(
\widetilde f_{S_{\mathrm{train}},\eta}
\bigr)
\right|_{\eta=0}
=
-\|\mu_M\|_2^2.
\]
Thus, whenever $\mu_M\neq0$, the path is a strict first-order descent
direction for the empirical training risk.

For the independent test sample, the same calculation gives
\begin{align*}
\left.
\frac{d}{d\eta}
L_{\mathrm{test}}
\bigl(
\widetilde f_{S_{\mathrm{train}},\eta}
\bigr)
\right|_{\eta=0}
&=
\frac{1}{K}
\sum_{j=1}^{K}
\widetilde q_j^\top(-\mu_M)
\\
&=
-\mu_M^\top g_K,
\end{align*}
which proves \eqref{eq:exact-realized-test-derivative}.

If $\mu_M^\top g_K>0$, the derivative is strictly negative. By
differentiability at $\eta=0$,
\[
L_{\mathrm{test}}
\bigl(
\widetilde f_{S_{\mathrm{train}},\eta}
\bigr)
=
L_{\mathrm{test}}(f)
-
\eta\mu_M^\top g_K
+
o(\eta).
\]
Hence, there exists $\eta_0>0$ such that
\[
L_{\mathrm{test}}
\bigl(
\widetilde f_{S_{\mathrm{train}},\eta}
\bigr)
<
L_{\mathrm{test}}(f)
\]
for every $0<\eta\leq\eta_0$.
\end{proof}

The additive output parameter has an equivalent constant-channel
representation. Specifically, define
\[
\widetilde\psi_U(z)
=
\begin{bmatrix}
\psi_U(z)\\
1
\end{bmatrix},
\qquad
\widetilde V
=
\begin{bmatrix}
V & b
\end{bmatrix}.
\]
Then
\[
V\psi_U(z)+b
=
\widetilde V\widetilde\psi_U(z).
\]
Thus, an affine output projection, a zero-initialized output bias, and a
constant residual channel provide equivalent realizations of the same
activation-space tangent direction. This requirement applies only to the
newly inserted module and does not require the pretrained base architecture
to use biased linear layers.

If the inserted module contains neither an additive output parameter nor a
constant feature, $q(x,y)$ alone does not generally determine a realizable
parameter-space update. In that case, the appropriate sample-level signal is
the full output-projection gradient
\[
\zeta(x,y)
=
\operatorname{vec}
\left(
q(x,y)\psi_{U_0}(z)^\top
\right),
\]
and the finite-sample alignment analysis must instead be formulated in terms
of $\zeta$.

\section{Architecture-Specific Selection of Activation-Gradient Measurement Locations}
\label{app:architecture-specific-measurement-locations}

The normalized residual-expansion framework is formulated at a designated intermediate representation and therefore permits architecture-specific measurement locations. The selected location must provide a valid residual insertion point, preserve the function-preserving expansion interpretation, and allow the activation gradient to represent the relevant first-order improvement signal. Based on these principles, we use the final residual representation for Transformer language models and an early residual block for ResNet-20.

\paragraph{Existential nature of the layer-wise theory.}
Both the residual-expansion condition and our finite-sample alignment
certificate are existential over admissible insertion locations. Let
$\mathcal{L}_{\mathrm{adm}}$ denote the set of admissible insertion layers.
For each $\ell\in\mathcal{L}_{\mathrm{adm}}$, the corresponding gradient
signal, mean, covariance, effective alignment dimension, and misalignment
probability are defined locally at that layer.

It is sufficient that there exists one layer
$\ell^\star\in\mathcal{L}_{\mathrm{adm}}$ such that
\begin{equation}
\mathcal{C}_{\ell^\star}\ \text{holds}
\qquad\text{and}\qquad
B_{M,K}^{(\ell^\star)}\leq\delta,
\label{eq:existential-layer-condition}
\end{equation}
where $\mathcal{C}_{\ell}$ denotes the population expansion condition at
layer $\ell$, and $B_{M,K}^{(\ell)}$ is the corresponding finite-sample upper
bound on the gradient-misalignment probability. It then follows that
\begin{equation}
\Pr\!\left[
\left(\mu_M^{(\ell^\star)}\right)^\top
g_K^{(\ell^\star)}>0
\right]
\geq 1-\delta.
\label{eq:existential-layer-alignment}
\end{equation}
Hence, with probability at least $1-\delta$, a function-preserving block
inserted at $\ell^\star$ admits a train-estimated direction aligned with the
corresponding test direction. Together with
$\mathcal{C}_{\ell^\star}$, a sufficiently small update yields a nearby
expanded model with lower population risk.

Therefore, neither the original residual-expansion argument nor our
finite-sample refinement requires the condition to hold at every admissible
layer. A single valid insertion location is sufficient to establish the
existence of an improving structural expansion. By contrast, ruling out
improvement through the considered block family and first-order expansion
mechanism requires the corresponding condition to fail at every admissible
insertion layer.

\subsection{Why the Final Residual Representation Is Measured in LLMs}
\label{app:llm-final-representation}

The original normalized residual-expansion framework is defined at an arbitrary designated intermediate representation. Specifically, a trained network can be decomposed as \(f=f_{\mathrm{top}}\circ f_{\mathrm{bot}}\), where \(z=f_{\mathrm{bot}}(x)\in\mathbb R^N\) denotes the residual-stream representation at the selected insertion point and \(q(x,y)=\nabla_z\ell(f_{\mathrm{top}}(z),y)\) denotes the corresponding activation-gradient signal. The theory therefore does not prescribe a unique layer shared by all architectures. Instead, the selected location must constitute a valid residual insertion point and must preserve the interpretation of \(q\) as the first-order signal governing a function-preserving residual expansion.

Consider an \(L\)-block Transformer with residual-stream representations \(z_0,z_1,\ldots,z_L\), where
\[
z_{l+1}=T_l(z_l),\qquad l=0,\ldots,L-1,
\]
and let \(H\) denote the final normalization and language-modeling head. For each residual-stream location \(l\), define
\[
q_l(x,y)
=
\nabla_{z_l}
\ell\!\left(H(z_L),y\right).
\]
Let
\[
\Phi_{l\rightarrow L}
=
T_{L-1}\circ\cdots\circ T_l
\]
denote the downstream Transformer mapping from \(z_l\) to \(z_L\), and define its Jacobian by
\[
A_l(x)
=
J_{\Phi_{l\rightarrow L}}(z_l).
\]
The chain rule then gives
\[
q_l(x,y)
=
A_l(x)^\top q_L(x,y),
\]
where
\[
q_L(x,y)
=
\nabla_{z_L}
\ell\!\left(H(z_L),y\right).
\]
Thus, an activation gradient measured at an earlier layer is obtained by transporting the terminal activation gradient through every remaining Transformer block.

This transport directly affects the population quantities entering the effective alignment dimension. At layer \(l\), the population mean and covariance are
\[
\bar{\mu}_l
=
\mathbb E
\left[
A_l(x)^\top q_L(x,y)
\right]
\]
and
\[
\Sigma_l
=
\operatorname{Cov}
\left(
A_l(x)^\top q_L(x,y)
\right).
\]
Consequently, \(\bar{\mu}_l\) and \(\Sigma_l\) depend not only on the terminal loss-gradient geometry, but also on the anisotropy, conditioning, sample dependence, and gradient correlation induced by the downstream Jacobian \(A_l(x)\). The corresponding effective dimensions
\[
d_{\parallel}^{(l)}
=
\frac{\|\bar{\mu}_l\|_2^4}
{\bar{\mu}_l^\top\Sigma_l\bar{\mu}_l},
\qquad
d_2^{(l)}
=
\frac{\|\bar{\mu}_l\|_2^4}
{\operatorname{tr}(\Sigma_l^2)}
\]
therefore combine the intrinsic activation-gradient geometry at the prediction interface with an additional layer-dependent transport effect.

The distinction can be made explicit by first considering an idealized
deterministic downstream Jacobian of the form
\[
A_l=cR,
\]
where $c>0$ and $R\in\mathbb R^{N\times N}$ is orthogonal:
\[
R^\top R=RR^\top=I.
\]
In this case,
\[
\bar{\mu}_l
=
cR^\top\bar{\mu}_L,
\qquad
\Sigma_l
=
c^2R^\top\Sigma_LR.
\]
Orthogonal invariance then gives
\[
\|\bar{\mu}_l\|_2^4
=
c^4\|\bar{\mu}_L\|_2^4,
\]
\[
\bar{\mu}_l^\top\Sigma_l\bar{\mu}_l
=
c^4\bar{\mu}_L^\top\Sigma_L\bar{\mu}_L,
\]
and
\[
\operatorname{tr}(\Sigma_l^2)
=
c^4\operatorname{tr}(\Sigma_L^2).
\]
It follows that
\[
d_{\parallel}^{(l)}
=
d_{\parallel}^{(L)},
\qquad
d_2^{(l)}
=
d_2^{(L)},
\qquad
d_{\mathrm{align}}^{(l)}
=
d_{\mathrm{align}}^{(L)}.
\]
Hence, a deterministic scaled isometry preserves the effective alignment geometry exactly. Differences between early- and late-layer measurements arise from deviations from this ideal case, including unequal singular values, sample-dependent Jacobians, and correlations between the downstream transformation and the terminal gradient.

To isolate the sample-dependent contribution, write
\[
A_l(x)
=
\overline A_l+\Delta A_l(x),
\qquad
\overline A_l
=
\mathbb E[A_l(x)].
\]
The earlier-layer gradient can then be decomposed as
\[
q_l
=
\overline A_l^\top q_L+r_l,
\qquad
r_l
=
\Delta A_l(x)^\top q_L.
\]
The additional transport term satisfies
\[
\mathbb E\|r_l\|_2^2
\le
\mathbb E
\left[
\|\Delta A_l(x)\|_{\mathrm{op}}^2
\|q_L(x,y)\|_2^2
\right].
\]
Accordingly, input-dependent variation in the downstream Transformer Jacobian introduces an additional source of second-moment variation that can modify both the estimated signal direction and its covariance. Moreover, because
\[
A_l(x)
=
J_{T_{L-1}}(z_{L-1})
\cdots
J_{T_l}(z_l),
\]
earlier residual representations involve a longer product of attention- and feed-forward-block Jacobians, providing more opportunities for anisotropic scaling, rotation, and sample-dependent distortion.

At the output of the final Transformer block, the downstream Transformer mapping is the identity:
\[
A_L=I.
\]
Therefore,
\[
q_L
=
\nabla_{z_L}
\ell\!\left(H(z_L),y\right),
\]
and the transport perturbation satisfies
\[
\Delta A_L=0,
\qquad
r_L=0.
\]
The final residual representation consequently removes all cumulative Jacobian transport through additional Transformer blocks. Its mean \(\bar{\mu}_L\), covariance \(\Sigma_L\), and effective alignment dimension \(d_{\mathrm{align}}^{(L)}\) provide the most direct characterization of the activation-gradient geometry connected to the language-modeling loss.

This location remains fully compatible with the residual-expansion mechanism of the original theory. A Transformer MLP branch can be written as
\[
h_{U,V}(z)
=
W_{\mathrm{out}}\psi_U(z),
\]
while an attention branch can be written as
\[
h_{U,V}(z)
=
W_O\operatorname{Attn}_U(z).
\]
In both cases, the output projection plays the role of \(V\) in the abstract residual form \(h_{U,V}(z)=V\psi_U(z)\). Initializing this output projection at zero gives \(h_{U,0}(z)=0\), preserving the original model function while retaining a realizable first-order residual direction. The final residual stream is therefore an admissible insertion point under the same function-preserving residual-expansion framework used by the theory.

For the LLaMA-style and Pythia models, we consequently measure the activation gradient at the output of the final Transformer block and before the final normalization and language-modeling head. This choice follows a minimal-transport principle: among residual-stream locations, it eliminates the largest amount of downstream Jacobian-induced variation and provides the least confounded measurement of width-dependent activation-gradient geometry. We do not claim that the final block universally maximizes \(d_{\mathrm{align}}\) over all possible layers; rather, it is the canonical location at which the measured statistics are not additionally transformed by any subsequent Transformer block.

\subsection{Why an Early Residual Block Is Measured in ResNet}
\label{app:resnet-early-representation}

The original normalized residual-expansion framework permits the insertion point to be placed at an arbitrary designated intermediate representation. For a ResNet with intermediate representations \(z_0,z_1,\ldots,z_L\), write
\[
z_{j+1}=T_j(z_j),
\qquad
j=0,\ldots,L-1,
\]
where \(T_j\) denotes the \(j\)-th residual block, including the skip connection and the residual branch. At a candidate insertion point \(l\), define the activation gradient
\[
q_l(x,y)
=
\nabla_{z_l}
\ell\!\left(f(x),y\right).
\]
A function-preserving residual expansion introduces a new branch \(h_{l,\eta}\) satisfying \(h_{l,0}(z)=0\). Its first-order perturbation is
\[
u_l(z_l)
=
\left.
\frac{\partial}{\partial\eta}
h_{l,\eta}(z_l)
\right|_{\eta=0}.
\]
The corresponding population-risk derivative is
\[
\left.
\frac{\mathrm d}{\mathrm d\eta}
R\!\left(f_{l,\eta}\right)
\right|_{\eta=0}
=
\mathbb E
\left[
q_l(x,y)^\top u_l(z_l)
\right].
\]
Therefore, the suitability of an insertion point depends on whether the realizable residual tangent family contains a direction that is non-orthogonal to the activation-gradient signal.

Let \(\mathcal U_l\) denote the set of first-order perturbations realizable by a zero-output residual block at location \(l\):
\[
\mathcal U_l
=
\left\{
u_l:
u_l(z_l)
=
\left.
\frac{\partial}{\partial\eta}
h_{l,\eta}(z_l)
\right|_{\eta=0}
\right\}.
\]
We assume that each residual tangent family is closed under
nonnegative scalar multiplication:
\[
u\in\mathcal U_l,\quad a\geq0
\quad\Longrightarrow\quad
au\in\mathcal U_l.
\]
This property holds when the tangent family is generated by
differentiable parameter paths whose initial velocities may be
rescaled.
Define the normalized first-order improvement capacity at layer \(l\) by
\[
\Gamma_l
=
\sup_{\substack{u_l\in\mathcal U_l\\
\|u_l\|_{L^2(\mathcal D)}\le1}}
-
\mathbb E
\left[
q_l^\top u_l
\right].
\]
A positive value \(\Gamma_l>0\) means that the residual module can realize a strict population-risk descent direction at that location.

Consider an early location \(l\) and a later location \(r>l\). Let
\[
\Phi_{l\rightarrow r}
=
T_{r-1}\circ\cdots\circ T_l
\]
be the downstream mapping from \(z_l\) to \(z_r\), and define
\[
A_{l\rightarrow r}(x)
=
J_{\Phi_{l\rightarrow r}}(z_l(x)).
\]
By the chain rule,
\[
q_l(x,y)
=
A_{l\rightarrow r}(x)^\top q_r(x,y).
\]
The following proposition formalizes when an early residual block is at least as capable of producing a first-order improvement as a later block.

\begin{proposition}[Early-layer dominance under residual-tangent coverage]
\label{prop:early-layer-tangent-coverage}
Let \(l<r\). Suppose that there exists \(\kappa_{l,r}>0\) such that, for every \(u_r\in\mathcal U_r\), there exists \(u_l\in\mathcal U_l\) satisfying
\[
A_{l\rightarrow r}(x)u_l(z_l(x))
=
u_r(z_r(x))
\]
almost surely, and
\[
\|u_l\|_{L^2(\mathcal D)}
\le
\kappa_{l,r}^{-1}
\|u_r\|_{L^2(\mathcal D)}.
\]
Then
\[
\Gamma_l
\ge
\kappa_{l,r}\Gamma_r.
\]
In particular, if the later location admits a strict first-order descent direction, then the earlier location also admits a strict first-order descent direction.
\end{proposition}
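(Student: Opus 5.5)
The plan is a transfer argument that maps each admissible later-layer tangent to an admissible earlier-layer tangent with the same first-order effect. Fix any \(u_r\in\mathcal U_r\) with \(\|u_r\|_{L^2(\mathcal D)}\le 1\). By the coverage hypothesis, choose \(u_l\in\mathcal U_l\) with \(A_{l\rightarrow r}(x)u_l(z_l(x))=u_r(z_r(x))\) almost surely and \(\|u_l\|_{L^2(\mathcal D)}\le\kappa_{l,r}^{-1}\). The chain-rule identity \(q_l=A_{l\rightarrow r}^\top q_r\) then gives, pointwise almost surely,
\[
q_l^\top u_l(z_l)
=
q_r^\top A_{l\rightarrow r}u_l(z_l)
=
q_r^\top u_r(z_r).
\]
Taking expectations yields \(-\mathbb E[q_l^\top u_l]=-\mathbb E[q_r^\top u_r]\). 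To justify the expectation, I will note that \(q_r^\top u_r\) is integrable by Cauchy--Schwarz, using the finite second moment of \(q\) and \(u_r\in L^2\). I will also assume the supremum defining \(\Gamma_r\) is taken over tangents with finite objective.

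The next step is normalization, and it is the only place the cone property is used. If \(u_l\neq0\) in \(L^2\), set \(\widehat u_l:=\kappa_{l,r}u_l\). Since \(\kappa_{l,r}>0\) and \(\mathcal U_l\) is closed under nonnegative scaling, \(\widehat u_l\in\mathcal U_l\), and \(\|\widehat u_l\|_{L^2(\mathcal D)}\le 1\). This makes \(\widehat u_l\) feasible for \(\Gamma_l\), so
\[
\Gamma_l\ \ge\ -\mathbb E[q_l^\top\widehat u_l]=\kappa_{l,r}\bigl(-\mathbb E[q_r^\top u_r]\bigr).
\]
Taking the supremum over feasible \(u_r\) then gives \(\Gamma_l\ge\kappa_{l,r}\Gamma_r\).

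The one obstacle is bookkeeping for degenerate cases.
\begin{itemize}
\item If \(u_l=0\), then \(u_r=0\) almost surely. The objective is then \(0\), and \(\Gamma_l\ge0\) holds anyway because \(0=0\cdot u\in\mathcal U_l\) by scaling any element with \(a=0\), assuming \(\mathcal U_l\) is nonempty.
\item If \(\Gamma_r=+\infty\), the same inequality chain shows \(\Gamma_l=+\infty\).
\end{itemize}

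The final claim follows directly: if \(\Gamma_r>0\), then \(\Gamma_l\ge\kappa_{l,r}\Gamma_r>0\). Hence some \(u_l\in\mathcal U_l\) has \(\mathbb E[q_l^\top u_l]<0\), which is a strict first-order population-risk descent direction at layer \(l\).
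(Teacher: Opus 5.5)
Your proof is correct and follows the paper's argument essentially step for step: you pull back each normalized $u_r$ via the coverage hypothesis, rescale by $\kappa_{l,r}$ using the cone closure of $\mathcal U_l$, match the objectives through $q_l = A_{l\rightarrow r}^\top q_r$, and take the supremum. The case split on $u_l=0$ is harmless but unnecessary, since rescaling by $\kappa_{l,r}>0$ keeps any element of $\mathcal U_l$ feasible regardless; the integrability remark is a small added piece of rigor that the paper leaves implicit.
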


\begin{proof}
Take any \(u_r\in\mathcal U_r\) satisfying
\[
\|u_r\|_{L^2(\mathcal D)}
\le1.
\]
By the residual-tangent coverage assumption, there exists \(u_l\in\mathcal U_l\) such that
\[
A_{l\rightarrow r}u_l=u_r,
\qquad
\|u_l\|_{L^2(\mathcal D)}
\le
\kappa_{l,r}^{-1}.
\]
Define
\[
\widetilde u_l
=
\kappa_{l,r}u_l.
\]
The nonnegative scaling closure ensures that
\(\widetilde u_l\in\mathcal U_l\). Moreover,
\(\|\widetilde u_l\|_{L^2(\mathcal D)}\le1\), and the chain-rule identity gives
\[
-\mathbb E
\left[
q_l^\top\widetilde u_l
\right]
=
-\kappa_{l,r}
\mathbb E
\left[
q_r^\top
A_{l\rightarrow r}u_l
\right].
\]
Since \(A_{l\rightarrow r}u_l=u_r\), it follows that
\[
-\mathbb E
\left[
q_l^\top\widetilde u_l
\right]
=
-\kappa_{l,r}
\mathbb E
\left[
q_r^\top u_r
\right].
\]
Taking the supremum over all admissible \(u_r\) yields
\[
\Gamma_l
\ge
\kappa_{l,r}\Gamma_r.
\]
Therefore, any first-order descent direction realizable at the later location can also be realized at the earlier location, up to the conditioning factor \(\kappa_{l,r}\).
\end{proof}

Proposition~\ref{prop:early-layer-tangent-coverage} provides the main representational reason for selecting an early ResNet block. Early ResNet representations retain a high spatial resolution and a comparatively large number of local degrees of freedom. A standard BasicBlock residual branch can be expressed as
\[
h_{U,V}(z)
=
\operatorname{Conv}_{\mathrm{out}}
\left(
\psi_U(z)
\right),
\]
which is an instance of the abstract realizability form
\[
h_{U,V}(z)
=
V\psi_U(z).
\]
The feature map \(\psi_U(z)\) contains the preceding convolutions, normalization operations, and nonlinearities, while the final convolutional projection plays the role of \(V\). At an early stage, this residual branch acts on a high-resolution feature map and therefore generates a broad family of spatially structured tangent directions.

As the representation passes through later stages, stride-two downsampling and channel transformations compress and reorganize the spatial degrees of freedom. Under the tangent-coverage condition in Proposition~\ref{prop:early-layer-tangent-coverage}, the image of the early residual tangent family under the downstream Jacobian contains the tangent family available at the later location:
\[
\mathcal U_r
\subseteq
A_{l\rightarrow r}\mathcal U_l.
\]
Thus, the early module can reproduce any perturbation available to the later module after downstream propagation, while also retaining additional perturbation directions that may be removed by spatial downsampling or later feature compression. This establishes a precise sense in which the early residual location is representationally at least as expressive for first-order residual expansion.

The finite-sample alignment geometry gives a second justification. At layer \(l\), define
\[
\bar\mu_l
=
\mathbb E[q_l],
\qquad
\Sigma_l
=
\operatorname{Cov}(q_l),
\]
and
\[
d_{\parallel}^{(l)}
=
\frac{\|\bar\mu_l\|_2^4}
{\bar\mu_l^\top\Sigma_l\bar\mu_l},
\qquad
d_2^{(l)}
=
\frac{\|\bar\mu_l\|_2^4}
{\operatorname{tr}(\Sigma_l^2)}.
\]
Suppose that the candidate ResNet locations satisfy the uniform covariance controls
\[
\lambda_{\max}(\Sigma_l)
\le
\Lambda,
\qquad
\operatorname{tr}(\Sigma_l^2)
\le
T,
\]
where \(\Lambda\) and \(T\) do not increase across the compared locations. Then
\[
\bar\mu_l^\top\Sigma_l\bar\mu_l
\le
\Lambda\|\bar\mu_l\|_2^2,
\]
and therefore
\[
d_{\parallel}^{(l)}
\ge
\frac{\|\bar\mu_l\|_2^2}{\Lambda}.
\]
Similarly,
\[
d_2^{(l)}
\ge
\frac{\|\bar\mu_l\|_2^4}{T}.
\]
Consequently,
\[
d_{\mathrm{align}}^{(l)}
\ge
\min
\left\{
\frac{\|\bar\mu_l\|_2^2}{\Lambda},
\frac{\|\bar\mu_l\|_2^4}{T}
\right\}.
\]

The original residual-expansion analysis predicts that the remaining first-order signal becomes weaker as the model approaches the deepest-model regime. If an early candidate location \(l\) and a later location \(r\) satisfy
\[
\|\bar\mu_l\|_2
\ge
\|\bar\mu_r\|_2,
\]
then the preceding lower bound gives
\[
\min
\left\{
\frac{\|\bar\mu_l\|_2^2}{\Lambda},
\frac{\|\bar\mu_l\|_2^4}{T}
\right\}
\ge
\min
\left\{
\frac{\|\bar\mu_r\|_2^2}{\Lambda},
\frac{\|\bar\mu_r\|_2^4}{T}
\right\}.
\]
Thus, under comparable covariance control, the early layer has a no-weaker guaranteed lower certificate for the effective alignment dimension. The corresponding finite-sample misalignment guarantee satisfies
\[
\Pr
\left(
\mu_M^{(l)\top}g_K^{(l)}
\le0
\right)
\le
\frac{1}
{1+n_{\mathrm{eff}}d_{\mathrm{align}}^{(l)}},
\]
so a stronger early-layer signal leads to a tighter guaranteed alignment certificate.

The representational and statistical arguments are complementary. The tangent-coverage result shows that an early residual block can reproduce later first-order perturbations through the remaining network, while the signal-to-noise argument shows that an early location is preferable when the population first-order signal has not yet entered the weak-signal regime. This is consistent with the deepest-model mechanism of the original analysis, according to which the remaining activation-gradient signal becomes progressively weaker as residual depth is exhausted.

For ResNet-20, we therefore measure the activation gradient at the output of the first residual block. This location retains the full spatial resolution of the first residual stage, provides a rich realizable residual tangent family, and leaves the subsequent residual stages available to transform the inserted perturbation into a prediction-level change. The choice also follows the early-block regime used in the original ResNet covariance experiment. It should not be interpreted as an unconditional theorem that the first residual block maximizes \(d_{\mathrm{align}}\) for every trained ResNet. Rather, it is the architecture-specific location favored by residual-tangent coverage, non-exhausted first-order signal, and controlled within-family width comparison.

\finishappendixdocument
\end{document}